\theoremstyle{plain}
\newtheorem{theorem}{Theorem}[section]
\newtheorem{lemma}[theorem]{Lemma}
\theoremstyle{definition}
\newtheorem{definition}[theorem]{Definition}
\newtheorem{assumption}[theorem]{Assumption}
\theoremstyle{remark}
\icmltitlerunning{Differential Coding for Training-Free ANN-to-SNN Conversion}
\begin{document}

\twocolumn[
\icmltitle{Differential Coding for Training-Free ANN-to-SNN Conversion}
% 8 pages for the main paper,
% It is OKAY to include author information, even for blind
% submissions: the style file will automatically remove it for you
% unless you've provided the [accepted] option to the icml2025
% package.

% List of affiliations: The first argument should be a (short)
% identifier you will use later to specify author affiliations
% Academic affiliations should list Department, University, City, Region, Country
% Industry affiliations should list Company, City, Region, Country

% You can specify symbols, otherwise they are numbered in order.
% Ideally, you should not use this facility. Affiliations will be numbered
% in order of appearance and this is the preferred way.
% \icmlsetsymbol{equal}{*}

\begin{icmlauthorlist}
\icmlauthor{Zihan Huang}{pku1}
\icmlauthor{Wei Fang\textsuperscript{\Letter}}{pku3}
\icmlauthor{Tong Bu}{pku1}
\icmlauthor{Peng Xue}{xuepeng1,xuepeng2}
\icmlauthor{Zecheng Hao}{pku1}
\icmlauthor{Wenxuan Liu}{pku1}
\icmlauthor{Yuanhong Tang\textsuperscript{\Letter}}{pku1}
\icmlauthor{Zhaofei Yu}{pku1,pku2}
\icmlauthor{Tiejun Huang}{pku1}
%\icmlauthor{}{sch}
%\icmlauthor{}{sch}
\end{icmlauthorlist}

\icmlaffiliation{pku1}{School of Computer Science, Peking University, Beijing, China}
\icmlaffiliation{pku2}{Institute for Artificial Intelligence, Peking University, Beijing, China}
\icmlaffiliation{pku3}{School of Electronic and Computer Engineering, Shenzhen Graduate School, Peking University, Shenzhen, China}
\icmlaffiliation{xuepeng1}{Peng Cheng Laboratory, Shenzhen, China}
\icmlaffiliation{xuepeng2}{Shenzhen Institute of Advanced Technology, Chinese Academy of Sciences, Shenzhen, China}
% \icmlaffiliation{xxx4}{Company Name, Location, Country}
% \icmlaffiliation{sch}{School of ZZZ, Institute of WWW, Location, Country}

\icmlcorrespondingauthor{Wei Fang}{fwei@pku.edu.cn}
\icmlcorrespondingauthor{Yuanhong Tang}{ydtang@pku.edu.cn}

% You may provide any keywords that you
% find helpful for describing your paper; these are used to populate
% the "keywords" metadata in the PDF but will not be shown in the document
\icmlkeywords{Machine Learning, ICML}

\vskip 0.3in
]

% this must go after the closing bracket ] following \twocolumn[ ...

% This command actually creates the footnote in the first column
% listing the affiliations and the copyright notice.
% The command takes one argument, which is text to display at the start of the footnote.
% The \icmlEqualContribution command is standard text for equal contribution.
% Remove it (just {}) if you do not need this facility.

\printAffiliationsAndNotice{}  % leave blank if no need to mention equal contribution
% \printAffiliationsAndNotice{\icmlEqualContribution} % otherwise use the standard text.

\begin{abstract}
Spiking Neural Networks (SNNs) exhibit significant potential due to their low energy consumption. Converting Artificial Neural Networks (ANNs) to SNNs is an efficient way to achieve high-performance SNNs. However, many conversion methods are based on rate coding, which requires numerous spikes and longer time-steps compared to directly trained SNNs, leading to increased energy consumption and latency. This article introduces differential coding for ANN-to-SNN conversion, a novel coding scheme that reduces spike counts and energy consumption by transmitting changes in rate information rather than rates directly, and explores its application across various layers. Additionally, the threshold iteration method is proposed to optimize thresholds based on activation distribution when converting Rectified Linear Units (ReLUs) to spiking neurons. Experimental results on various Convolutional Neural Networks (CNNs) and Transformers demonstrate that the proposed differential coding significantly improves accuracy while reducing energy consumption, particularly when combined with the threshold iteration method, achieving state-of-the-art performance. The source codes of the proposed method are available at \url{https://github.com/h-z-h-cell/ANN-to-SNN-DCGS}.

\end{abstract}
% 关键词：Spiking Neural Networks, ANN-to-SNN Covnersion, Differential Coding, Threshold Iteration.
\section{Introduction}
Spiking Neural Networks (SNNs) are sometimes regarded as the third generation of neural network models \cite{maass1997networks} for their unique neural dynamics and high biological plausibility \cite{gerstner2014neuronal}, making them a competitive candidate to Artificial Neural Networks (ANNs) \cite{li2024bic}. 
A significant difference between ANNs and SNNs is the information representation. ANNs transmit dense floating values between layers. While in SNNs, communications between layers are based on sparse and binary spikes, which are triggered by the membrane potentials of spiking neurons across the threshold, bringing event-driven computations and extremely low power consumption on neuromorphic chips \cite{merolla2014million, davies2018loihi, debole2019truenorth, pei2019towards}. 

However, the discrete and non-differentiable spike firing process causes huge learning challenges in SNNs.
Recently, this issue is solved partly by the surrogate learning method \cite{neftci2019surrogate}, which redefines the gradient of spike firing process by a smooth and differentiable surrogate function. Enabled by the surrogate gradients, deep SNNs can be trained by powerful backward propagation and gradient descent methods, and their performance are greatly improved \cite{fang2021deep,duan2022temporal,shi2024spikingresformer}. The applications of SNNs are also extended to complex event-based vision tasks \cite{9892618, liu2024line, chen2024spikereveal, liu2025stereo}.
Unfortunately, the surrogate gradient method is a coarse approximation, and may mislead the gradient descent direction in multi-layer SNNs \cite{gygax2024elucidatingtheoreticalunderpinningssurrogate}.
The time dimension of SNNs leads to the employment of backpropagation through time (BPTT), which requires nearly $T$ times of training resources than ANNs. Here $T$ is the sequence-length, and is also the number of time-steps of SNNs. 
%这块有点没大理解要怎么改
Although some online learning methods \cite{xiao2022ottt, bohnstingl2023ostl, meng2023sltt, zhu2024osr} can estimate the full gradients of BPTT by accumulation of single-step gradients, their task accuracy is sub-optimal.

In addition to the surrogate gradient methods, the ANN to SNN conversion methods \cite{cao2015spiking, han2020rmp, li2021free, deng2021optimal,bu2022optimized,bu2024training} are another spiking deep learning methodology that eliminates training challenges of SNNs. They convert pre-trained ANNs to SNNs with replacing nonlinear activation functions by spiking neurons. The converted SNNs enjoy high performance and close accuracy to the source ANNs, even in the complex ImageNet dataset. Most of the conversion methods are based on the rate coding, which represents activations in ANNs by the firing rates of SNNs. However, the precise estimation of firing rates requires a large number of time-steps, resulting in the obviously higher latency and energy consumption of the conversion methods than the surrogate gradient methods.

In this article, we propose the differential coding and its implementation scheme for different layers in ANN-to-SNN conversion. Instead of considering the average firing rate as the encoded activation value, differential coding treats time-weighted spikes as corrections to the encoded activation value. This approach not only improves network accuracy but also allows neurons to stop firing once a certain approximation precision is achieved, thereby reducing energy consumption without any extra training. 
Additionally, by minimizing the expected error between the Rectified Linear Units (ReLUs) and the encoded values in SNNs, we propose the threshold iteration method to determine the optimal thresholds for the spiking neurons for converting ReLUs, further enhancing the performance of the SNN. 

Our main contributions are summarized as follows:
\begin{itemize}
    \item We propose differential coding for ANN-to-SNN conversion and establish the dynamics for various modules in SNNs.
    \item We design a threshold iteration method to determine the optimal thresholds of spiking neurons for converting ReLUs.
    \item We designed two equivalent implementations of the employed multi-threshold (MT) neuron to facilitate hardware-friendly execution.
    \item By converting different CNNs and Transformers into SNNs for evaluation, our extensive experiments demonstrate that the proposed method achieves state-of-the-art accuracy while significantly reducing network energy consumption.
\end{itemize}

\section{Related Works}
\subsection{Rated-based ANN to SNN Conversion}
The rate coding method has been early found in biological neural systems \cite{adrian1926impulses} that stronger stimulation causes more frequent spikes. This straightforward coding method builds the bridge between activations of ReLUs in ANNs and firing rates of Integrate-and-Fire (IF) neurons in SNNs, based on which the primary ANN to SNN conversion method \cite{cao2015spiking} was derived.
As the firing rate is defined by the average number of spikes over all time-steps, its range is restricted between zero and one. For the negative part, the IF neurons perfectly fit ReLUs. While the outputs of ReLUs are unbound, the normalization of weights for regulating activations \cite{Rueckauer2017conversion} and the balancing of thresholds for spiking neurons \cite{han2020rmp} are proposed and relieve the range of mismatch during conversions.

The time is discretized to time-steps in SNNs. Consequently, the firing rates are also rounded with the fixed interval. While the floating activations in source ANNs are continuous, the discrete firing rates can not fit them preciously, causing the quantization errors.
To future reduce the conversion errors, some quantized ANN to SNN methods are proposed \cite{bu2022optimal, hu2023fast}. These methods quantize and clip activations of ANNs, reliving the quantization errors and range mismatch at the same time. However, the source ANNs must be re-trained, which increases conversion costs, and their performance declines due to the change in the activation function.

The spikes may not arrive evenly during inference, which may cause the unevenness error in conversion \cite{bu2022optimal}. A typical case is that no spike during the first $\frac{T}{2}$ time-steps, and more than $\frac{T}{2}$ spikes from different synapses arrive at the neuron during the last $\frac{T}{2}$ time-steps. Although the input firing rate is larger than 0.5, the neuron can not generate the 0.5 output firing rate because it has not enough time-steps to fire. Several methods have been proposed to reduce this error, including the two-stage inference strategy \cite{hao2023reducing} and shifting the initial membrane potential \cite{hao2023bridging}.

Recent research has been extended to convert ANNs with activations beyond ReLUs to SNNs. \cite{oh2024sign} introduced a sign gradient descent based neuron that can approximate various nonlinear activation functions. \cite{wang2023masked} and \cite{you2024spikeziptf} trained modified Transformers and converted them into spiking Transformers. Meanwhile, \cite{jiang2024spatiotemporal} and \cite{huang2024towards} developed modules to approximate nonlinear layers, enabling a training-free conversion of Transformers to SNNs.

% todo: recent research about network structure

\subsection{Temporal Coding Conversions}
The rate coding method is inefficient and causes huge latency in rate-based ANN to SNN conversion methods. The surrogate gradient methods avoid this issue by the end-to-end training, while the interpretation of coding methods in these SNNs is not clear yet \cite{li2023uncovering}.
For the conversion method, manual design for the coding strategy is indispensable.
Beyond the rate coding method, several temporal coding methods are explored.

The time-to-first-spike (TTFS) coding method \cite{rueckauer2018conversion, zhang2019tdsnn, stanojevic2023exact} encodes the value into the firing time of spikes. Each neuron only fires one spike in TTFS SNNs, which brings extremely high power efficiency. However, these methods rely on layer-by-layer processing, i.e., one layer can only start to compute after it receives all input spikes from the last layer. Thus, these TTFS SNNs suffer from high latency increasing with the network depth.

The phase coding methods \cite{kim2018deep, wang2022efficient} are similar to binary/decimal conversion. They encode values from spikes with the power-of-2 weights. For a given number of time-steps $T$, these methods can represent $2^{T}$ different values, while the rate coding method can only represent $T$ values. Their drawbacks are similar to the latency problem in TTFS SNNs that the values can only be obtained after all weighted spikes in a phase arrive.

The burst coding methods \cite{park2019fast, li2022efficient, wang2025adaptive} imitates the bursts of spikes during a short period in biological neural systems. The burst spikes are implemented by the multiplication of spikes and a coefficient, which carry more information than binary spikes, but may lose the advantages of SNNs based on the binary characteristic.

\section{Preliminaries}

\subsection{Multi-Threshold Neuron}\label{sec:mtn}
Many previous works have proposed using ternary-valued neurons to simulate negative values and reduce conversion errors \cite{li2022quantization,wang2022signed,you2024spikeziptf}. The ternary representation, with outputs of -1, 0, and 1, does not disrupt the event-driven nature and significantly enhances the expressive capability. Furthermore, \cite{huang2024towards} introduced the use of multi-channel methods to implement multi-threshold (MT) neuron for spike communication. In this article, we similarly adopt this approach to simulate $y=x$ using identity spiking MT neurons.

% MT neuron extends the base threshold by introducing additional thresholds, enabling the neuron to process more information within a single time step. 

The MT neuron is characterized by several parameters, including the base threshold $\theta$, and a total of $2n$ thresholds, with $n$ positive and $n$ negative thresholds. The threshold values of the MT neuron are indexed by $i$, where $\lambda^l_i$ represents the $i$-th threshold value in the layer $l$:
\begin{align}
    \begin{aligned}
    &\lambda^l_1=\theta^l, \lambda^l_2=\frac{\theta^l}2, ..., \lambda^l_n = \frac{\theta^l}{2^{n-1}}, \\&\lambda^l_{n+1}=-\theta^l, \lambda^l_{n+2}=-\frac{\theta^l}{2}, ..., \lambda^l_{2n} = -\frac{\theta^l}{2^{n-1}}.
    \end{aligned}
\end{align}
Let variables $\bm{I}^l[t]$, $\bm{W}^l$, $\bm{s_i}^l[t]$, $\bm{x}^l[t]$, $\bm{m}^l[t]$, and $\bm{v}^l[t]$ represent the input current, weight, the output spike of the $i$-th threshold, the total output signal, and the membrane potential before and after spikes in the $l$-th layer at the time-step $t$.
The dynamics of the MT neurons are described by the following equations:
\begin{align}
&\bm{m}^l[t]=\bm{v}^l[t-1]+\bm{I}^l[t]=\bm{v}^l[t-1]+\bm{x}^{l-1}[t],\label{mth1}
\\&\bm{s}^l_i[t]=\text{MTH}_{\theta,n}(\bm{m}^l[t],i)\label{mth2}
\\&\bm{x}^{l}[t]=\sum_{i}\bm{s}^{l}_i[t]\bm{W}^l\lambda^l_i,\label{mth3}
\\&\bm{v}^l[t]=\bm{m}^l[t]-\bm{x}^l[t],\label{mth4}
% \end{align}
% The function $MTH_{\theta, n}(x)$ is defined as a piecewise function:
% \begin{align}
\\&\scalebox{0.95}{$\text{MTH}_{\theta,n}(\bm{m}^l[t],i) = \left\{
        \begin{array}{r@{}l@{}l@{}l}
            &0,\ &\text{if}& \lambda_{2n}<\bm{x}<\lambda_{n}\\
            &1,\ &\text{elif}&\ i =\arg\min_p \left| \bm{x} - \lambda_p \right|\\
            &0,\ &\text{else}&
        \end{array}\right..$}\label{mth5}
\end{align}
\begin{figure}[t]
\vskip 0.0in
\begin{center}
\centerline{\includegraphics[width=\columnwidth, trim=0.5cm 0.3cm 0.5cm 0.3cm, clip]{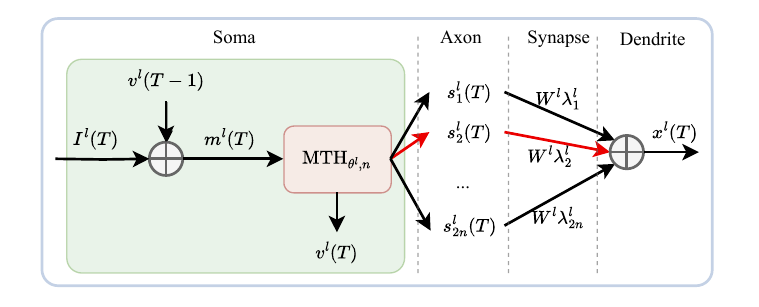}}
\caption{Diagram of the MT neuron. The MT neuron receives input from last module and emits up to one spike at each time-step.}
\label{pic:mth}
\end{center}
\vskip -0.2in
\end{figure}

\cref{pic:mth} shows the dynamics of MT neurons, when $n=1$, this model reduces to an IF neuron with an additional negative threshold. 
Since only up to one threshold can emit spike per time-step, and $\lambda^l$ can be derived by bit-shifting $\theta$, we can implement MT neurons by calculating $\bm{W}^l\lambda^l_i$ through the weight $\bm{W}^l\theta^l$ followed by bit-shifting.

\subsection{Rate Coding in ANN-to-SNN Conversion}
Traditional ANN-to-SNN conversion methods employ rate coding, which can be mathematically expressed as:
\begin{align}\label{rate_coding_1}
\bm{r}^l[t] =\frac{1}{t}\sum_{i=1}^t \bm{x}^l[i],
\end{align}
where $ \bm{x}^l[i]$ represents the encoded output of layer $l$ in SNNs at time-step $i$, while $\bm{r}^l[t]$ denotes the encoded activation that aims to map activation value $\bm{\alpha}^l$ from the corresponding ANN layer.
Derived from Equation \eqref{mth1} and \eqref{mth4}, we have
\begin{align}
&\bm{v}^l[t]-\bm{v}^l[t-1]= \bm{x}^{l-1}[t]- \bm{x}^l[t],
\\&\bm{v}^l[t]-\bm{v}^l[0]=\sum_{i=1}^t \bm{x}^{l-1}[t]-\sum_{i=1}^t \bm{x}^l[t],
\\&
\begin{aligned}
\bm{r}^l[t]&=\frac{1}{t}\sum_{i=1}^t \bm{x}^l[t]=\frac{1}{t}\sum_{i=1}^t \bm{x}^{l-1}[t]-\frac{\bm{v}^l[t]-\bm{v}^l[0]}{t}
\\&=\bm{r}^{l-1}[t]-\frac{\bm{v}^l[t]-\bm{v}^l[0]}{t}.
\end{aligned}
\end{align}
Assuming $\bm{r}^{l-1}[t]=\bm{\alpha}^{l-1}$, and given that $\bm{\alpha}^{l}=\bm{\alpha}^{l-1}$ when simulating $y=x$, when $t$ is sufficient large or $\bm{v}^l[t]$ close to $\bm{v}^l[0]$, the encode value $\bm{r}^l[t]$ in SNNs can approximate the activation value $\bm{\alpha}^l$ in ANNs: 
\begin{align}
\begin{aligned}
&\bm{r}^l[t]=\bm{r}^{l-1}[t]-\frac{\bm{v}^l[t]-\bm{v}^l[0]}{t}
\\&\approx\bm{r}^{l-1}[t]=\bm{\alpha}^{l-1}=\bm{\alpha}^l.
\end{aligned}
\end{align}

\section{Method}
In this section, we propose differential coding with graded units and spiking neurons (DCGS), a training-free theory for converting ANNs to SNNs. We begin by introducing a differential coding approach, from which we develop differential graded units, differential spiking neurons and differential coding for linear Layer. These enable the conversion of various network modules. Additionally, we provide the threshold iteration method to find the optimal threshold of spiking neurons for converting ReLUs. The overall algorithm can be found in Appendix \ref{overall_algorithnm}. Furthermore, we design two equivalent implementations of the MT neuron to support hardware-friendly execution.

\subsection{Differential Coding in ANN-to-SNN Conversion}
The traditional ANN-to-SNN conversion uses rate coding to transmit information, where the firing rate $\bm{r}^l[t]$ at each time-step encodes the activation value. Equation \eqref{rate_coding} shows the relationship between the output firing rate $\bm{r}^l[t]$ and the output signal $ \bm{x}^l[t]$. 
When layer $l$ consists of spiking neurons, the state can be described as $ \bm{x}^l[t]=\bm{\theta}^l\bm{s}^l[t]$, where $ \bm{s}^l[t]$ denotes the spike at time-step $t$ and $\bm{\theta}^l$ represents the threshold.
\begin{align}\label{rate_coding}
\bm{r}^l[t] =\frac{1}{t}\sum_{i=1}^t \bm{x}^l[t]= \frac{t-1}{t} \bm{r}^l[t-1] + \frac{ \bm{x}^l[t]}{t}.
\end{align}
When the neuron does not emit a spike, the rate update is the proportion $-\frac{1}{t} \bm{r}^l[t-1]$ of the previous rate, while when the neuron emits a spike, the rate update increases by $-\frac{1}{t} \bm{r}^l[t-1] + \frac{ \bm{x}^l[t]}{t}$.

However, the rate coding method has a problem: over time, the encoded value gradually decays. As the time-step $t$ increases, the influence of earlier inputs $\frac{1}{t}$ becomes smaller, and the system requires more spikes to compensate for this decay effect, thus increasing the number of spikes required.
% the influence of earlier inputs becomes smaller，这句我想的是t不固定，所以1/t在一直减小

To address this issue, we propose a novel encoding scheme, referred to as differential coding.
\begin{definition}\label{def:diff_coding}
In differential coding, denote \( \bm{x}^l[t]\) as the actual output of the neuron. Define \(\bm{e}^{l}[t]\) as the encoded output value at time-step \(t\), the encoded activation value \(\bm{r}^l[t]\) as the average of \(\bm{e}^{l}[t]\) from time $1$ to time-step $t$. The relationship between the two is expressed by Equations \eqref{diff_coding1} and \eqref{diff_coding2}, as follows:
\begin{align}
&\bm{e}^{l}[t] = \bm{r}^l[t-1] +  \bm{x}^l[t], \label{diff_coding1} \\
&\bm{r}^l[t]= \bm{r}^{l}[t-1] + \frac{ \bm{x}^l[t]}{t}=\frac{1}{t}\sum_{i=1}^{t}\bm{e}^{l}[i], \label{diff_coding2}
\end{align}
where \( t \) starts from $1$, $\bm{r}^{l}[0]=0$.
\end{definition}
%这里是在讲一个编码方式，不涉及神经元动力学。相当于说明一系列脉冲输出x是如何映射到实际编码值的输出r的

The detailed explanation of Definition \ref{def:diff_coding} is provided in the Appendix \ref{proof_def:diff_coding}.
Comparing Equation \eqref{rate_coding_1} with \eqref{diff_coding2}, the key difference is that differential coding only updates the encoded activation value when an output spike occurs, rather than decay at each time-step in rate coding. 
\cref{pic:diff_coding} shows the ideal fitting results of rate coding and differential coding for Input $y=x$ with $T=3$ and thresholds $\pm1$. Differential coding can represent a wider range of values and achieve higher precision than rate coding, given the same threshold and time-steps.

\begin{figure}[ht]
\vskip 0.0in
\begin{center}
\centerline{\includegraphics[width=0.9\columnwidth, trim=0.0cm 0.1cm 0.0cm 0.0cm, clip]{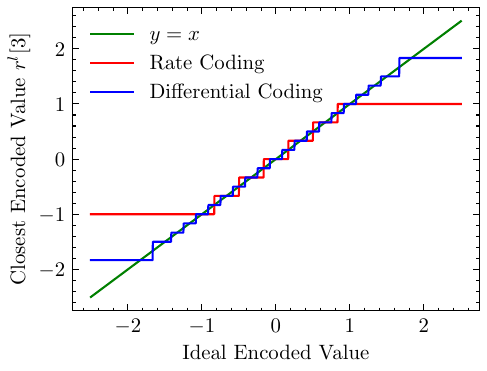}}
\caption{Comparison of ideal fitting results: rate coding vs. differential coding for input $y=x$ with $T=3$ and thresholds $\pm1$. Differential coding shows a wider representation range and higher precision.}
\label{pic:diff_coding}
\end{center}
\vskip -0.2in
\end{figure}

\subsubsection{Differential Graded Units}\label{Differential Graded Units}
% In SNNs, the energy consumption of the network is significantly reduced due to sparse coding and event-driven spike transmission mechanisms. However, from a biological perspective, neuronal activity is not entirely limited to simple binary spike firing. 
% Early studies by \cite{hubel1959receptive,hubel1962receptive} on the visual cortex highlighted the continuous nature of neuronal responses, paving the way for the concept of graded encoding. Graded Units, which exhibit continuous levels of activity instead of discrete spikes, offer a more flexible way to encode information.

Existing ANN-to-SNN conversion methods struggle with nonlinear functions such as Gaussian Error Linear Units (GeLU) \cite{hendrycks2023gaussianerrorlinearunits} and LayerNorm \cite{ba2016layer}. For these nonlinear layers, we utilize specific neuron dynamic units to implement them. Based on the expectation compensation idea from \cite{huang2024towards}, we propose introducing differential graded units to replace those nonlinear modules that cannot be directly converted.

Derived from differential coding scheme in Definition \ref{def:diff_coding}, this article proposes two types of differential graded units. Theorem \ref{thm:diff_graded1} corresponds to nonlinear layers with only one input $x^{l-1}[t]$, and Theorem \ref{thm:diff_graded2} applies to certain operations $\cdot$ with two inputs $x^{l-1}_A[t]$ and $x^{l-1}_B[t]$.
% single-input指的是输入只有x一个，对应的是二输入是指矩阵乘法层或者逐元素乘法层
\begin{theorem}\label{thm:diff_graded1}
Let \( F^l \) be a nonlinear layer \( l \) with only one input $\bm{x}^{l-1}[t]$, such as Gelu, Silu, Maxpool, LayerNorm, or Softmax. In ANN-to-SNN conversion, the mapping from \( F \) to dynamics of the differential graded unit in differential coding is given by Equations \eqref{dgs1} and \eqref{dgs2}.
\begin{align}
&\bm{m}^{l}[t]=\bm{r}^{l-1}[t]= \bm{m}^{l}[t-1]+\frac{ \bm{x}^{l-1}[t]}{t},\label{dgs1}
\\& \bm{x}^l[t]=t*(F^l(\bm{m}^{l}[t])-F^l(\bm{m}^{l}[t-1])),\label{dgs2}
\end{align}
where \( \bm{m}^l[t] \) is the membrane potential at time-step $t$ which is equal to $\bm{b}^{l-1}$ if the previous layer has bias else $0$, \( \bm{r}^l[t] \) is the encoded output activation value of the previous \( t \) time-steps. The output of layer \( l \) at time-step \( t \), which serves as the input to layer \( l+1 \), is given by \(  \bm{x}^l[t] \).
\end{theorem}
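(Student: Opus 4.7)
The plan is to verify the theorem in two stages, first for the membrane recursion \eqref{dgs1} and then for the output map \eqref{dgs2}, using nothing deeper than an induction and a telescoping sum.

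For \eqref{dgs1}, I would apply Definition \ref{def:diff_coding} to layer $l-1$, which gives $\bm{r}^{l-1}[t] = \bm{r}^{l-1}[t-1] + \bm{x}^{l-1}[t]/t$. This is structurally identical to the assumed update rule for $\bm{m}^l[t]$. A one-line induction on $t$, using the stated initialization $\bm{m}^l[0] = \bm{b}^{l-1}$ (or $0$ when the preceding layer has no bias) so that the bias of the previous linear layer is correctly absorbed into the initial membrane, yields $\bm{m}^l[t] = \bm{r}^{l-1}[t]$ for all $t \geq 0$.

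For \eqref{dgs2}, I would substitute $\bm{x}^l[t] = t\bigl(F^l(\bm{m}^l[t]) - F^l(\bm{m}^l[t-1])\bigr)$ into the differential coding recursion $\bm{r}^l[t] = \bm{r}^l[t-1] + \bm{x}^l[t]/t$. The explicit factor of $t$ cancels, and unrolling produces a telescoping sum:
\begin{equation}
\bm{r}^l[t] = \sum_{i=1}^{t}\bigl(F^l(\bm{m}^l[i]) - F^l(\bm{m}^l[i-1])\bigr) = F^l(\bm{m}^l[t]) - F^l(\bm{m}^l[0]).
\end{equation}
Combined with the first stage this rewrites as $\bm{r}^l[t] = F^l(\bm{r}^{l-1}[t]) - F^l(\bm{m}^l[0])$, so whenever $F^l(\bm{m}^l[0]) = 0$ the encoded output is exactly $F^l$ applied to the encoded input, as desired. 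Convergence $\bm{r}^l[t] \to \bm{\alpha}^l = F^l(\bm{\alpha}^{l-1})$ then follows from continuity of $F^l$ together with the inductive hypothesis $\bm{r}^{l-1}[t] \to \bm{\alpha}^{l-1}$ at the preceding layer.

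The main obstacle is handling the residual $F^l(\bm{m}^l[0])$ for nonlinearities such as GeLU, LayerNorm, and Softmax, which do not vanish at zero, and which are shifted further from any root of $F^l$ when the preceding bias is nonzero. I would address this at the pipeline level by absorbing the constant offset into the bias of the next linear layer during conversion, leaving the graded-unit dynamics themselves as stated. A complementary remark worth recording is that the telescoping identity above is exact at every time-step, not merely in the limit, so converting these nonlinear layers introduces no additional discretization error beyond whatever is already present in $\bm{r}^{l-1}[t]$.
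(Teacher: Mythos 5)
Your proof is correct and follows essentially the same line as the paper's, just in the opposite direction. The paper \emph{derives} Equation \eqref{dgs2} by starting from the correctness requirement $\bm{r}^l[t]=F^l(\bm{r}^{l-1}[t])$, rearranging the differential-coding recursion to $\bm{x}^l[t]=t(\bm{r}^l[t]-\bm{r}^l[t-1])$, and substituting; you instead assume the stated formula for $\bm{x}^l[t]$ and \emph{verify} what encoded value $\bm{r}^l[t]$ it produces by telescoping. Mathematically these are the same identity read forwards and backwards, so there is no genuine divergence of method.

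What you add, and what is worth preserving, is the explicit accounting for the constant term $F^l(\bm{m}^l[0])$. Your telescoping sum gives $\bm{r}^l[t]=F^l(\bm{m}^l[t])-F^l(\bm{m}^l[0])$ once $\bm{r}^l[0]=0$ is used, whereas the paper's chain $\bm{x}^l[t]=t\bigl(F^l(\bm{r}^{l-1}[t])-F^l(\bm{r}^{l-1}[t-1])\bigr)$ implicitly sets $\bm{r}^l[0]=F^l(\bm{r}^{l-1}[0])=F^l(\bm{m}^l[0])$, which for GeLU, LayerNorm, Softmax, etc. is not zero and in general conflicts with the $\bm{r}^l[0]=0$ convention of Definition \ref{def:diff_coding}. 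The paper resolves this only at the level of the surrounding pipeline (by choosing initial potentials for downstream units, cf.\ the $\bm{r}^l[0]=\bm{b}^l$ trick in the proof of Theorem \ref{thm:linear}), without spelling it out in the proof of this theorem; your suggestion to absorb $F^l(\bm{m}^l[0])$ into the next layer's initialization is exactly the right fix and makes the bookkeeping explicit. Your closing observation that the telescoping identity holds exactly at every $t$ (so the graded unit introduces no discretization error beyond what is already in $\bm{r}^{l-1}[t]$) is also correct and is a sharper statement than the paper makes.
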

The proof of Theorem \ref{thm:diff_graded1} is detailed in the Appendix \ref{proof_thm:diff_graded1}.
From Theorem \ref{thm:diff_graded1}, a single-input unit requires two variables: one to record $\bm{m}^{l}[t]$ and another to record $F(\bm{m}^{l}[t])$, in order to reduce redundant calculations at each time-step.

\begin{theorem}\label{thm:diff_graded2}
Let \( \cdot \) be an operation with two inputs, such as matrix multiplication or element-wise multiplication. In ANN-to-SNN conversion, the mapping from operation \( \cdot \) to dynamics of the differential graded units in differential coding is given by Equations \eqref{dgs3} to \eqref{dgs5}.
\begin{align}
&\bm{m}_A^{l}[t]=\bm{r}^{l-1}_A[t]= \bm{m}_A^{l}[t-1]+\frac{ \bm{x}_A^{l-1}[t]}{t},\label{dgs3}
\\&\bm{m}_B^{l}[t]=\bm{r}^{l-1}_B[t]= \bm{m}_B^{l}[t-1]+\frac{ \bm{x}_B^{l-1}[t]}{t},\label{dgs4}
\\&\scalebox{0.9}{$
\begin{aligned}\label{dgs5}
 \bm{x}^l[t]=\frac{ \bm{x}_A^{l-1}[t]\cdot  \bm{x}_B^{l-1}[t]}{t}+ \bm{x}_A^{l-1}[t]\cdot \bm{m}_B^{l}[t]+\bm{m}_A^{l}[t]\cdot  \bm{x}_B^{l-1}[t],
\end{aligned}$}
\end{align}
where \( \bm{m}_A^l[t] \) and \( \bm{m}_B^l[t] \) are membrane potential at time-step \( t \), and \( \bm{r}_A^{l-1}[t] \) and \( \bm{r}_B^{l-1}[t] \) are the encoded activation values of the previous layers at time-step \( t \). The output of layer \( l \) at time-step \( t \), which serves as the input to layer \( l+1 \), is given by \(  \bm{x}^l[t] \).
\end{theorem}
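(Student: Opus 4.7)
The plan is to prove by induction on $t$ that the dynamics \eqref{dgs3}--\eqref{dgs5} yield an encoded output $r^l[t] = m_A^l[t] \cdot m_B^l[t] = r_A^{l-1}[t] \cdot r_B^{l-1}[t]$, which in the limit $t\to\infty$ recovers the ANN target $\alpha^{l-1}_A \cdot \alpha^{l-1}_B$ by continuity of the product. Because differential coding (Equation \eqref{diff_coding2}) already dictates $r^l[t] = r^l[t-1] + x^l[t]/t$, it suffices to verify that the prescribed $x^l[t]$ in \eqref{dgs5}, divided by $t$, equals the one-step product increment $m_A^l[t] m_B^l[t] - m_A^l[t-1] m_B^l[t-1]$. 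The base case is $r^l[0] = 0 = m_A^l[0] m_B^l[0]$, with any previous-layer biases absorbed into the initialization of $m_A^l[0]$ and $m_B^l[0]$ exactly as in Theorem \ref{thm:diff_graded1}.

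The core algebraic step is a product-rule expansion. Letting $\Delta A := x_A^{l-1}[t]/t$ and $\Delta B := x_B^{l-1}[t]/t$ so that $m_A^l[t] = m_A^l[t-1] + \Delta A$ and $m_B^l[t] = m_B^l[t-1] + \Delta B$ by \eqref{dgs3}--\eqref{dgs4}, the difference of products is $m_A^l[t-1]\Delta B + \Delta A\, m_B^l[t-1] + \Delta A\,\Delta B$. Rewriting the pre-update membranes via $m_A^l[t-1] = m_A^l[t] - \Delta A$ and $m_B^l[t-1] = m_B^l[t] - \Delta B$ and multiplying through by $t$ rearranges these three terms into the form of \eqref{dgs5}: the summand $m_A^l[t] \cdot x_B^{l-1}[t]$, the summand $x_A^{l-1}[t]\cdot m_B^l[t]$, and the bilinear correction $x_A^{l-1}[t]\cdot x_B^{l-1}[t]/t$ coming from $t\Delta A\Delta B$ together with the post-update substitutions. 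The closing step of the induction then substitutes the verified identity back into the differential-coding recursion.

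The main obstacle, absent in the single-input case of Theorem \ref{thm:diff_graded1}, is precisely the second-order cross-term $\Delta A\Delta B$. Because the operation is bilinear rather than a pointwise nonlinearity, a naive first-order update modelled on \eqref{dgs2} would miss an $O(1/t)$ correction that otherwise accumulates over time and prevents $r^l[t]$ from tracking the product exactly; the proof hinges on identifying and compensating for this residual with the $x_A^{l-1}[t]\,x_B^{l-1}[t]/t$ summand in \eqref{dgs5}. A secondary subtlety is bookkeeping for whether the cross terms are written in pre- or post-update form, since both conventions produce structurally similar expressions but with different sign adjustments on the bilinear residual, so care is needed to match \eqref{dgs5} as stated. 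Once the pointwise identity is established, the extension from element-wise multiplication to matrix multiplication is immediate, since the derivation applies entrywise and the linear summation over shared indices commutes with the differential-coding recursion.
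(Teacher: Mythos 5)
Your approach is essentially the paper's: write $\bm{x}^l[t] = t\bigl(\bm{r}^l[t]-\bm{r}^l[t-1]\bigr) = t\bigl(\bm{m}_A^l[t]\cdot\bm{m}_B^l[t] - \bm{m}_A^l[t-1]\cdot\bm{m}_B^l[t-1]\bigr)$, expand the product of the updated membranes, and collect terms. The paper's proof is nothing more than this algebraic expansion; the inductive and limiting framing you put around it is harmless scaffolding that the paper does not bother with.

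However, there is one concrete algebraic slip in your second paragraph, and it concerns exactly the ``pre- versus post-update'' subtlety you flag at the end. Starting from the difference of products
\begin{align*}
\bm{m}_A^l[t]\cdot\bm{m}_B^l[t] - \bm{m}_A^l[t-1]\cdot\bm{m}_B^l[t-1]
= \bm{m}_A^l[t-1]\cdot\Delta B + \Delta A\cdot\bm{m}_B^l[t-1] + \Delta A\cdot\Delta B,
\end{align*}
multiplying by $t$ directly gives the \emph{pre-update} form
\begin{align*}
\bm{x}^l[t]
= \frac{\bm{x}_A^{l-1}[t]\cdot\bm{x}_B^{l-1}[t]}{t}
+ \bm{x}_A^{l-1}[t]\cdot\bm{m}_B^l[t-1]
+ \bm{m}_A^l[t-1]\cdot\bm{x}_B^{l-1}[t],
\end{align*}
which is precisely what the paper's appendix proof arrives at. If you instead substitute $\bm{m}_A^l[t-1]=\bm{m}_A^l[t]-\Delta A$ and $\bm{m}_B^l[t-1]=\bm{m}_B^l[t]-\Delta B$ and regroup, the cross term picks up a $\emph{minus}$ sign:
\begin{align*}
\bm{x}^l[t]
= -\frac{\bm{x}_A^{l-1}[t]\cdot\bm{x}_B^{l-1}[t]}{t}
+ \bm{x}_A^{l-1}[t]\cdot\bm{m}_B^l[t]
+ \bm{m}_A^l[t]\cdot\bm{x}_B^{l-1}[t].
\end{align*}
So your claim that the post-update substitution ``rearranges these three terms into the form of \eqref{dgs5}'' is not quite right: Equation \eqref{dgs5} as printed pairs the post-update membranes $\bm{m}_A^l[t],\bm{m}_B^l[t]$ with a $+$ sign on the bilinear residual, and that combination is algebraically inconsistent with \eqref{dgs3}--\eqref{dgs4}. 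The statement in the paper should read either $\bm{m}_A^l[t-1],\bm{m}_B^l[t-1]$ with $+\bm{x}_A^{l-1}[t]\cdot\bm{x}_B^{l-1}[t]/t$, as in the appendix proof, or $\bm{m}_A^l[t],\bm{m}_B^l[t]$ with $-\bm{x}_A^{l-1}[t]\cdot\bm{x}_B^{l-1}[t]/t$. You correctly sensed the sign-convention trap but did not resolve it; the resolution is that the theorem statement carries a typo, and the paper's own derivation keeps the pre-update membranes and the $+$ sign. If you want the equation in your proposal to match what you actually prove, stop at the pre-update form rather than substituting.
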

The proof of Theorem \ref{thm:diff_graded2} is detailed in the Appendix \ref{proof_thm:diff_graded2}.
From Theorem \ref{thm:diff_graded2}, a neuron with two inputs requires two variables to record $\bm{m}_A^{l}[t]$ and $\bm{m}_B^{l}[t]$, respectively.
Graded units provide the ability to integrate information about nonlinear layer changes. This enables the conversion of various complex networks, including CNNs and Transformers.

\subsubsection{Differential Spiking Neurons} 
Since the majority of computations occur in fully connected layers, convolutional layers, and matrix multiplication layers, it is recommended to introduce spiking neuron layers before these layers, so that the computation is event-driven, thereby effectively reducing the network’s energy consumption.
Theorem \ref{thm:spike_diff} demonstrates how to convert a spiking neuron in rate coding into a differential neuron in differential coding.
\begin{theorem}\label{thm:spike_diff}
In rate coding, the output of the previous layer, $ \bm{x}^{l-1}[t]$, is directly used as the input current for the current layer $\bm{I}^l[t] =  \bm{x}^{l-1}[t]$.
In differential coding, the input current $\bm{I}^l[t]$ can be adjusted as shown in Equation \eqref{diff_current}, which converts any spiking neuron into a differential spiking neuron:
\begin{align}\label{diff_current}
&\bm{I}^l[t] = \bm{m_r}^l[t]+ \bm{x}^{l-1}[t],
\\&\bm{m_r}^l[t+1] = \bm{m_r}^l[t]+\frac{ \bm{x}^{l-1}[t]}{t}-\frac{ \bm{x}^l[t]}{t},
\end{align}
where $\bm{m_r}^l[0]$ is $\bm{b}^{l-1}$ if the previous layer has bias else $0$.
\end{theorem}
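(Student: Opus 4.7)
The plan is to show that with the augmented input current $\bm{I}^l[t]=\bm{m_r}^l[t]+\bm{x}^{l-1}[t]$ the spiking neuron's cumulative membrane dynamics satisfy the \emph{same} closed-form identity that drove the rate-coding convergence argument at the end of Section 3.2, namely
\[
\bm{v}^l[t]-\bm{v}^l[0] \;=\; t\bigl(\bm{r}^{l-1}[t]-\bm{r}^l[t]\bigr).
\]
Once this identity is in hand, whatever spike-generation rule the neuron uses (MT identity, IF-ReLU, signed IF, etc.) inherits its rate-coding asymptotic behaviour automatically, which is precisely what ``converting any spiking neuron into a differential spiking neuron'' has to mean.

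First I would solve the recursion for $\bm{m_r}^l[t]$. Rewriting $\bm{x}^{l-1}[t]/t=\bm{r}^{l-1}[t]-\bm{r}^{l-1}[t-1]$ and $\bm{x}^l[t]/t=\bm{r}^l[t]-\bm{r}^l[t-1]$ via Definition~\ref{def:diff_coding}, the increments telescope to $\bm{m_r}^l[t]=\bm{m_r}^l[0]+\bm{r}^{l-1}[t-1]-\bm{r}^l[t-1]$. Next I would form the cumulative input $\sum_{i=1}^t\bm{I}^l[i]$: summing the closed form of $\bm{m_r}^l$ contributes $\sum_{j=1}^{t-1}(\bm{r}^{l-1}[j]-\bm{r}^l[j])+t\,\bm{m_r}^l[0]$, while an Abel summation on $\bm{x}^{l-1}[i]=i(\bm{r}^{l-1}[i]-\bm{r}^{l-1}[i-1])$ gives $\sum_{i=1}^t\bm{x}^{l-1}[i]=t\bm{r}^{l-1}[t]-\sum_{j=1}^{t-1}\bm{r}^{l-1}[j]$. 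The $\bm{r}^{l-1}$ tails cancel, leaving $\sum_{i=1}^t\bm{I}^l[i]=t\bm{r}^{l-1}[t]-\sum_{j=1}^{t-1}\bm{r}^l[j]$ (up to the bias contribution $t\bm{m_r}^l[0]$, which is absorbed as the $\bm{b}^{l-1}$ of the preceding layer). The same Abel identity applied to $\bm{x}^l[i]$ yields $\sum_{i=1}^t\bm{x}^l[i]=t\bm{r}^l[t]-\sum_{j=1}^{t-1}\bm{r}^l[j]$, whose tail cancels the one above under subtraction.

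Finally I would combine these with the universal neuron identity $\bm{v}^l[t]-\bm{v}^l[0]=\sum_{i=1}^t(\bm{I}^l[i]-\bm{x}^l[i])$, obtained by telescoping the MT dynamics \eqref{mth1}--\eqref{mth4}, to recover the target equality. Dividing by $t$ reproduces $\bm{r}^l[t]=\bm{r}^{l-1}[t]-(\bm{v}^l[t]-\bm{v}^l[0])/t$, after which the same boundedness-of-$\bm{v}^l$ argument as in rate coding yields the asymptotic fit, so the modified input-current scheme is indeed a valid conversion for arbitrary spike rules. The main obstacle I anticipate is purely indexing discipline: $\bm{m_r}^l[t]$ carries a one-step lag (it contains $\bm{r}^{l-1}[t-1]$, not $\bm{r}^{l-1}[t]$), so the tails produced by the telescoping and by the Abel summation must line up index-for-index before they cancel; one must separately check the $t=1$ base case and treat the bias initialization $\bm{m_r}^l[0]=\bm{b}^{l-1}$ consistently with whether the previous layer's differential representation already folds in that bias, lest it be counted twice.
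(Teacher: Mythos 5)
Your proof is correct and reaches the same conclusion, but it runs in the opposite direction from the paper's argument. The paper's proof in Appendix E is a \emph{top-down verification}: it first asserts the semantics $\bm{m_r}^l[t] = \bm{r}^{l-1}[t-1] - \bm{r}^l[t-1]$ (justified briefly by the soft-reset interpretation of the auxiliary membrane variable) and then checks in two lines, using only $\tfrac{\bm{x}^{l-1}[t]}{t}=\bm{r}^{l-1}[t]-\bm{r}^{l-1}[t-1]$ and its analogue for $\bm{x}^l$, that the stated update rule is consistent with these semantics. You instead take the recursion in the theorem as given, telescope it to recover the closed form $\bm{m_r}^l[t]=\bm{m_r}^l[0]+\bm{r}^{l-1}[t-1]-\bm{r}^l[t-1]$, then push further with an Abel summation on $\sum_i \bm{I}^l[i]$ and $\sum_i \bm{x}^l[i]$ to re-derive the rate-coding convergence identity $\bm{v}^l[t]-\bm{v}^l[0]=t(\bm{r}^{l-1}[t]-\bm{r}^l[t])$. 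What your route buys is an explicit demonstration of \emph{why} the modified input current makes an arbitrary spike rule a valid converter — the paper leaves that as implicit carryover from the Section 3.2 argument. What it costs is exactly the index bookkeeping you flag at the end: the one-step lag in $\bm{m_r}^l$, the $t=1$ boundary, and the $\bm{m_r}^l[0]=\bm{b}^{l-1}$ bias term, all of which the paper sidesteps by working directly from the semantics rather than the recursion. Your identification of those as the danger points is accurate, and the cancellation of the $\bm{r}^{l-1}$ tails and the $\sum_j\bm{r}^l[j]$ terms does go through; the only caveat to carry over cleanly into a polished writeup is that the final asymptotic fit requires the usual boundedness-of-$\bm{v}^l$ hypothesis, which neither you nor the paper dwell on but which is spike-rule-dependent.
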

The proof of Theorem \ref{thm:spike_diff} is detailed in the Appendix \ref{proof_thm:spike_diff}. 
In contrast to rate coding, which is constrained by a decay that limits the output range to below the threshold $\theta$, differential coding allows for adaptive adjustment of the neuron's output range by directly modifying the encoded activation $r^l[t]$. This flexibility is especially beneficial in scenarios with multiple or dynamically adjustable thresholds, as the combination of different thresholds enhances the representation accuracy. So, we employ a differential version of identity multi-threshold spiking neuron in our experiments.
\begin{figure}[t]
\vskip 0.2in
\begin{center}
\centerline{\includegraphics[width=\columnwidth, trim=0.5cm 0.8cm 0.5cm 1.1cm, clip]{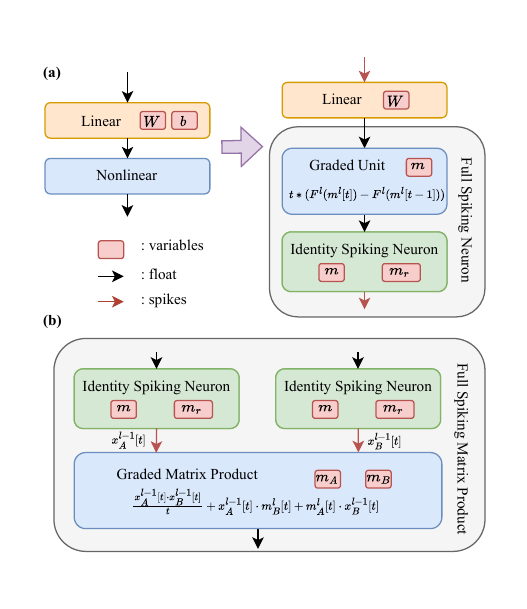}}
\caption{
(a) Conversion of a linear layer followed by a nonlinear layer in an ANN into SNN modules. (b) Conversion of a matrix product or element-wise multiplication in the ANN into SNN modules.
}
\label{pic:conversion}
\end{center}
\vskip -0.2in
\end{figure}
\subsubsection{Differential Coding for Linear Layer}
Theorem \ref{thm:linear} shows the conversion of linear layers under differential coding in ANN-to-SNN conversion.
\begin{theorem}\label{thm:linear}
For linear layers, including fully connected and convolutional layers that can be represented by Equation \eqref{linear},
\begin{align}\label{linear}
 \bm{x}^l = \bm{W}^l  \bm{x}^{l-1} + \bm{b}^l,
\end{align}
where $\bm{W}^l$ and $\bm{b}^l$ is the weight and bias of layer $l$.
Under differential coding in SNNs, this is equivalent to eliminating the bias term $\bm{b}^l$ and initializing the membrane potential of the subsequent layer with the bias value.
\end{theorem}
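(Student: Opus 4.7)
The plan is to exploit the linearity of the unrolled closed form $\bm{r}^l[t] = \sum_{i=1}^t \bm{x}^l[i]/i$, which follows immediately from the recursion in Definition \ref{def:diff_coding}. Since every module that could consume layer $l$'s output (either a differential graded unit from Theorem \ref{thm:diff_graded1} or a differential spiking neuron from Theorem \ref{thm:spike_diff}) carries a state variable whose update rule is precisely this running sum, the bias $\bm{b}^l$ can be absorbed into that state variable's initial condition instead of being transmitted through the linear operation at every time-step.

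Concretely, I would proceed in three short steps. First, set $\bm{x}^l[t] = \bm{W}^l \bm{x}^{l-1}[t]$ (i.e., drop the bias from the spike output of layer $l$); by linearity of the defining sum this gives $\bm{r}^l[t] = \bm{W}^l \bm{r}^{l-1}[t]$, so the encoded activation falls short of the target ANN activation $\bm{\alpha}^l = \bm{W}^l \bm{\alpha}^{l-1} + \bm{b}^l$ by exactly $\bm{b}^l$. Second, invoke the membrane-potential recursion of the successor layer, which in both Theorem \ref{thm:diff_graded1} and Theorem \ref{thm:spike_diff} has the form $\bm{m}^{l+1}[t] = \bm{m}^{l+1}[t-1] + \bm{x}^l[t]/t$; initializing $\bm{m}^{l+1}[0] = \bm{b}^l$ and telescoping yields $\bm{m}^{l+1}[t] = \bm{b}^l + \bm{W}^l \bm{r}^{l-1}[t]$, i.e., the bias has been restored inside the membrane potential at every time-step. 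Third, observe that $\bm{b}^l$ influences the downstream trajectory only through $\bm{m}^{l+1}[t]$: the firing rule in Equation \eqref{mth5} and the nonlinear evaluation in Equation \eqref{dgs2} take the membrane potential as their sole relevant argument. Hence an observer of the $(l+1)$-th layer cannot distinguish between the two implementations.

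The main obstacle I foresee is keeping the bookkeeping consistent across the two possible successor types. A differential graded unit reads $F^l(\bm{m}^{l+1}[t])$ directly, whereas a differential spiking neuron uses the auxiliary state $\bm{m_r}^{l+1}[t]$ inside the input-current machinery of Theorem \ref{thm:spike_diff}, and I would need to verify that the initialization $\bm{m_r}^{l+1}[0] = \bm{b}^l$ (already foreshadowed in that theorem's statement) propagates through both branches of the current update without double counting. Once these compatibility checks are spelled out, the equivalence is exact at every time-step rather than asymptotic, so no extra approximation or timing error is introduced and the remainder of the argument is purely notational.
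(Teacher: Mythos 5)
Your proposal is correct and rests on the same two ingredients as the paper's proof: the linear unrolled form of the encoded activation, and the absorption of the bias into the successor layer's membrane-potential initialization (with the $t=1$ boundary case handled by setting the initial potential to $\bm{b}^l$). The only difference is directional — the paper postulates the desired encoding relations $\bm{e}^l[t] = \bm{W}^l\bm{e}^{l-1}[t] + \bm{b}^l$ and $\bm{r}^l[t] = \bm{W}^l\bm{r}^{l-1}[t] + \bm{b}^l$ and derives that $\bm{x}^l[t] = \bm{W}^l\bm{x}^{l-1}[t]$, whereas you start from the bias-free output and verify that the initialization restores the correct encoded value downstream; this is a logically equivalent argument, and your extra care in checking compatibility with both successor types (graded unit vs.\ spiking neuron) makes the role of the $\bm{m_r}$ initialization in Theorem~\ref{thm:spike_diff} more explicit than the paper does.
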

The proof of Theorem \ref{thm:linear} is detailed in the Appendix \ref{proof_thm:linear}.
\cref{pic:conversion} shows the overall method to replace ANN modules by SNN modules under differential coding.

\subsection{Optimal Threshold for ReLU}
When replacing the ReLU function in CNNs with spiking neurons, we propose an algorithm called threshold iteration method for determining the optimal threshold.
\begin{assumption}
According to \cite{g.2018gaussian}, assume that the input $x$ to the neuron follows a normal distribution $X$ with mean $\mu$ and variance $\sigma^2$.
\label{ass:mean_and_var}
\end{assumption}
Based on Assumption \ref{ass:mean_and_var}, we introduce Definition \ref{def:qc_error} to define the overall error function, which is obtained by integrating the function error over the distribution of activation values.
\begin{definition}
In the $T$ time-steps conversion, the quantization and clipping errors of the ReLU function can be expressed as
\begin{align}
&\scalebox{0.97}{$
QE(\theta)=\int_{-\infty}^{+\infty}{\left( f(x,\theta) -\max \left( x,0 \right) \right) ^2e^{-\frac{\left( x-\mu \right) ^2}{2\sigma ^2}}\text{d}x},\label{QE_original}
$}
\\&\scalebox{0.97}{$
f(x,\theta)=\frac{\theta}{N}clamp\left( \lfloor \frac{Nx+\tfrac{\theta}{2}}{\theta} \rfloor ,0,N \right),$}
\end{align}
where $f(x,\theta)$ represents the expected encoded activation in SNNs for a threshold $\theta$ which is proposed by \cite{bu2022optimized}.
For an IF neuron, $N=T$. For a multi-threshold with $n$ threshold, roughly let $N=2^nT$.
\label{def:qc_error}
\end{definition}
Finding the optimal threshold by directly differentiating this function is challenging. However, we can take an alternative approach by introducing a variable \( k \) to help determine the optimal threshold. We consider two cases: \( k \) multiplies the output threshold amplitude as in Equation \eqref{QE1}, and \( k \) multiplies the threshold during spike calculation as in Equation \eqref{QE2}. These cases yield the following two lemmas.
\begin{lemma}
\begin{align}
&\scalebox{0.86}{$
QE_1(\theta,k)=\int_{-\infty}^{+\infty}{\left( f_1(x,\theta,k) -\max \left( x,0 \right) \right) ^2e^{-\frac{\left( x-\mu \right) ^2}{2\sigma ^2}}\text{d}x},
$}\label{QE1}
\\&\scalebox{0.86}{$f_1(x,\theta,k)=k\frac{\theta}{N}clamp\left( \lfloor \frac{Nx+\tfrac{\theta}{2}}{\theta} \rfloor ,0,N \right).$}
\end{align}
When \( \theta \) is fixed, \( QE_1(\theta, k) \) reaches its minimum value when:
\begin{align}
&\scalebox{0.86}{$\begin{aligned}\label{best_k1}
k=k_1=\frac{\mu}{\theta}\frac{1-\sum_{i=1}^n{\frac{1}{n}\text{erf}\left( \frac{\left( \tfrac{\left( 2i-1 \right) \theta}{2n}-\mu \right)}{\sqrt{2}\sigma} \right)}}{1-\sum_{i=1}^n{\frac{2i-1}{n^2}\text{erf}\left( \frac{\left( \tfrac{\left( 2i-1 \right) \theta}{2n}-\mu \right)}{\sqrt{2}\sigma} \right)}}
\\+\frac{\sigma}{\sqrt{\tfrac{\pi}{2}}\theta}\frac{\sum_{i=1}^n{\frac{1}{n}e^{-\frac{\left( \tfrac{\left( 2i-1 \right) \theta}{2n}-\mu \right) ^2}{2\sigma ^2}}}}{1-\sum_{i=1}^n{\frac{2i-1}{n^2}\text{erf}\left( \frac{\left( \tfrac{\left( 2i-1 \right) \theta}{2n}-\mu \right)}{\sqrt{2}\sigma} \right)}}.
\end{aligned}$}
\end{align}
\label{lem:lemma_k1}
\end{lemma}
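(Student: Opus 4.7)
The plan is to exploit the fact that $f_1(x,\theta,k)$ depends on $k$ only through a scalar factor: writing $g(x,\theta) := \frac{\theta}{N}\,\mathrm{clamp}(\lfloor (Nx+\theta/2)/\theta\rfloor,0,N)$, we have $f_1 = k\,g$, so
\[
QE_1(\theta,k) \;=\; A(\theta)\,k^2 \;-\; 2B(\theta)\,k \;+\; C(\theta),
\]
with $A(\theta) = \int g^2\, e^{-(x-\mu)^2/(2\sigma^2)}\,dx>0$ and $B(\theta) = \int g\,\max(x,0)\,e^{-(x-\mu)^2/(2\sigma^2)}\,dx$. This is a convex quadratic in $k$, so $\partial QE_1/\partial k = 0$ gives the unique minimizer $k^\ast = B(\theta)/A(\theta)$. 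Moreover, since $g$ vanishes on $(-\infty,\theta/(2N))$ and $\theta/(2N)>0$, inside $B$ one may replace $\max(x,0)$ by $x$ without changing the value, which is what lets the Gaussian first-moment formula apply cleanly.

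Next I would compute $A$ and $B$ from the step structure of $g$. With $N=n$ and breakpoints $a_i := (2i-1)\theta/(2n)$, $g$ equals $i\theta/n$ on $[a_i,a_{i+1})$ for $i<n$ and equals $\theta$ on $[a_n,\infty)$. Setting $E_i := \mathrm{erf}((a_i-\mu)/(\sqrt{2}\sigma))$ and $G_i := e^{-(a_i-\mu)^2/(2\sigma^2)}$, the standard Gaussian identities give tail integrals $Q_i := \int_{a_i}^{\infty} e^{-(x-\mu)^2/(2\sigma^2)}\,dx = \sigma\sqrt{\pi/2}(1-E_i)$ and $R_i := \int_{a_i}^{\infty} x\,e^{-(x-\mu)^2/(2\sigma^2)}\,dx = \mu Q_i + \sigma^2 G_i$. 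A summation-by-parts on $A/\theta^2 = \sum_{i=1}^{n-1}(i/n)^2 (Q_i-Q_{i+1}) + Q_n$ telescopes to $\sum_{i=1}^n \tfrac{2i-1}{n^2} Q_i$, and combining with $\sum_{i=1}^n(2i-1)=n^2$ yields $A = \theta^2 \sigma\sqrt{\pi/2}\bigl[1 - \sum_i \tfrac{2i-1}{n^2} E_i\bigr]$, which matches the denominator of the claimed $k_1$. The same Abel rearrangement applied to the linear coefficients $(i/n)$ in $B/\theta$ produces $B = \theta\mu\sigma\sqrt{\pi/2}\bigl[1 - \tfrac{1}{n}\sum_i E_i\bigr] + \tfrac{\theta\sigma^2}{n}\sum_i G_i$.

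Finally, forming $k^\ast = B/A$ and cancelling the common factor $\theta\sigma\sqrt{\pi/2}$ splits the result into a $\mu/\theta$ term (from the Gaussian mass) and a $\sigma/(\sqrt{\pi/2}\,\theta)$ term (from the exponential tails), reproducing exactly the two summands of the stated expression. The main obstacle I anticipate is not the calculus but the bookkeeping in the summation by parts: one must treat the rightmost interval $[a_n,\infty)$ specially so that the $i=n$ coefficient telescopes to $n^2$ rather than $n^2-(n-1)^2$; one must carry the $\sigma\sqrt{\pi/2}$ normalization coming from the unnormalized Gaussian kernel so that $\sqrt{\pi/2}$ lands in the denominator of the second term rather than being absorbed into the $\sigma$ factor; and one should confirm $A(\theta)>0$ (immediate from $g^2>0$ on a set of positive Gaussian measure) so the stationary point is genuinely the minimum.
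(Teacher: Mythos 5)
Your proof is correct, and it takes a genuinely different and cleaner route than the paper's.

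The paper proves this lemma by first establishing the auxiliary identity
$\int_a^b (c-x)^2 e^{-(x-\mu)^2/(2\sigma^2)}\,dx$ in closed form, then splitting the integral defining $QE_1$ into the $n+1$ pieces of the step function, substituting the auxiliary identity on each piece (with $c = k\,i\theta/n$, etc.), and then carrying out several pages of term-by-term re-indexing and cancellation until the expression collapses into an explicit quadratic in $k$, whose vertex is read off at the end. The role of $k$ appears only at the very last stage.

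You instead observe up front that $f_1 = k\,g$ with $g$ independent of $k$, so $QE_1(\theta,k) = A(\theta)k^2 - 2B(\theta)k + C(\theta)$ with $A = \int g^2 w$, $B = \int g\,\max(x,0)\,w$ (where $w$ is the Gaussian weight), and hence the minimizer is simply $k^\ast = B/A$, leaving only the two moment integrals to evaluate. The step structure of $g$ plus an Abel summation (noting the boundary interval contributes the full $Q_n$, not a difference) telescopes cleanly to $A = \theta^2\sigma\sqrt{\pi/2}\,[1 - \sum_i \tfrac{2i-1}{n^2}E_i]$ and $B = \theta\mu\sigma\sqrt{\pi/2}\,[1 - \tfrac{1}{n}\sum_i E_i] + \tfrac{\theta\sigma^2}{n}\sum_i G_i$, using $\sum_{i=1}^n(2i-1)=n^2$, and $B/A$ is exactly the stated $k_1$. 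The identity $\max(x,0) \to x$ on the support of $g$ is justified since $\theta>0$ makes $a_1 = \theta/(2n) > 0$. This recovers the paper's quadratic coefficients (indeed they agree term for term with the final lines of the paper's expansion), but your route is shorter, avoids the auxiliary Lemma entirely, makes the convexity in $k$ and the existence/uniqueness of the minimizer manifest from the start, and isolates the combinatorial content in one summation-by-parts rather than a long cascade of cancellations. The only thing the paper's brute-force expansion buys in exchange is that it also produces the explicit constant term $C(\theta)$, which your argument discards as irrelevant to the minimizer — but that term is not needed for this lemma.
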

\begin{lemma}
\begin{align}
&\scalebox{0.86}{$
QE_2(\theta,k)=\int_{-\infty}^{+\infty}{\left( f_2(x,\theta,k) -\max \left( x,0 \right) \right) ^2e^{-\frac{\left( x-\mu \right) ^2}{2\sigma ^2}}\text{d}x},\label{QE2}$}
\\&\scalebox{0.86}{$f_2(x,\theta,k)=\frac{\theta}{N}clamp\left( \lfloor \frac{Nx+\tfrac{k\theta}{2}}{k\theta} \rfloor ,0,N \right).$}
\end{align}
When \( \theta \) is fixed, \( QE_2(\theta, k) \) reaches its minimum value when $k=1$.
\label{lem:lemma_k2}
\end{lemma}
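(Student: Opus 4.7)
The plan is to embed $QE_2(\theta,k)$ into a relaxed optimization problem in which the $N$ decision boundaries of $f_2(\cdot,\theta,k)$ are allowed to move independently, and then observe that the relaxed optimum already lies on the one-parameter family at $k=1$.

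First I would simplify the integration domain. Assuming $k>0$, both $f_2(x,\theta,k)$ and $\max(x,0)$ vanish on $(-\infty,0]$, so only positive $x$ contributes. Writing $a_j:=j\theta/N$ for the output levels and $x_j(k):=k(j-\tfrac{1}{2})\theta/N$ for the decision boundaries (with $x_0(k):=0$ and $x_{N+1}(k):=+\infty$), the step function $f_2(\cdot,\theta,k)$ takes the value $a_j$ on $[x_j(k),x_{j+1}(k))$, giving
\begin{align*}
QE_2(\theta,k)=\sum_{j=0}^{N}\int_{x_j(k)}^{x_{j+1}(k)}(a_j-x)^2\, p(x)\,dx,
\end{align*}
where $p(x):=e^{-(x-\mu)^2/(2\sigma^2)}$.

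Next I would relax the one-parameter constraint and study
\begin{align*}
\mathcal{I}(x_1,\ldots,x_N):=\sum_{j=0}^{N}\int_{x_j}^{x_{j+1}}(a_j-x)^2\, p(x)\,dx
\end{align*}
over $0<x_1<\cdots<x_N$. The key structural fact is that $\mathcal{I}$ is additively separable: collecting the two integrals that contain each breakpoint $x_j$ (once as an upper limit, once as a lower limit) yields $\mathcal{I}=\text{const}+\sum_{j=1}^{N}h_j(x_j)$ with
\begin{align*}
h_j(y):=\int_0^{y}\bigl[(a_{j-1}-x)^2-(a_j-x)^2\bigr]\, p(x)\,dx,
\end{align*}
so $h_j'(y)=(a_{j-1}-a_j)(a_{j-1}+a_j-2y)\,p(y)$. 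Since $a_{j-1}-a_j=-\theta/N<0$ and $p(y)>0$, this derivative changes sign exactly at $y=\tfrac{a_{j-1}+a_j}{2}$; thus $h_j$ attains its unique global minimum at $y_j:=(j-\tfrac{1}{2})\theta/N$. Hence the relaxed problem is jointly minimized at $(y_1,\ldots,y_N)$, a point that trivially respects the ordering constraint $0<y_1<\cdots<y_N$.

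Since $y_j=x_j(1)$ for every $j$, the relaxed minimum is realized by the constrained configuration $k=1$, so the restricted one-parameter problem cannot do better, which proves the lemma. The main obstacle is establishing the additive separability cleanly; once it is in hand, the proof reduces to $N$ independent one-variable sign analyses. Minor technicalities include the boundary pieces $j=0$ (where the clamp matches $a_0=0$) and $j=N$ (where Gaussian decay of $p$ ensures the unbounded tail integral is finite), but both slot into the general formula without issue.
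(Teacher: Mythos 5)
Your proof is correct and takes a genuinely different and more structural route than the paper. The paper proves the lemma by brute-force expansion: it applies its Lemma~A.1 to each interval, grinds the resulting sum of $\mathrm{erf}$ and Gaussian terms down to a closed form, isolates the $k$-dependent piece $L$, and differentiates to find $L'(k)=\sum_i \tfrac{(2i-1)^2\theta^2}{4n^2\sigma}(k-1)e^{-(\cdot)}$, whence $k=1$ is the unique critical point and global minimizer. Your argument instead exposes why this happens: after the telescoping rearrangement $\mathcal{I}=G_N(\infty)+\sum_{j=1}^N h_j(x_j)$ with $h_j(y)=\int_0^y\big[(a_{j-1}-x)^2-(a_j-x)^2\big]p(x)\,dx$, each breakpoint decouples and is independently optimal at the midpoint $(a_{j-1}+a_j)/2=(j-\tfrac12)\theta/N$, which happens to be exactly $x_j(1)$. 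The relaxation argument then drops out. This is strictly more informative than the paper's calculation (it proves optimality even over the much larger class of arbitrary ordered breakpoint vectors, not just the one-parameter family indexed by $k$), and it avoids the lengthy integral bookkeeping entirely. One can even reconcile the two: the chain rule applied to your decomposition, $\tfrac{\partial}{\partial k}\sum_j h_j(k y_j)=\sum_j y_j h_j'(k y_j)$ with $h_j'(y)=-\tfrac{\theta}{N}\,2(y_j-y)p(y)$, reproduces exactly the paper's final $L'$ expression, so you have in effect explained where that formula comes from. The only hypotheses you use are $k>0$, $\theta>0$, and $\sigma>0$, all of which are implicit in the paper's treatment as well; and your observation that the relaxed optimum $0<y_1<\cdots<y_N$ lies strictly inside the ordering cone is the right way to justify passing from the unconstrained per-coordinate minima to the constrained minimum. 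No gaps.
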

\begin{algorithm}[htb]
\caption{Threshold iteration method to find the best threshold}
\label{alg:find}
\begin{algorithmic}[1]
   \STATE \textbf{Input:} Pre-trained ANN Model $F_\text{ANN}(\bm{W})$, Dataset $\bm{D}$.
   \STATE \textbf{Initialize:} Set $\bm{\theta} \leftarrow 1$ (any positive initial value)
   \STATE Run the model $F_\text{ANN}(\bm{W})$ on dataset $\bm{D}$ to statically compute the mean $\bm{\mu}$ and variance $\bm{\sigma}^2$ of pre-activations of each ReLU separately.
   \REPEAT
       \STATE Update $\bm{k_1}$ based on $\bm{\mu}$ and $\bm{\sigma}^2$ according to Eq \eqref{best_k1}
       \STATE Update $\bm{\theta} \leftarrow \bm{k_1} \cdot \bm{\theta}$
   \UNTIL{$1-\epsilon < \bm{k_1} < 1+\epsilon$, where $\epsilon$ tends to 0.}
   \STATE \textbf{Output:} Threshold $\bm{\theta}$ 
\end{algorithmic}
\end{algorithm}
According to Lemma \ref{lem:lemma_k1} and Lemma \ref{lem:lemma_k2}, we obtain the following inequality and Theorem \ref{thm:best_k}:
\begin{align}
\scalebox{0.86}{$QE(k_1 \theta) < QE_2(k_1 \theta, \frac{1}{k_1}) = QE_1(\theta, k_1) < QE(\theta).$}
\end{align}
\begin{theorem}
Starting from any positive initial value of \( \theta \), the rate of change \( k_1 \) can be continuously calculated based on the prior mean \( \mu \), variance \( \sigma^2 \), and the current threshold \( \theta \) using Equation \eqref{best_k1}. The iteration \( \theta = k_1 \theta \) continues until convergence, at which point the global optimal threshold \( \theta \) is obtained. The process is guaranteed to converge as long as the threshold is greater than 0.
\label{thm:best_k}
\end{theorem}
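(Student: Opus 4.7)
The plan is to first establish a key descent inequality using Lemmas \ref{lem:lemma_k1} and \ref{lem:lemma_k2}. Observe that the two parameterizations satisfy $f_1(x,\theta,k_1)=f_2(x,k_1\theta,1/k_1)$, so $QE_1(\theta,k_1)=QE_2(k_1\theta,1/k_1)$. Combining this with the fact that $k_1$ minimizes $QE_1(\theta,\cdot)$ and $k=1$ minimizes $QE_2(k_1\theta,\cdot)$, I obtain the chain
\begin{equation*}
QE(k_1\theta)=QE_2(k_1\theta,1)\le QE_2(k_1\theta,1/k_1)=QE_1(\theta,k_1)\le QE_1(\theta,1)=QE(\theta),
\end{equation*}
where both inner inequalities are strict whenever $k_1\ne 1$. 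Defining $\theta_{n+1}=k_1(\theta_n)\theta_n$, this guarantees $QE(\theta_{n+1})<QE(\theta_n)$ as long as the iteration actually moves.

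Second, I would show that $\{\theta_n\}$ is bounded away from $0$ and $\infty$. The expression for $k_1(\theta)$ in Equation \eqref{best_k1} is smooth in $\theta>0$; as $\theta\to 0^+$ the quantization is overly fine and $k_1$ is driven well above $1$, while as $\theta\to\infty$ the clamp saturates almost all probability mass and $k_1$ falls well below $1$. Coupled with monotone descent of $QE$, this traps the sequence in some compact interval $[\theta_{\min},\theta_{\max}]\subset(0,\infty)$.

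Third, since $QE(\theta_n)$ is monotone and bounded below, it converges, and therefore $QE(\theta_n)-QE(\theta_{n+1})\to 0$. The descent gap is a continuous function of $\theta$ on the compact interval, vanishing exactly on the fixed-point set $\{\theta:k_1(\theta)=1\}$, so every limit point of $\{\theta_n\}$ lies in this set. Differentiating $QE$ directly and using $\partial QE_1/\partial k|_{k=1}$, the condition $k_1(\theta)=1$ is exactly the first-order optimality condition $QE'(\theta)=0$; hence every accumulation point is a critical point of $QE$.

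To promote local to global optimality, I would argue that $k_1(\theta)-1$ has a single sign change on $(0,\infty)$, using the sign behaviour at the endpoints together with the monotonicity of the explicit expression in Equation \eqref{best_k1}; this uniqueness of the fixed point forces the full sequence $\theta_n$ to converge to it from any positive start. The hard part will be precisely this last step: because $f(x,\theta)$ contains a floor and a clamp, standard convexity of $QE$ is unavailable, and uniqueness of the root of $k_1(\theta)=1$ must be extracted from the monotonicity of the closed-form ratio in Equation \eqref{best_k1}, which is the most calculation-heavy portion of the argument.
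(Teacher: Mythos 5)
Your descent chain $QE(k_1\theta)\le QE_1(\theta,k_1)\le QE(\theta)$ is exactly the inequality the paper states immediately before the theorem, and your outer framing (monotone decrease of $QE$ along the iteration, boundedness, accumulation points are fixed points, then uniqueness of fixed point) is a reasonable and arguably more careful convergence argument than the paper gives --- the paper's proof essentially jumps straight to showing the fixed-point equation has a unique solution, implicitly taking convergence for granted.

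However, there is a genuine gap in what you label the ``hard part,'' and the route you sketch for it does not work. You propose to show that $k_1(\theta)-1$ has a single sign change via ``monotonicity of the explicit expression in Equation \eqref{best_k1}.'' But $k_1(\theta)$ is not globally monotone: it diverges to $+\infty$ as $\theta\to0^+$, and as $\theta\to\infty$ one finds $k_1\to1$ from \emph{below}; if $k_1$ were monotone decreasing from $+\infty$ it would stay above $1$ and never cross, so the single crossing you want cannot come from monotonicity of $k_1$ itself. The paper instead analyzes the equivalent function
\[
f(\theta)=\mu\left(1-\sum_{i=1}^n\tfrac{1}{n}\,\mathrm{erf}(\cdot)\right)+\tfrac{\sigma}{\sqrt{\pi/2}}\sum_{i=1}^n\tfrac{1}{n}e^{-(\cdot)^2/2\sigma^2}-\theta\left(1-\sum_{i=1}^n\tfrac{2i-1}{n^2}\,\mathrm{erf}(\cdot)\right),
\]
whose sign equals that of $k_1(\theta)-1$. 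It computes $f'$ and $f''$ explicitly, shows $f''(0)>0$, $f''(+\infty)=0^-$, and that $f''$ changes sign exactly once from positive to negative; this forces $f'$ (with $f'(0)<0$, $f'(+\infty)=0$) to change sign exactly once from negative to positive and then decay to zero; this in turn forces $f$ (with $f(0)>0$, $f(+\infty)=0$) to decrease from a positive value through a unique zero to a negative minimum and then increase back to zero. That second-derivative cascade, not monotonicity of $k_1$, is what yields the unique fixed point. Your proposal leaves this entire computation undone and points toward an argument that would fail, so as written it does not establish the theorem.

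Two smaller points: your boundedness claim for $\{\theta_n\}$ is asserted rather than shown (the paper sidesteps this by proving global uniqueness of the root directly rather than tracking the iterates), and the strict inequalities in your descent chain rely on Lemmas \ref{lem:lemma_k1} and \ref{lem:lemma_k2} giving \emph{strict} minima, which is true because $QE_1(\theta,\cdot)$ is a strictly convex quadratic in $k$ and $L'$ in Lemma \ref{lem:lemma_k2} has the sign of $k-1$; you should say this rather than leave it implicit.
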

The proof of Theorem \ref{lem:lemma_k1}, \ref{lem:lemma_k2}, and \ref{thm:best_k} are detailed in the Appendix \ref{proof_lem:lemma_k1}, \ref{proof_lem:lemma_k2} and \ref{proof_thm:best_k}. Therefore, the optimal \( \theta \) can be determined by the Theorem \ref{thm:best_k} and Algorithm \ref{alg:find}.
%然而在神经网络中大的激活值往往比小的激活值更加关键，按照公式\eqref{QE_original}计算会导致得到的结果略小于实际最优阈值，于是我们加入$threshold scale>1$参数将计算得到的阈值适当扩大得到最终阈值。

\subsection{Hardware implementation of MT Neuron}
Equation \eqref{mth5} in Section \ref{sec:mtn} is presented for ease of understanding. In hardware implementation, the $\text{argmin}$ module is not used. We develop a hardware-friendly version of the MT neuron model, which can efficiently map the appropriate threshold using the potential's sign bit and exponent bits at an extremely low cost.
 
Compared with previous ANN2SNN methods, the MT neuron is required to transmit an extra index $i$ for the threshold. When implementing the MT neuron, two implementations can be considered:

1. Sent $V_{th}[i] \cdot S[t]$ to the next layer

2. Add an external threshold dimension with $2n$ elements to $S[t]$, set $S[t][i]=1$ and $S[t][j]=0$ for all $j \neq i$. At the same time, an external threshold dimension is added to the weight of the next layer, whose elements are the multi-level thresholds.

For simplicity, we use implementation 1 on GPUs, which is not pure binary but equivalent to implementation 2 with binary outputs. The MT neuron is also compatible with asynchronous computing neuromorphic chips because its outputs are still sparse events. Take the speck chip \cite{yao2024dynamic} as an example. The LIF neuron in the convolutional layer in speck chip outputs $(c,x,y)$ to the next layer. When using the MT neuron, the only modification is adding a threshold index, i.e., $(c,x,y,i)$. The computations of the next layer should also be changed by using a bit-shift operation on the weights, as the threshold is a power of 2 and this allows multiplication to be avoided. After the above modifications, the computation is still asynchronous and event-driven.
The implementation to avoid argmin in Equation \eqref{mth5} in hardware can be described in the following two steps.

Step 1:
Get SNN weights by using the weight normalization strategy \cite{Rueckauer2017conversion} described by the following equation.
\begin{align}
    &W^l_{\text{SNN}}=W^l_{\text{ANN}}\frac{\theta^l}{\theta^{l+1}},\\
    &b^l_{\text{SNN}}=\frac{b^l_{\text{ANN}}}{\theta^{l+1}}.
\end{align}
We then set all base thresholds $\theta^l=1$, resulting in the following thresholds for the MT neuron:
\begin{align}
    \lambda^l_i=\left\{
    \begin{array}{r@{}l@{}l@{}l}
        &\frac{1}{2^{i-1}},&1<i\leq n,\\
        &\frac{-1}{2^{i-n-1}},&n<i\leq 2n.
    \end{array}\right.
\end{align}
Step 2:
We define $\frac{4}{3}m^l[t]=(-1)^{S}2^{E}(1+M)$ with $1$ sign bit ($S$), $8$ exponent bits ($E$), and $23$ mantissa bits ($M$).
Since the median of $\frac{1}{2^{k-1}}$ and $\frac{1}{2^k}$ is $\frac{3}{4}\frac{1}{2^{k-1}}$, we can easily select the correct threshold index $i$ using $E$ and $S$ of $\frac{4}{3}m^l[t]$, without performing $2n$ subtractions to calculate the argmin in Equation \eqref{mth5}:

\begin{align}\label{mth_hardware}\scalebox{0.9}{$
    \text{MTH}_{\theta,n}(m^{l}[t],i)=\begin{cases}1,&\text{if }\begin{cases}
    i<n,\text{ S}=0\text{ and }i=1-\text{E},\\
    i\geq n,\text{ S}=1\text{ and }i-n =1-\text{E},\end{cases}\\
    0,&\text{otherwise}.\end{cases}$}
\end{align}
For differential neurons, the memory overhead compared to initial neurons, such as IF or MT neurons, only includes an additional membrane potential. This extra potential is used to adjust the input current as described in Theorem \ref{thm:spike_diff}.

To enable fast execution on GPU, we also design an efficient algorithm which is detailed in Appendix \ref{mt_neuron_algorithnm}.

\section{Experimental Results}
In this section, we first evaluate the performance of our proposed method on ImageNet dataset across different models, comparing our results with state-of-the-art ANN-to-SNN conversion methods. Then, we compute and analyze the energy consumption of the converted SNNs. Finally, we conduct comparative experiments to validate the effectiveness of differential coding and the threshold iteration method.

\begin{table}[htb]
\vskip -0.2in
\caption{Accuracy and energy ratio of DCGS(Ours) of different converted models on ImageNet Dataset}
\label{Acc-part}
\vskip 0.1in
\begin{center}
\begin{small}
% \begin{sc}
\begin{tabular}{cccccc}
\toprule
{Model Config}&\multicolumn{5}{c}{Time-step $T$}\\
\cmidrule(lr){2-6}
{Acc/Energy}&2&4&8&12&16\\
\midrule
    \multicolumn{6}{c}{ResNet34-4/1, Param:21.8M, Acc:76.42\%}\\
\cmidrule(lr){1-6}
    Acc & 59.71 & 73.35 & 76.04 & 76.26 & 76.35 \\
    Energy ratio & 0.14 & 0.24 & 0.37 & 0.46 & 0.53 \\
\midrule
    \multicolumn{6}{c}{VGG16-4/1, Param:138M, Acc:73.25\%}\\
\cmidrule(lr){1-6}
    Acc & 70.69 & 72.72 & 73.17 & 73.23 & 73.26\\
    Energy ratio & 0.10 & 0.15 & 0.22 & 0.26 & 0.29 \\
\midrule
    \multicolumn{6}{c}{ViT-Small-8/4, Param:22.1M, Acc:81.38\%}\\
\cmidrule(lr){1-6}
    Acc & 77.84 & 81.11 & 81.43 & 81.39 & 81.38 \\
    Energy ratio & 0.32 & 0.62 & 1.05 & 1.39 & 1.71 \\
\bottomrule
\end{tabular}
% \end{sc}
\end{small}
\end{center}
\vskip -0.2in
\end{table}

\begin{table*}[htb]
\centering
\caption{Comparison between the proposed method and previous ANN-to-SNN conversion works on ImageNet dataset.}
\label{cmp}
\vskip 0.1in
\begin{center}
\begin{small}
\begin{tabular}{ccccccc}
\toprule
Method & Type & Arch. & Param.(M) & ANN Acc(\%) & $T$ & SNN Acc(\%) \\
\midrule
% RMP\cite{han2020rmp}&CNN-to-SNN&VGG-16&138&4096&73.09\\
% \cmidrule(l){1-6}
\makecell{TS\\ \cite{deng2021optimal}}&CNN-to-SNN&VGG-16&138&72.40&64&70.97\\
\cmidrule(l){1-7}
\makecell{SNM\\ \cite{wang2022signed}}&CNN-to-SNN&VGG-16&138&73.18&64&71.50\\
\cmidrule(l){1-7}
\multirow{2}{*}{\makecell{MMSE\\ \cite{li2021free}}}&\multirow{2}{*}{CNN-to-SNN}&ResNet-34&21.8&75.66&64&71.12\\
&&VGG-16&138&75.36&64&70.69\\
\cmidrule(l){1-7}
\multirow{2}{*}{\makecell{QCFS\\ \cite{bu2022optimal}}}&\multirow{2}{*}{CNN-to-SNN}&ResNet-34&21.8&74.32&64&72.35\\
&&VGG-16&138&74.29&64&72.85\\
\cmidrule(l){1-7}
\multirow{2}{*}{\makecell{SRP\\ \cite{hao2023reducing}}}&\multirow{2}{*}{CNN-to-SNN}&ResNet-34&21.8&74.32&4, 64&66.71, 68.61\\
&&VGG-16&138&74.29&4, 64&66.47, 69.43\\
\cmidrule(l){1-7}
\makecell{MST\\ \cite{wang2023masked}}&Transformer-to-SNN&Swin-T(BN)&28.5&80.51&128, 512&77.88, 78.51\\
\cmidrule(l){1-7}
\makecell{STA\\ \cite{jiang2024spatiotemporal}}&Transformer-to-SNN&ViT-B/32&86&83.60&32, 256&78.72, 82.79\\
\cmidrule(l){1-7}
\multirow{3}{*}{\makecell{SpikeZIP-TF \\ \cite{you2024spikeziptf}}}&\multirow{3}{*}{Transformer-to-SNN}&SViT-S-32Level&22.05&81.59&64&81.45\\
&&SViT-B-32Level&86.57&82.83&64&82.71\\
&&SViT-L-32Level&304.33&83.86&64&83.82\\
\cmidrule(l){1-7}
\multirow{2}{*}{\makecell{ECMT\\\cite{huang2024towards}}}&\multirow{2}{*}{Transformer-to-SNN}&ViT-S/16&22&78.04&8, 10&76.03, 77.07\\
&&EVA-G&1074&89.62&4, 8&88.60, 89.40\\
\cmidrule(l){1-7}
\multirow{13}{*}{{\bf DCGS(Ours)}}&\multirow{6}{*}{CNN-to-SNN}&ResNet18-1/1&11.7&71.49&32, 64&69.89, 71.08\\
&&ResNet34-1/1&21.8&76.42&32, 64&58.86, 74.11\\
&&VGG-1/1&138&73.25&32, 64&72.04, 73.13\\
&&ResNet18-4/1&11.7&71.49&4, 8&70.07, 71.31\\
&&ResNet34-4/1&21.8&76.42&4, 8&73.35, 76.04\\
&&VGG-4/1&138&73.25&4, 8&72.72, 73.17\\
\cmidrule(l){2-7}&\multirow{7}{*}{Transformer-to-SNN}&ViT-S-8/4&22.1&81.38&2, 4&77.84, 81.11\\
&&ViT-B-8/4&86.6&84.54&2, 4&80.34, 83.98\\
&&ViT-L-8/4&304.3&85.84&2, 4&83.73, 85.45\\
&&EVA02-T-8/4&5.8&80.63&2, 4&66.32, 79.56\\
&&EVA02-S-8/4&22.1&85.73&2, 4&71.37, 84.70\\
&&EVA02-B-8/4&87.1&88.69&2, 4&84.62, 88.16\\
&&EVA02-L-8/4&305.1&90.05&2, 4&88.25, 89.72\\
\bottomrule
\end{tabular}
\end{small}
\end{center}
\vskip -0.2in
\end{table*}

\subsection{Comparison with the State-of-the-art ANN-to-SNN Conversion Methods}
%我们对11种不同的CNN和Transformer模型在Imagenet数据集上进行了转换实验。我们用$model-n/c$表示使用模型$model$进行转换，其中多阈值神经元具有正负各n个互为相反数的阈值。当$n=1$时，可视为一个带有负阈值的IF神经元，并且计算得到的阈值会乘以缩放系数$c$。我们采用逐通道的阈值。表\ref{cmp}展示了我们的方法与其他ANN-to-SNN转换方法的对比结果，详细测试结果见附录\ref{Acc_imagenet_all}。对于包括ResNet和VGG在内的CNN模型，当$n=1$时，我们的方法已经超越了同结构的模型；而当$n>1$时，我们通过极短的时间步长，成功地在SNN中实现了与ANN相当的性能。对于包括ViT在内的Transformer模型，由于threshold iteration阈值方法不适用于Transformer，并且简单的统计99.9\%激活值大小无法获得最优阈值，当$n=1$时也难以在较短时间内取得优异的结果。因此，我们将统计得到的阈值放大了4倍，并采用多阈值$n=8$进行转换。我们的方法同样无需任何训练，在极短的时间步长下便能实现极高的性能。特别是经过转换后的EVA02-L-8/8模型，能够达到高达xxx的准确率。
We conducted conversion experiments on 11 different CNNs and Transformers using the Imagenet dataset. We denote the converted model as $model-n/c$, where the multi-threshold neurons have $n$ positive and $n$ opposing negative thresholds, and the calculated channel-wise thresholds are scaled by a factor $c$. Eg., the ResNet34-4/2 model represents the conversion using the ResNet34 model, employing multi-threshold spiking neurons with 4 positive and 4 negative thresholds, and the actual thresholds are based on the statistical thresholds multiplied by a factor of 2.

When $n=1$, it can be treated as an IF neuron with an additional negative threshold. Table \ref{cmp} shows a comparison of our method with other ANN-to-SNN conversion methods, and detailed results can be found in Appendix \ref{Acc_imagenet_all}. 

In CNNs, when $n=1$, our method outperforms the existing methods on the same structure achieving state-of-the-art results; and when $n>1$, we achieve better performance with extremely shorter time-steps.

In Transformers, the threshold iteration method is not suitable, and using the top 99.9\% of activation values does not optimal thresholds. As a result, achieving high performance with $n=1$ in short time-steps is challenging.
Therefore, we scale the statistical thresholds by $c=4$ and setting $n=8$. Our method requires no training and achieves high performance in extremely short time-steps.

\subsection{Energy Estimation and Result Analysis}
%先写功耗估计的方法，然后展示功耗估计的结果
%我们使用公式\eqref{energy1}来估计转换后SNN相对于ANN的功耗比例，我们根据\cite{horowitz20141}设置$E_{\text{MAC}}=4.6pJ$和$E_{\text{AC}}=0.9pJ$。由于网络大部分计算集中在全连接层，卷积层和矩阵乘法层，这些层在SNN中可以大多由加法实现，所以网络中的SNN的加法数数远大于乘法数，$ACs_\text{SNN}>>MACs_\text{SNN}$，于是我们可以估计$MACs_\text{SNN}$为0，并通过统计脉冲发射率$\eta$来估计$\frac{ACs_\text{SNN}}{MACs_\text{ANN}}$进而估计SNN相对于转换前ANN的能耗。
% 表\ref{Acc-part}展示了部分结果，所有转换后的SNN模型的详细结果请参见附录\ref{Acc_imagenet_all}。
Based on \cite{horowitz20141}, we use Equation \eqref{energy1} to estimate the energy consumption ratio of the converted SNN relative to the ANN, with $E_{\text{MAC}}=4.6\text{pJ}$ and $E_{\text{AC}}=0.9\text{pJ}$.
\begin{align}\label{energy1}
    \frac{E_{\text{SNN}}}{E_{\text{ANN}}}=\frac{MACs_{\text{SNN}}*E_{\text{MAC}}+ACs_{\text{SNN}}*E_{\text{AC}}}{MACs_{\text{ANN}}*E_{\text{MAC}}}.
\end{align}
Since most computations in the network occur in the fully connected, convolutional, and matrix multiplication layers, which in SNNs are primarily implemented by additions (with $ACs_\text{SNN} >> MACs_\text{SNN}$), we approximate $MACs_\text{SNN}\approx0$. We then use the statistical spike emission rate $\eta$ to estimate $\frac{ACs_\text{SNN}}{MACs_\text{ANN}}$, thereby estimating the energy consumption of the SNN relative to the pre-conversion ANN.
Table \ref{Acc-part} presents partial results, and the detailed results for all converted SNN models can be found in Appendix \ref{Acc_imagenet_all}.

%对于包括ResNet和VGG在内的CNN模型，我们的方法能够在极短的时间步内以低功耗实现与ANN相当的SNN性能，尤其是在VGG16模型上，通过22%的功耗和仅0.08的准确率损失，达到了73.17%的准确率。从表中可以看出，我们的能耗并不会随着时间步的增加而线性增长，而是呈现出逐渐放缓的趋势，最终趋于稳定。

%对于包括ViT在内的Transformer模型，尽管我们的方法能够在极短的时间步内实现高准确率。然而，尽管我们的能耗增加速度逐渐放缓，仍然存在进一步优化的空间。我们认为这主要是因为在CNN模型中我们采用了最优阈值计算方法，而在Transformer模型上缺乏类似的优化方法，只能统计99%大小的激活值并乘以缩放倍数。阈值的设置不合理导致SNN无法快速有效地发放脉冲，进而无法快速拟合ANN的激活值并停止发放脉冲，这也是我们未来研究中的改进方向之一。
For CNNs, our method achieves SNN performance comparable to the ANN with low power consumption and extremely short time-steps. Notably, for the VGG16 model, it achieves an accuracy of 73.17\% with only a 0.08\% accuracy loss and 22\% power consumption.

For Transformers, although our method achieves high accuracy with extremely short time-steps and shows a decreasing energy consumption growth rate, there is still significant room for further optimization.
This is primarily due to the lack of an optimal threshold calculation method, which causes inefficient spike firing in the SNNs. This leads to larger errors when matching the ANN activation values, resulting in more premature spike emissions. This is an area we aim to improve in future research.

% \begin{table}[htb]
% \caption{Effective of the differential coding compared to the rate coding}
% \label{tab:cmp_diff}
% \vskip 0.1in
% \begin{center}
% \begin{small}
% \begin{tabular}{cccccc}
% \toprule
% {Model Config}&\multicolumn{5}{c}{Time-step $T$}\\
% \cmidrule(lr){2-6}
% {Acc/Energy}&2&4&8&12&16\\
% \midrule
%     \multicolumn{6}{c}{ResNet34-4/2, Param:21.8M, Acc:76.42\%, Differential Coding}\\
% \cmidrule(lr){1-6}
%     Acc & 46.10 & 69.53 & 75.78 & 76.25 & 76.33 \\
%     Energy ratio & 0.11 & 0.20 & 0.32 & 0.41 & 0.48 \\
% \midrule
%     \multicolumn{6}{c}{ResNet34-4/2, Param:21.8M, Acc:76.42\%, Rate Coding}\\
% \cmidrule(lr){1-6}
%     Acc & 51.34 & 71.22 & 75.11 & 75.62 & 75.78 \\
%     Energy ratio & 0.13 & 0.26 & 0.53 & 0.79 & 1.05 \\
% \midrule
%     \multicolumn{6}{c}{ViT-Small-8/4, Param:22.1M, Acc:81.38\%, Differential Coding}\\
% \cmidrule(lr){1-6}
    
%     Acc & 77.84 & 81.11 & 81.43 & 81.39 & 81.38 \\
%     Energy ratio & 0.32 & 0.62 & 1.05 & 1.39 & 1.71 \\
% \midrule
%     \multicolumn{6}{c}{ViT-Small-8/4, Param:22.1M, Acc:81.38\%, Rate Coding}\\
% \cmidrule(lr){1-6}
%     Acc & 75.64 & 80.29 & 81.18 & 81.34 & 81.36 \\
%     Energy ratio & 0.32 & 0.67 & 1.38 & 2.09 & 2.80 \\
% \bottomrule
% \end{tabular}
% \end{small}
% \end{center}
% \vskip -0.2in
% \end{table}

\begin{figure}[t]
\vskip 0.2in
\begin{center}
\centerline{\includegraphics[width=\columnwidth, trim=0.0cm 0.0cm 0.0cm 0.0cm, clip]{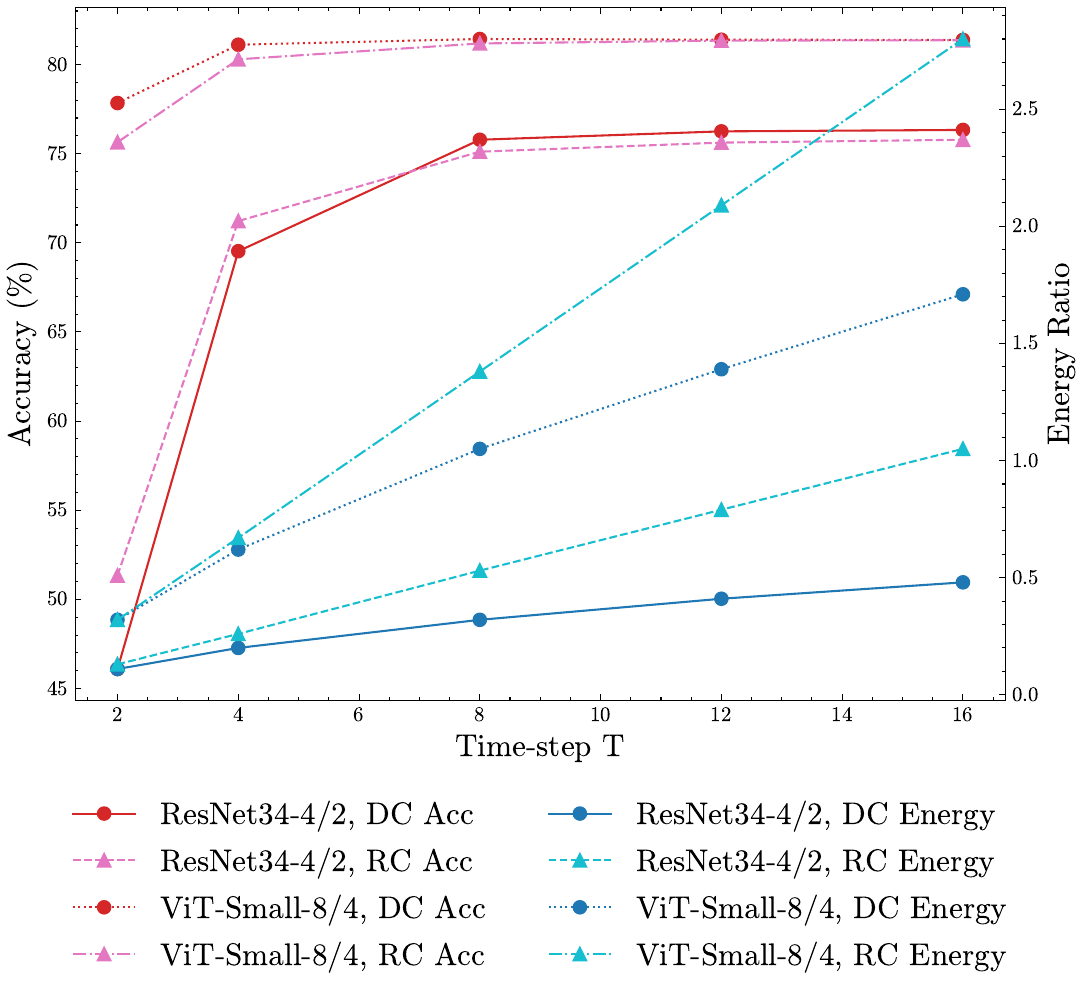}}
\caption{Effective of the differential coding compared to the rate coding. DC and RC represent Differential Coding and Rate Coding, respectively.}
\label{pic:cmp_diff}
\end{center}
\vskip -0.2in
\end{figure}

\begin{figure}[t]
\vskip 0.2in
\begin{center}
\centerline{\includegraphics[width=\columnwidth, trim=0.0cm 0.0cm 0.0cm 0.0cm, clip]{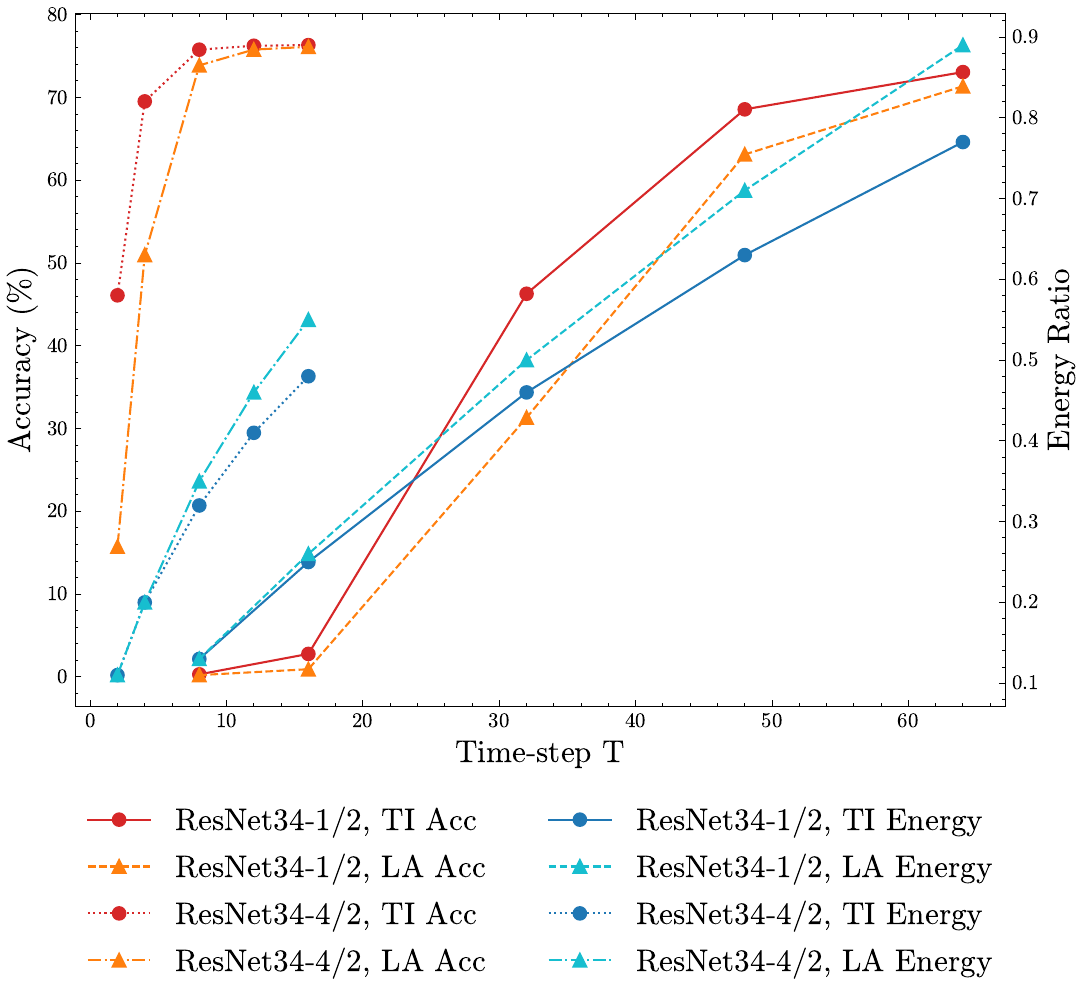}}
\caption{Effectiveness of the threshold iteration method. TI and LA represent Threshold Iteration Method and  99.9\% large activation method respectively.}
\label{pic:cmp_threshold}
\end{center}
\vskip -0.2in
\end{figure}
\subsection{Effectiveness of the Differential Coding}
% 为了验证差分编码的有效性，我们比较了在同一模型上使用差分编码和速率编码的性能。如表\ref{tab:cmp_diff}所示，详细结果请参见附录\ref{cmp_diff_all}。采用差分编码的模型不仅在准确率上优于速率编码，还在能耗上有所降低。这是因为差分编码通过直接基于历史编码结果更新当前时刻的编码值，避免了衰减，能够表示更广泛的范围，并且能够稳定提高表示精度。当表示精度达到一定水平后，模型不再发放脉冲。
To validate the effectiveness of the differential coding, we compared the performance of differential coding and rate coding using the same model. 
The partial visualization results are presented in Figure \ref{pic:cmp_diff}, with a more detailed table provided in Appendix \ref{cmp_diff_all}.
The model using differential coding not only outperforms the rate coding model in terms of accuracy, but also consumes less energy.
This is because differential coding directly updates the current encoding value based on previous results, avoiding decay. It can represent a broader range and steadily improve representation accuracy. Once the representation precision reaches a certain level, no further spikes are emitted.

% \begin{table}[htb]
% \caption{Effectiveness of the threshold iteration method compared to the 99.9\% large activation method}
% \label{tab:cmp_threshold}
% \vskip 0.15in
% \begin{center}
% \begin{small}
% \begin{tabular}{cccccc}
% \toprule
% {Model Config}&\multicolumn{5}{c}{Time-step $T$}\\
% \cmidrule(lr){2-6}
% {Acc/Energy}&8&16&32&48&64\\
% \midrule
%     \multicolumn{6}{c}{ResNet34-1/2, Param:21.8M, Acc:76.42\%, threshold iteration}\\
% \cmidrule(lr){1-6}
%     Acc & 0.31 & 2.78 & 46.29 & 68.57 & 73.07 \\
%     Energy ratio & 0.13 & 0.25 & 0.46 & 0.63 & 0.77 \\
% \midrule
%     \multicolumn{6}{c}{ResNet34-1/2, Param:21.8M, Acc:76.42\%, 99.9\% large}\\
% \cmidrule(lr){1-6}
%     Acc & 0.22 & 0.93 & 31.32 & 63.12 & 71.36 \\
%     Energy ratio & 0.13 & 0.26 & 0.50 & 0.71 & 0.89 \\
% \midrule
% {Model Config}&\multicolumn{5}{c}{Time-step $T$}\\
% \cmidrule(lr){2-6}
% {Acc/Energy}&2&4&8&12&16\\
% \midrule
%     \multicolumn{6}{c}{ResNet34-4/2, Param:21.8M, Acc:76.42\%, threshold iteration}\\
% \cmidrule(lr){1-6}
%     Acc & 46.1 & 69.53 & 75.78 & 76.25 & 76.33 \\
%     Energy ratio & 0.11 & 0.20 & 0.32 & 0.41 & 0.48 \\
% \midrule
%     \multicolumn{6}{c}{ResNet34-4/2, Param:21.8M, Acc:76.42\%, 99.9\% large}\\
% \cmidrule(lr){1-6}
%     Acc & 15.74 & 50.97 & 73.88 & 75.79 & 76.12 \\
%     Energy ratio & 0.11 & 0.20 & 0.35 & 0.46 & 0.55 \\
% \bottomrule
% \end{tabular}
% \end{small}
% \end{center}
% \vskip -0.14in
% \end{table}

\subsection{Effectiveness of Threshold Iteration Method}
% 为了验证$k_1$迭代方法获取阈值的有效性，我们在ResNet34中使用不同数量的阈值神经元，分别比较了采用$k_1$迭代方法和99.9%大激活值方法两种方式获取逐通道阈值后，转换得到的SNN性能。具体结果如表\ref{tab:cmp_diff}所示，详细数据请参见附录\ref{cmp_k1_all}。实验表明，采用$k_1$迭代方法计算得到的阈值在相同时间步长下，无论是准确率还是能耗，都优于直接通过统计得到的阈值，这证明了我们方法在阈值获取上的有效性。目前，$k_1$迭代方法主要局限于对CNN网络中ReLU函数的转换，未来的研究可以聚焦于在Transformer模型中寻找最优阈值，从而获得性能优异的脉冲Transformer模型。
To verify the effectiveness of the threshold iteration method, we compared the performance of the converted SNNs using two different methods, threshold iteration method and the top 99.9\% of activation method, with different numbers of threshold neurons in ResNet34. Here we set scale factor $c=2$ to prevent the accuracy from being too small when using the 99.9\% large activation method. 
The partial visualization results are presented in Figure \ref{pic:cmp_threshold}, and more information can be found in Appendix \ref{cmp_k1_all}.
The experimental results show that the thresholds derived using the threshold iteration method outperform those obtained through the 99.9\% large activation method, achieving better accuracy and lower energy consumption at each time-step.

\section{Conclusion}
% 在这篇文章中，我们提出了一种基于差分编码的无需训练的ANN到SNN转换方法。该方法使用脉冲编码差分信息而非直接编码速率信息，在提升网络准确率的同时降低了能耗。特别地，针对ReLU网络的转换，我们提出了一种寻找神经元最优阈值的方法，显著提升了网络性能。
% 然而，我们的方法也存在一些局限性。差分编码需要脉冲神经元有至少一个负阈值来生成负脉冲进行纠错，否则一旦出现过多的脉冲错误就会持续累积误差。此外，我们尚未找到在Transformer架构中确定最优阈值的方法，这导致我们在Transformer模型上的测试结果仍有改进空间。未来的研究可以着重探讨这一方向。
This article introduces a training-free ANN-to-SNN conversion method based on differential coding. Instead of directly encoding rate information, it uses spikes to encode differential information, improving both network accuracy and energy efficiency. For ReLU conversions, it includes a threshold iteration method to find the optimal thresholds, which further enhances the network performance.

However, the proposed method also has some limitations. Differential coding requires spiking neurons to have at least one negative threshold to generate negative spikes for error correction; otherwise, excessive spike errors will accumulate continuously. Meanwhile, we have not develop a method to determine the optimal thresholds for Transformers, which limits the conversion performance on Transformers. Future research could focus on addressing this challenge.

% In the unusual situation where you want a paper to appear in the
% references without citing it in the main text, use \nocite
% \nocite{langley00}
\clearpage
\section*{Acknowledgments}
This work was supported by STI 2030-Major Projects 2021ZD0200300, the National Natural Science Foundation of China (62422601, U24B20140, and  62088102), Beijing Municipal Science and Technology Program (Z241100004224004), Beijing Nova Program (20230484362, 20240484703), and National Key Laboratory for Multimedia Information Processing.

\section*{Impact Statement}
This paper presents work whose goal is to advance the field of Machine Learning. There are many potential societal consequences of our work, none which we feel must be specifically highlighted here.

\bibliography{example_paper}
\bibliographystyle{icml2025}

%%%%%%%%%%%%%%%%%%%%%%%%%%%%%%%%%%%%%%%%%%%%%%%%%%%%%%%%%%%%%%%%%%%%%%%%%%%%%%%
%%%%%%%%%%%%%%%%%%%%%%%%%%%%%%%%%%%%%%%%%%%%%%%%%%%%%%%%%%%%%%%%%%%%%%%%%%%%%%%
% APPENDIX
%%%%%%%%%%%%%%%%%%%%%%%%%%%%%%%%%%%%%%%%%%%%%%%%%%%%%%%%%%%%%%%%%%%%%%%%%%%%%%%
%%%%%%%%%%%%%%%%%%%%%%%%%%%%%%%%%%%%%%%%%%%%%%%%%%%%%%%%%%%%%%%%%%%%%%%%%%%%%%%
\newpage
\appendix
\onecolumn

\section{Overall Algorithm}\label{overall_algorithnm}
Algorithm \ref{alg:all} outlines the whole procedures we adopt.
%感觉这个算法要不还是留着，我自己看转换的时候感觉直接看总结的算法会感觉比较明了，可以快速理解。
\begin{algorithm}[h]
\caption{Differential Coding with Graded Units and Spiking Neurons (DCGS) Conversion Method}
\label{alg:all}
\begin{algorithmic}[1]
   \STATE \textbf{Input:} Pre-trained ANN model $F_\text{ANN}(\bm{W})$, Dataset $\bm{D}$. Time-step $T$, or Threshold percentage $p$ and scaling factor $c$.
   \STATE \textbf{Output:} Converted SNN model $F_\text{SNN}(\bm{W}, \bm{\theta}, \bm{v})$
   \STATE \textbf{Step 1: Determine the Threshold:}
       \IF{$F_\text{ANN}(\bm{W})$ is a ReLU network}
       \STATE Use the threshold iteration method with $T$ to calculate threshold $\bm{\theta}$ on dataset $\bm{D}$
       \ELSE 
       \STATE Static threshold $\bm{\theta}$ as the top $p\%$ of activation values on dataset $\bm{D}$, and multiply by the scaling factor $c$
       \ENDIF
   \STATE \textbf{Step 2: Replace Modules:}
   \STATE Replace the nonlinear layer with a differential graded unit.
   \STATE Insert a differential identity spiking neuron before each linear layer.
   \STATE Remove the bias $\bm{b}$ from the linear layer and set the initial potential $\bm{v} = \bm{b}$ for the next layer.
   \STATE \textbf{Return} the converted SNN model $F_\text{SNN}(\bm{W}, \bm{\theta}, \bm{v})$
\end{algorithmic}
\end{algorithm}

\section{Explanation of Definition \ref{def:diff_coding}}\label{proof_def:diff_coding}
\begin{definition}\textbf{(Repeated from Definition \ref{def:diff_coding})}\label{def:diff_coding_a}
In differential coding, the encoded activation value \(\bm{r}^l[t]\) is defined as shown in Equation \eqref{diff_coding2}, where \(\bm{e}^{l}[t]\) represents the encoded output value of the neuron at time-step \(t\), and \( \bm{x}^l[t]\) represents the actual output value of the neuron. The relationship between the two is expressed by Equation \eqref{diff_coding1}, as follows:
\begin{align}
&\bm{e}^{l}[t] = \bm{r}^l[t-1] +  \bm{x}^l[t], \label{diff_coding1a} \\
&\bm{r}^l[t]= \bm{r}^{l}[t-1] + \frac{ \bm{x}^l[t]}{t}=\frac{1}{t}\sum_{i=1}^{t}\bm{e}^{l}[i], \label{diff_coding2a}
\end{align}
where \( t \) starts from $1$, $\bm{r}^{l}[0]=0$.
\end{definition}
\begin{proof}
In this definition, \(\bm{e}^l[t]\) is essentially adjusted based on the historical encoded values. If no spike is emitted, then \(\bm{e}^l[t] = \bm{r}^l[t-1]\), also ensuring that the encoded value \(\bm{r}^l[t] = \bm{r}^l[t-1]\). The derivation of Equation \eqref{diff_coding2a} can be written as:
\begin{align}
\begin{aligned}
\bm{r}^l[t]&=\bm{r}^{l}[t-1] + \frac{ \bm{x}^l[t]}{t}
\\&=\frac{\bm{r}^{l}[t-1]}{t} + \frac{ \bm{x}^l[t]}{t} + \frac{t-1}{t}\bm{r}^{l}[t-1]
\\&=\frac{\bm{e}^{l}[t]}{t}+ \frac{t-1}{t}\bm{r}^{l}[t-1]
\\&=\frac{\bm{e}^{l}[t]}{t}+ \frac{\bm{e}^{l}[t-1]}{t-1} +\frac{t-2}{t-1}\bm{r}^{l}[t-2]
\\&=\ ...
\\&=\frac{1}{t}\sum_{i=1}^{t}\bm{e}^{l}[i]+\frac{0}{1}\bm{r}^l[0]
\\&=\frac{1}{t}\sum_{i=1}^{t}\bm{e}^{l}[i]
\end{aligned}
\end{align}
\end{proof}

\section{Proof of Theorem \ref{thm:diff_graded1}}\label{proof_thm:diff_graded1}
\begin{theorem}\textbf{(Repeated from Theorem \ref{thm:diff_graded1})}
Let \( F^l \) be a nonlinear layer \( l \) with only one input $\bm{x}^{l-1}[t]$, such as Gelu, Silu, Maxpool, LayerNorm, or Softmax. In ANN-to-SNN conversion, the mapping from \( F \) to dynamics of the differential graded unit in differential coding is given by Equations \eqref{dgs1} and \eqref{dgs2}.
\begin{align}
&\bm{m}^{l}[t]=\bm{r}^{l-1}[t]= \bm{m}^{l}[t-1]+\frac{ \bm{x}^{l-1}[t]}{t},\label{dgs1a}
\\& \bm{x}^l[t]=t*(F^l(\bm{m}^{l}[t])-F^l(\bm{m}^{l}[t-1])),\label{dgs2a}
\end{align}
where \( \bm{m}^l[t] \) is the membrane potential at time-step $t$ which is equal to the encoded input value, \( \bm{r}^l[t] \) is the encoded output activation value of the previous \( t \) time-steps. The output of layer \( l \) at time-step \( t \), which serves as the input to layer \( l+1 \), is given by \(  \bm{x}^l[t] \).
\end{theorem}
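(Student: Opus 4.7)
The plan is to prove the two equations separately, treating the first as a bookkeeping claim that identifies the membrane potential with the encoded input, and the second as the consequence of requiring the output encoding to track $F^l$ applied to that membrane potential.

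First I would establish the identity $\bm{m}^{l}[t] = \bm{r}^{l-1}[t]$ by induction on $t$. The base case $t=0$ holds by the initialization convention $\bm{r}^{l-1}[0] = 0$ (or $\bm{b}^{l-1}$ when a bias is absorbed, as stated in the theorem). For the inductive step, Definition \ref{def:diff_coding} applied to layer $l-1$ gives $\bm{r}^{l-1}[t] = \bm{r}^{l-1}[t-1] + \bm{x}^{l-1}[t]/t$, which is exactly the recurrence declared in \eqref{dgs1a}, so by the inductive hypothesis $\bm{m}^{l}[t] = \bm{m}^{l}[t-1] + \bm{x}^{l-1}[t]/t = \bm{r}^{l-1}[t-1] + \bm{x}^{l-1}[t]/t = \bm{r}^{l-1}[t]$.

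Next I would derive \eqref{dgs2a} by demanding the faithful-coding property $\bm{r}^{l}[t] = F^{l}(\bm{r}^{l-1}[t])$, i.e., the encoded activation on the output side of the nonlinear layer equals $F^l$ applied to the encoded activation on the input side. Applying Definition \ref{def:diff_coding} to layer $l$ gives $\bm{r}^{l}[t] = \bm{r}^{l}[t-1] + \bm{x}^{l}[t]/t$, so solving for $\bm{x}^{l}[t]$ and substituting the desired identity yields
\begin{align}
\bm{x}^{l}[t] = t\bigl(\bm{r}^{l}[t] - \bm{r}^{l}[t-1]\bigr) = t\bigl(F^{l}(\bm{r}^{l-1}[t]) - F^{l}(\bm{r}^{l-1}[t-1])\bigr),
\end{align}
and a substitution via the first part converts $\bm{r}^{l-1}[\cdot]$ into $\bm{m}^{l}[\cdot]$, recovering \eqref{dgs2a}. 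A brief induction on $l$ then closes the argument: if every preceding layer already satisfies the faithful-coding property, the construction above forces layer $l$ to do so as well.

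The main obstacle is not algebraic but conceptual: the theorem is a \emph{definition} of a dynamics disguised as a correctness statement, so the proof really has to identify which invariant is being preserved and check it is consistent with Definition \ref{def:diff_coding}. The only delicate point is the base case and the bias convention at $t=0$ (the membrane potential is $\bm{b}^{l-1}$ when the previous layer carries a bias, otherwise $0$), which must be handled so that the telescoping sum $\sum_{i=1}^{t} \bm{x}^{l-1}[i]/i$ genuinely equals $\bm{r}^{l-1}[t]$ including the constant offset; once this bookkeeping is fixed, the rest of the argument is a one-line substitution.
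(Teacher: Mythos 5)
Your proposal is correct and follows essentially the same route as the paper: both establish $\bm{m}^l[t]=\bm{r}^{l-1}[t]$ via the recurrence from Definition \ref{def:diff_coding}, then invert $\bm{r}^l[t]=\bm{r}^l[t-1]+\bm{x}^l[t]/t$ and substitute the faithful-coding identity $\bm{r}^l[t]=F^l(\bm{r}^{l-1}[t])$ to obtain $\bm{x}^l[t]=t\bigl(F^l(\bm{m}^l[t])-F^l(\bm{m}^l[t-1])\bigr)$. Your explicit inductions on $t$ and $l$, and the remark about the bias/$t=0$ convention, simply spell out what the paper's two-line chain of equalities leaves implicit.
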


\begin{proof}
\begin{align}
&\bm{m}^{l}[t]=\bm{r}^{l-1}[t]= \bm{r}^{l-1}[t-1]+\frac{ \bm{x}^{l-1}[t]}{t}= \bm{m}^{l}[t-1]+\frac{ \bm{x}^{l-1}[t]}{t}
\\& \bm{x}^l[t]=t*(\bm{r}^l[t]-\bm{r}^l[t-1])=t*(F(\bm{r}^{l-1}[t])-F(\bm{r}^{l-1}[t-1]))=t*(F(\bm{m}^{l}[t])-F(\bm{m}^{l}[t-1]))
\end{align}
\end{proof}
From Theorem \ref{thm:diff_graded1}, a single-input unit requires two variables: one to record $\bm{m}^{l}[t]$ and another to record $F(\bm{m}^{l}[t])$, in order to reduce redundant calculations at each time-step.

\section{Proof of Theorem \ref{thm:diff_graded2}}\label{proof_thm:diff_graded2}
\begin{theorem}\textbf{(Repeated from Theorem \ref{thm:diff_graded2})}
Let \( \cdot \) be an operation with two inputs, such as matrix multiplication or element-wise multiplication. In ANN-to-SNN conversion, the mapping from operation \( \cdot \) to dynamics of the differential graded unit in differential coding is given by Equations \eqref{dgs3a} to \eqref{dgs5a}.
\begin{align}
&\bm{m}_A^{l}[t]=\bm{r}^{l-1}_A[t]= \bm{m}_A^{l}[t-1]+\frac{ \bm{x}_A^{l-1}[t]}{t},\label{dgs3a}
\\&\bm{m}_B^{l}[t]=\bm{r}^{l-1}_B[t]= \bm{m}_B^{l}[t-1]+\frac{ \bm{x}_B^{l-1}[t]}{t},\label{dgs4a}
\\&\scalebox{0.9}{$
\begin{aligned}\label{dgs5a}
 \bm{x}^l[t]=\frac{ \bm{x}_A^{l-1}[t]\cdot  \bm{x}_B^{l-1}[t]}{t}+ \bm{x}_A^{l-1}[t]\cdot \bm{m}_B^{l}[t]+\bm{m}_A^{l}[t]\cdot  \bm{x}_B^{l-1}[t],
\end{aligned}$}
\end{align}
where \( \bm{m}_A^l[t] \) and \( \bm{m}_B^l[t] \) are membrane potential at time-step \( t \), and \( \bm{r}_A^{l-1}[t] \) and \( \bm{r}_B^{l-1}[t] \) are the encoded activation values of the previous layers at time-step \( t \). The output of layer \( l \) at time-step \( t \), which serves as the input to layer \( l+1 \), is given by \(  \bm{x}^l[t] \).
\end{theorem}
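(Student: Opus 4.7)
The plan is to prove correctness of the dynamics by showing, via induction on the time step $t$, that the encoded output satisfies the invariant $\bm{r}^l[t] = \bm{r}^{l-1}_A[t] \cdot \bm{r}^{l-1}_B[t]$, which is precisely the statement that the unit implements the operation $\cdot$ on the (previously) encoded inputs. The base case $t=0$ is trivial since $\bm{r}^l[0] = \bm{r}^{l-1}_A[0] = \bm{r}^{l-1}_B[0] = 0$ by the convention in Definition \ref{def:diff_coding}, so the heart of the argument is the inductive step.

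For the inductive step, I would first invoke the differential coding recursion from Definition \ref{def:diff_coding}, which rearranges to $\bm{x}^l[t] = t\bigl(\bm{r}^l[t] - \bm{r}^l[t-1]\bigr)$. Given the induction hypothesis $\bm{r}^l[t-1] = \bm{r}^{l-1}_A[t-1] \cdot \bm{r}^{l-1}_B[t-1]$, it suffices to verify the algebraic identity
\begin{align*}
\bm{x}^l[t] \;=\; t\bigl(\bm{r}^{l-1}_A[t]\, \bm{r}^{l-1}_B[t] - \bm{r}^{l-1}_A[t-1]\, \bm{r}^{l-1}_B[t-1]\bigr).
\end{align*}
Then I would substitute $\bm{r}^{l-1}_A[t] = \bm{r}^{l-1}_A[t-1] + \bm{x}^{l-1}_A[t]/t$ and the analogous relation for $B$, expand the product bilinearly, cancel the common $\bm{r}^{l-1}_A[t-1]\,\bm{r}^{l-1}_B[t-1]$ term, and multiply through by $t$. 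What remains are exactly three pieces: a symmetric cross-term involving $\bm{x}^{l-1}_A[t]$ and the $B$-side membrane, its mirror image, and a quadratic remainder $\bm{x}^{l-1}_A[t]\,\bm{x}^{l-1}_B[t]/t$. Using the identifications $\bm{m}^l_A = \bm{r}^{l-1}_A$ and $\bm{m}^l_B = \bm{r}^{l-1}_B$ from \eqref{dgs3a}-\eqref{dgs4a}, these three terms line up with the three summands of \eqref{dgs5a}.

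The main obstacle, and the only place where care is needed, is correctly matching the membrane-potential snapshots in the bilinear expansion with the $\bm{m}^l_A[t]$ and $\bm{m}^l_B[t]$ symbols appearing in \eqref{dgs5a}. Unlike the single-input case of Theorem \ref{thm:diff_graded1}, where a first-order telescoping sufficed, the product structure forces a genuine quadratic cross-term $\bm{x}^{l-1}_A[t]\bm{x}^{l-1}_B[t]/t$ that has no analogue in the single-input setting; one must be explicit about whether the linear terms use the pre-update or post-update membrane so that no cross-term is double-counted. Once the bookkeeping is fixed, the identity is a direct expansion and the induction closes. The argument uses only the bilinearity of $\cdot$, so it applies verbatim both to element-wise multiplication (scalar bilinearity entry-wise) and to matrix multiplication (bilinearity in the matrix arguments), establishing the claim in full generality.
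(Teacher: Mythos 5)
Your approach is the right one and matches the paper's own proof in substance: the paper also writes $\bm{x}^l[t] = t(\bm{r}^l[t]-\bm{r}^l[t-1])$, takes the target invariant $\bm{r}^l[t]=\bm{m}_A^l[t]\cdot\bm{m}_B^l[t]$ (your induction hypothesis made implicit), substitutes the membrane recursions, and expands the bilinear product. Framing it as an explicit induction over $t$ is a presentational improvement rather than a different method.

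However, you explicitly flag the one place where care is needed---``whether the linear terms use the pre-update or post-update membrane so that no cross-term is double-counted''---and then assert, without actually carrying the expansion through, that ``these three terms line up with the three summands of \eqref{dgs5a}.'' They do not. Writing $a=\bm{m}_A^l[t-1]$, $b=\bm{m}_B^l[t-1]$, the expansion gives
\[
t\bigl((a+\tfrac{\bm{x}_A^{l-1}[t]}{t})(b+\tfrac{\bm{x}_B^{l-1}[t]}{t})-ab\bigr)
= a\,\bm{x}_B^{l-1}[t] + \bm{x}_A^{l-1}[t]\,b + \tfrac{\bm{x}_A^{l-1}[t]\,\bm{x}_B^{l-1}[t]}{t},
\]
i.e.\ the linear terms come out with the \emph{pre-update} snapshots $\bm{m}_A^l[t-1]$, $\bm{m}_B^l[t-1]$. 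This is exactly what the paper's own proof writes in its last line. The displayed Equation~\eqref{dgs5a} instead shows $\bm{m}_A^l[t]$ and $\bm{m}_B^l[t]$ together with a $+\bm{x}_A^{l-1}[t]\cdot\bm{x}_B^{l-1}[t]/t$ term; as you can verify, that combination overcounts the cross term by $2\bm{x}_A^{l-1}[t]\cdot\bm{x}_B^{l-1}[t]/t$. (With post-update membranes the quadratic remainder would have to carry a minus sign, or one membrane should be post-update and the other pre-update with no quadratic term at all.) So the subtlety you raised is not merely bookkeeping to be ``fixed'': resolving it is what determines the correct time index, and your proposal should have completed the expansion rather than asserting agreement with the stated formula.

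Concretely, to tighten your argument: after cancelling $\bm{r}_A^{l-1}[t-1]\bm{r}_B^{l-1}[t-1]$ and multiplying by $t$, state explicitly that the surviving cross terms carry the $[t-1]$ snapshot, and only then identify $\bm{m}^l_\bullet[t-1]=\bm{r}^{l-1}_\bullet[t-1]$. At that point the derivation matches the proof body in the appendix, and the discrepancy with the $[t]$ indices in the theorem display becomes visible rather than glossed over.
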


\begin{proof}
\begin{align}
&\bm{m}_A^{l}[t]=\bm{r}^{l-1}_A[t]= \bm{r}_A^{l-1}[t-1]+\frac{ \bm{x}_A^{l-1}[t]}{t}= \bm{m}_A^{l}[t-1]+\frac{ \bm{x}_A^{l-1}[t]}{t}
\\&\bm{m}_B^{l}[t]=\bm{r}^{l-1}_B[t]= \bm{r}_B^{l-1}[t-1]+\frac{ \bm{x}_B^{l-1}[t]}{t}= \bm{m}_B^{l}[t-1]+\frac{ \bm{x}_B^{l-1}[t]}{t}
\\&\begin{aligned}
 \bm{x}^l[t]&=t*(\bm{r}^l[t]-\bm{r}^l[t-1])
\\&=t*(\bm{m}_A^{l}[t]\cdot \bm{m}_B^{l}[t]-\bm{m}_A^{l}[t-1]\cdot \bm{m}_B^{l}[t-1])
\\&=t*((\bm{m}_A^{l}[t-1]+\frac{ \bm{x}_A^{l-1}[t]}{t})\cdot (\bm{m}_B^{l}[t-1]+\frac{ \bm{x}_B^{l-1}[t]}{t})-\bm{m}_A^{l}[t-1]\cdot \bm{m}_B^{l}[t-1])
\\&=\frac{ \bm{x}_A^{l-1}[t]\cdot  \bm{x}_B^{l-1}[t]}{t}+ \bm{x}_A^{l-1}[t]\cdot \bm{m}_B^{l}[t-1]+\bm{m}_A^{l}[t-1]\cdot  \bm{x}_B^{l-1}[t]
\end{aligned}
\end{align}
\end{proof}
From Theorem \ref{thm:diff_graded2}, a neuron with two inputs requires two variables to record $\bm{m}_A^{l}[t]$ and $\bm{m}_B^{l}[t]$, respectively.

\section{Proof of Theorem \ref{thm:spike_diff}}\label{proof_thm:spike_diff}
\begin{theorem}\textbf{(Repeated from Theorem \ref{thm:spike_diff})}
In rate coding, the output of the previous layer, $ \bm{x}^{l-1}[t]$, is directly used as the input current for the current layer $\bm{I}^l[t] =  \bm{x}^{l-1}[t]$.
In differential coding, the input current $\bm{I}^l[t]$ can be adjusted as shown in Equation \eqref{diff_current_a}, which converts any spiking neuron into a differential spiking neuron:
\begin{align}\label{diff_current_a}
&\bm{I}^l[t] = \bm{m_r}^l[t]+ \bm{x}^{l-1}[t],
\\&\bm{m_r}^l[t+1] = \bm{m_r}^l[t]+\frac{ \bm{x}^{l-1}[t]}{t}-\frac{ \bm{x}^l[t]}{t},
\end{align}
where $\bm{m_r}^l[0]$ is $\bm{b}^{l-1}$ if the previous layer has bias else $0$.
\end{theorem}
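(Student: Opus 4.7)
The plan is to verify the theorem by algebraic substitution. I will substitute the modified input current $\bm{I}^l[t] = \bm{m_r}^l[t] + \bm{x}^{l-1}[t]$ into the standard spiking-neuron dynamics (Equations \eqref{mth1}--\eqref{mth4}) and then analyze the resulting output under differential coding (Definition \ref{def:diff_coding}). The goal is to reach an analog of the rate-coding identity $\bm{r}^l[t] = \bm{r}^{l-1}[t] - (\bm{v}^l[t]-\bm{v}^l[0])/t$ that was derived just before the Method section, so that the same ``if the accumulated residual stays bounded, the identity mapping is preserved'' argument can be repeated for differential coding.

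First I would telescope the update rule for the auxiliary potential to obtain $\bm{m_r}^l[t+1] - \bm{m_r}^l[0] = \sum_{i=1}^{t}(\bm{x}^{l-1}[i]-\bm{x}^l[i])/i$, taking care to treat the first step so that the bias initialization $\bm{m_r}^l[0]$ enters cleanly. From the unrolled form of Definition \ref{def:diff_coding} one has $\bm{r}^l[t] = \sum_{i=1}^{t}\bm{x}^l[i]/i$ and likewise for $\bm{r}^{l-1}[t]$, so the telescoped expression rearranges immediately to $\bm{r}^l[t] = \bm{r}^{l-1}[t] - (\bm{m_r}^l[t+1] - \bm{m_r}^l[0])$. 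This is the exact differential-coding counterpart of the rate-coding identity: the role played by $(\bm{v}^l[t]-\bm{v}^l[0])/t$ in rate coding is taken over by the drift $\bm{m_r}^l[t+1]-\bm{m_r}^l[0]$ of the auxiliary potential.

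Second, I would argue that the feedback structure of the modified current controls this drift so that the approximation $\bm{r}^l[t]\approx\bm{r}^{l-1}[t]$ stays tight. Because $\bm{m_r}^l[t]$ is summed directly into the input current, any accumulated discrepancy between $\bm{r}^{l-1}$ and $\bm{r}^l$ is fed back into the membrane potential $\bm{m}^l[t]$, pushing it across a threshold whenever necessary; the resulting spike contributes $-\bm{x}^l[t]/t$ to the next update of $\bm{m_r}^l$, which cancels the excursion and keeps $|\bm{m_r}^l[t+1]-\bm{m_r}^l[0]|$ within one threshold band. The bias choice $\bm{m_r}^l[0] = \bm{b}^{l-1}$ simply absorbs the constant offset that Theorem \ref{thm:linear} removes from the preceding affine layer, so that the two differentially encoded streams are aligned from $t=0$ onward.

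The main obstacle I anticipate is the bookkeeping of the first time-step: the update rule for $\bm{m_r}^l$ contains a $1/t$ factor, so the transition from $\bm{m_r}^l[0]$ into the recursion has to be handled carefully in order for the telescoped identity to be valid and for the bias offset to land in the correct place. Beyond that initial-condition subtlety, the argument reduces to a pair of telescoping sums whose $1/i$ weighting is exactly the weighting inherent in differential coding, and the claimed identity drops out by direct comparison, independently of the specific firing rule of the underlying neuron --- which is precisely why the construction works for \emph{any} spiking neuron and not only the MT neuron of Section \ref{sec:mtn}.
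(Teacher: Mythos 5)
Your proposal is correct and lands on essentially the same argument as the paper, just presented in telescoped rather than recursive form. The paper's proof simply posits $\bm{m_r}^l[t]=\bm{r}^{l-1}[t-1]-\bm{r}^l[t-1]$ (the accumulated residual between input and output encodings), checks that $\bm{I}^l[t]=\bm{m_r}^l[t]+\bm{x}^{l-1}[t]$ is the right current given that semantics, and then differences two copies of Equation~\eqref{diff_coding2} to obtain the one-step update for $\bm{m_r}^l$; you instead take the update rule as given, telescope it to get $\bm{m_r}^l[t+1]-\bm{m_r}^l[0]=\bm{r}^{l-1}[t]-\bm{r}^l[t]$ (modulo the bias offset you correctly flag), and read off the same identity. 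One direction derives the recursion from the intended semantics, the other recovers the semantics from the recursion; both are one-line algebra once the invariant is named, so neither route buys substantially more than the other. Your closing paragraph about the feedback keeping $|\bm{m_r}^l[t+1]-\bm{m_r}^l[0]|$ within one threshold band is a reasonable intuition for why the scheme converges, but it is heuristic and goes beyond what the theorem (and the paper's proof) actually asserts, which is only the algebraic equivalence of the two current formulations; you should not present it as part of the proof itself.
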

\begin{proof}
Let the expected input encoding value of the differential neuron at the $l$-th layer at time-step $t$ be $\bm{r}^{l-1}[t]$, and the expected output encoding value be $\bm{r}^l[t]$.
Due to soft resetting, the total expected membrane potential change is $\bm{m_r}^l[t] = \bm{r}^{l-1}[t-1] - \bm{r}^l[t-1]$.
The total input current is then:
\begin{align}
\bm{I}^l[t] = \bm{r}^{l-1}[t-1]-\bm{r}^l[t-1]+ \bm{x}^{l-1}[t] = \bm{m_r}^l[t]+ \bm{x}^{l-1}[t] 
\end{align}
Since by Definition \ref{def:diff_coding}:
\begin{align}
&\bm{r}^{l-1}[t] = \bm{r}^{l-1}[t-1]+\frac{ \bm{x}^{l-1}[t]}{t}
\\&\bm{r}^{l}[t] = \bm{r}^{l}[t-1]+\frac{ \bm{x}^l[t]}{t}
\end{align}
we have:
\begin{align}
\bm{m_r}^l[t+1] &=\bm{r}^{l-1}[t]-\bm{r}^l[t]
\\&= \bm{r}^{l-1}[t-1]+\frac{ \bm{x}^{l-1}[t]}{t} - \bm{r}^{l}[t-1]-\frac{ \bm{x}^l[t]}{t}
\\&= \bm{m_r}^l[t]+\frac{ \bm{x}^{l-1}[t]}{t}-\frac{ \bm{x}^l[t]}{t}.
\end{align}
\end{proof}

\section{Proof of Theorem \ref{thm:linear}}\label{proof_thm:linear}
\begin{theorem}\textbf{(Repeated from Theorem \ref{thm:linear})}
For linear layers, including linear and convolutional layers that can be represented by Equation \eqref{lineara},
\begin{align}\label{lineara}
 \bm{x}^l = \bm{W}^l  \bm{x}^{l-1} + \bm{b}^l,
\end{align}
where $\bm{W}^l$ and $\bm{b}^l$ is the weight and bias of layer $l$.

Under differential coding, this is equivalent to eliminating the bias term $\bm{b}^l$ by initializing the membrane potential of the subsequent layer with the bias value or adding the bias to the first input current of that layer, and then running the dynamic process at each time-step according to the following equation:
\begin{align}\label{linear_diffa}
 \bm{x}^l[t] &= \bm{W}^l  \bm{x}^{l-1}[t]
\end{align}
\end{theorem}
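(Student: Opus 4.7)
The plan is to leverage the linearity of the differential-coding time-average from Definition~\ref{def:diff_coding_a} and then verify that initializing the membrane potential of the next module with $\bm{b}^l$ restores exactly the dropped bias contribution. The structure is short: the linear part is algebraic, and the bias compensation is checked separately for each type of successor module (differential graded unit or differential spiking neuron).

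First, I would assume inductively that the encoded input satisfies $\bm{r}^{l-1}[t]=\bm{\alpha}^{l-1}$ (in the ideal limit) and simply apply the proposed dynamic $\bm{x}^l[t]=\bm{W}^l\bm{x}^{l-1}[t]$ at each time-step. By linearity of summation we have
\begin{equation*}
\bm{r}^l[t]=\frac{1}{t}\sum_{i=1}^{t}\bm{x}^l[i]=\frac{1}{t}\sum_{i=1}^{t}\bm{W}^l\bm{x}^{l-1}[i]=\bm{W}^l\bm{r}^{l-1}[t].
\end{equation*}
So running the linear layer bias-free at every time-step transports the encoded activation through $\bm{W}^l$ but leaves the offset $\bm{b}^l$ missing.

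Second, I would show that seeding the membrane potential of the successor module with $\bm{b}^l$ exactly reinstates this offset. For a differential graded unit (Theorem~\ref{thm:diff_graded1}), setting $\bm{m}^{l+1}[0]=\bm{b}^l$ and unrolling Equation~\eqref{dgs1a} gives
\begin{equation*}
\bm{m}^{l+1}[t]=\bm{b}^l+\sum_{i=1}^{t}\frac{\bm{x}^l[i]}{t}=\bm{b}^l+\bm{W}^l\bm{r}^{l-1}[t],
\end{equation*}
which matches the ANN pre-activation $\bm{W}^l\bm{x}^{l-1}+\bm{b}^l$ fed into $F^{l+1}$. For a differential spiking neuron (Theorem~\ref{thm:spike_diff}), setting $\bm{m_r}^{l+1}[0]=\bm{b}^l$ shifts the input current so that the target encoded value of that neuron becomes $\bm{W}^l\bm{r}^{l-1}[t]+\bm{b}^l$ as well; the verification is by the same telescoping argument applied to the recursion for $\bm{m_r}^{l+1}[t]$.

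The main obstacle, though mild, will be handling the time-step $t=1$ carefully and showing that the initialization does not get absorbed into the recursion incorrectly (the $\bm{b}^l$ term must appear in $\bm{m}^{l+1}[t]$ at every $t\geq 1$, not just at $t=0$). Since the bias is a constant and the update rule only adds a $1/t$-weighted input increment, induction on $t$ closes this gap immediately. An alternative equivalent realization, also worth noting, is to inject $\bm{b}^l$ into the first input current of the successor (i.e.\ add it to $\bm{x}^l[1]$) rather than into the initial membrane potential; the telescoping sum then yields the same offset, establishing the equivalence claimed in the theorem.
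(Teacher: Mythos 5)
Your strategy is sound and, at bottom, the same telescoping argument the paper uses, just run in the opposite direction: the paper starts from the bias-carrying relation $\bm{e}^l[t]=\bm{W}^l\bm{e}^{l-1}[t]+\bm{b}^l$ and shows the bias cancels when one forms $\bm{x}^l[t]=\bm{e}^l[t]-\bm{r}^l[t-1]$ for $t>1$ (with the $t=1$ case absorbed by setting $\bm{r}^l[0]=\bm{b}^l$), whereas you start from the bias-free dynamics, show the encoded value is transported by $\bm{W}^l$, and then patch in the missing constant offset via the successor module's initial state. Your decomposition into ``linear transport'' and ``bias compensation'' is arguably a cleaner way to organize the same content, and your observation that the two alternatives (initial membrane potential vs.\ injection into the first input current) are interchangeable is a nice addition.

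However, there is a recurring formula slip that you should fix. You write
\begin{equation*}
\bm{r}^l[t]=\frac{1}{t}\sum_{i=1}^{t}\bm{x}^l[i]
\qquad\text{and}\qquad
\bm{m}^{l+1}[t]=\bm{b}^l+\sum_{i=1}^{t}\frac{\bm{x}^l[i]}{t}.
\end{equation*}
These are the \emph{rate-coding} expressions, not the differential-coding ones. Under Definition~\ref{def:diff_coding_a}, $\bm{r}^l[t]$ is the time-average of $\bm{e}^l[i]$, not of $\bm{x}^l[i]$; unrolling $\bm{r}^l[t]=\bm{r}^l[t-1]+\bm{x}^l[t]/t$ from $\bm{r}^l[0]=0$ gives $\bm{r}^l[t]=\sum_{i=1}^t \bm{x}^l[i]/i$, with the weight $1/i$ attached to the $i$-th increment rather than the uniform weight $1/t$, and the same $1/i$ weighting appears when unrolling Equation~\eqref{dgs1a} for $\bm{m}^{l+1}[t]$. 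Your final conclusions $\bm{r}^l[t]=\bm{W}^l\bm{r}^{l-1}[t]$ and $\bm{m}^{l+1}[t]=\bm{b}^l+\bm{W}^l\bm{r}^{l-1}[t]$ happen to survive because linearity in $\bm{x}^l[\cdot]$ holds for the correct weighting just as well as for the incorrect one, so the argument can be salvaged by replacing each $\frac{1}{t}\sum_i\bm{x}^l[i]$ with $\sum_i\bm{x}^l[i]/i$ (or with $\frac{1}{t}\sum_i\bm{e}^l[i]$). As written, though, the intermediate identities are false under differential coding, and a reader who does not notice the cancellation will reasonably conclude the derivation is wrong. Fix the weighting and the proof is complete.
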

\begin{proof}
In the context of ANN-to-SNN conversion using rate coding, the output $ \bm{x}^l[t]$ of layer $l$ can be expressed as:
\begin{align}
     \bm{x}^l[t] = \bm{W}^l  \bm{x}^{l-1}[t] + \bm{b}^l,
\end{align}

Under differential coding as defined in Definition \ref{def:diff_coding_a}, we have:
\begin{align}
&\bm{e}^{l}[t] = \bm{r}^{l}[t-1]+ \bm{x}^{l}[t],
\\&\bm{r}^{l}[t] = \bm{r}^{l}[t-1]+\frac{ \bm{x}^{l}[t]}{t}
\\&\bm{r}^{l}[0]=0
\\&\bm{e}^{l}[t] = \bm{W}^l\bm{e}^{l-1}[t]+\bm{b}^l
\\&\bm{r}^{l}[t] = \bm{W}^l\bm{r}^{l-1}[t]+\bm{b}^l
\end{align} 
For $t > 1$, the following transformation holds:
\begin{align}
\begin{aligned}
 \bm{x}^l[t] &= \bm{e}^{l}[t]-\bm{r}^{l}[t-1] =t*\bm{r}^{l}[t]-(t-1)*\bm{r}^l[t-1] - \bm{r}^l[t-1]
\\&= t*(\bm{W}^l\bm{r}^{l-1}[t]+\bm{b}^l-\bm{W}^l\bm{r}^{l-1}[t-1]-\bm{b}^l)
\\&= t*\bm{W}^l\frac{1}{t} \bm{x}^{l-1}[t]
\\&=\bm{W}^l \bm{x}^{l-1}[t]
\end{aligned}
\end{align} 
when $t=1$, we can let $\bm{r}^{l}[0]=\bm{b}^l$, and initialize the membrane potential of the subsequent layer with the bias $\bm{b}^l$ or adding the bias to the first input current of that layer and start running from time-step $1$:
\begin{align}
\begin{aligned}
 \bm{x}^l[t] &= \bm{e}^{l}[t]-\bm{r}^l[t-1] = \bm{e}^l[1]-\bm{r}^l[0]
\\&= \bm{W}^l\bm{e}^{l-1}[t]+\bm{b}^l-\bm{b}^l=\bm{W}^l \bm{x}^{l-1}[t]+\bm{W}^l\bm{r}^{l-1}[0]
\\&=\bm{W}^l \bm{x}^{l-1}[t]
\end{aligned}
\end{align} 
Therefore, for any $t>0$ in differential coding, we have:
\begin{align}
 \bm{x}^l[t] = \bm{W}^l \bm{x}^{l-1}[t]
\end{align}
\end{proof}

\section{Proof of Lemma \ref{lem:lemma_k1}} \label{proof_lem:lemma_k1}
We first prove a previous lemma before proving Lemma \ref{lem:lemma_k1}.
\begin{lemma}\label{lem:previous}
\begin{align}
\begin{aligned}
\int_a^b{\left( c-x \right) ^2e^{-\frac{\left( x-\mu \right) ^2}{2\sigma ^2}}\text{d}x}\,\, 
=&\sqrt{\tfrac{\pi}{2}}\sigma \left( \sigma ^2+\mu ^2-2c\mu +c^2 \right) \left( \text{erf}\left( \frac{\left( b-\mu \right)}{\sqrt{2}\sigma} \right) -\text{erf}\left( \frac{\left( a-\mu \right)}{\sqrt{2}\sigma} \right) \right) 
\\&+\left( -\sigma ^2\left( b+\mu -2c \right) \right) e^{-\frac{\left( b-\mu \right) ^2}{2\sigma ^2}}+\left( \sigma ^2\left( a+\mu -2c \right) \right) e^{-\frac{\left( a-\mu \right) ^2}{2\sigma ^2}}
\end{aligned}
\end{align}
\end{lemma}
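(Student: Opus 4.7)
The plan is to treat this as a standard Gaussian moment computation on a bounded interval. First, I shift the integration variable by setting $u = x - \mu$, which turns the weighting factor into the centred form $e^{-u^2/(2\sigma^2)}$ and rewrites the quadratic as $(c-x)^2 = ((c-\mu) - u)^2 = (c-\mu)^2 - 2(c-\mu)u + u^2$. The integral then decomposes into three pieces, each of which is a truncated moment of a centred Gaussian between $a-\mu$ and $b-\mu$.

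Next, I would handle the three pieces using standard antiderivatives. The zeroth moment $\int e^{-u^2/(2\sigma^2)} du$ gives the $\mathrm{erf}$ difference, contributing the factor $\sqrt{\pi/2}\,\sigma\,(\mathrm{erf}(\tfrac{b-\mu}{\sqrt{2}\sigma}) - \mathrm{erf}(\tfrac{a-\mu}{\sqrt{2}\sigma}))$ multiplied by $(c-\mu)^2$. The first moment uses $\frac{d}{du}[-\sigma^2 e^{-u^2/(2\sigma^2)}] = u\,e^{-u^2/(2\sigma^2)}$, giving clean boundary contributions proportional to $(c-\mu)$. The second moment $\int u^2 e^{-u^2/(2\sigma^2)} du$ is obtained by integration by parts, writing $u \cdot (u\,e^{-u^2/(2\sigma^2)}) du$ and antidifferentiating the second factor; this produces one term $\sqrt{\pi/2}\,\sigma^3\,\mathrm{erf}$ and one boundary term $-\sigma^2 u\, e^{-u^2/(2\sigma^2)}$.

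The final step is purely algebraic: collect the three contributions, pair up the $\mathrm{erf}$ terms into the coefficient $\sigma^2 + (c-\mu)^2 = \sigma^2 + \mu^2 - 2c\mu + c^2$, and combine the boundary terms at $u = b-\mu$ and $u = a-\mu$. At each endpoint, the boundary contributions from the first and second moment pieces add to give $\pm\sigma^2(x+\mu - 2c)\, e^{-(x-\mu)^2/(2\sigma^2)}$ with the appropriate signs, matching the claimed formula.

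The computation has no conceptual difficulty; the only real obstacle is bookkeeping the sign of the boundary term from integration by parts and verifying that the coefficient of $u$ at each endpoint simplifies exactly to $x + \mu - 2c$ after substituting back $x = u + \mu$ and combining the $-2(c-\mu)$ factor from the first-moment piece with the $-u$ factor from the second-moment boundary. I would do a small sanity check by specialising to $c = \mu$ (so the first moment vanishes) and to $a \to -\infty$, $b \to +\infty$ (recovering $\mathbb{E}[(c-X)^2] = \sigma^2 + (c-\mu)^2$), which pins down both the overall coefficient and the sign convention.
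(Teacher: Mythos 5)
Your proposal is correct and follows essentially the same route as the paper: decompose the integrand into the zeroth, first, and second truncated Gaussian moments, handle the second moment by integration by parts, and recombine. The only cosmetic difference is that you shift $u = x-\mu$ and expand $(c-x)^2$ around $c-\mu$ at the outset, while the paper expands $(c-x)^2 = x^2-2cx+c^2$ first and performs the same $x \mapsto x-\mu$ shift inside each moment integral; the underlying computation is identical, and your sanity checks ($c=\mu$, and $a\to-\infty$, $b\to+\infty$) correctly validate the coefficient and signs.
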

\begin{proof}
We first calculate
$\int_a^b{e^{-\frac{\left( x-\mu \right) ^2}{2\sigma ^2}}\text{d}x}$, $\int_a^b{xe^{-\frac{\left( x-\mu \right) ^2}{2\sigma ^2}}\text{d}x}$ and $\int_a^b{x^2e^{-\frac{\left( x-\mu \right) ^2}{2\sigma ^2}}\text{d}x}$ separately.
Since $\text{erf}\left( x \right) =\tfrac{2}{\sqrt{\pi}}\int_0^x{e^{-t^2}dt},\int_0^x{e^{-t^2}dt}=\tfrac{\sqrt{\pi}}{2}\text{erf}\left( x \right)$
\begin{align}
\begin{aligned}
    \int_a^b{e^{-\frac{\left( x-\mu \right) ^2}{2\sigma ^2}}\text{d}x}
    &=\sqrt{2}\sigma \int_a^b{e^{-\frac{\left( x-\mu \right) ^2}{2\sigma ^2}}d\frac{\left( x-\mu \right)}{\sqrt{2}\sigma}}
    =\sqrt{2}\sigma \tfrac{\sqrt{\pi}}{2}\left( \text{erf}\left( \frac{\left( b-\mu \right)}{\sqrt{2}\sigma} \right) -\text{erf}\left( \frac{\left( a-\mu \right)}{\sqrt{2}\sigma} \right) \right) 
    \\&=\sqrt{\tfrac{\pi}{2}}\sigma \left( \text{erf}\left( \frac{\left( b-\mu \right)}{\sqrt{2}\sigma} \right) -\text{erf}\left( \frac{\left( a-\mu \right)}{\sqrt{2}\sigma} \right) \right) 
\end{aligned}
\end{align}
\begin{align}
\begin{aligned}
    \int_a^b{xe^{-\frac{\left( x-\mu \right) ^2}{2\sigma ^2}}\text{d}x}
    &=\int_a^b{\left( x-\mu +\mu \right) e^{-\frac{\left( x-\mu \right) ^2}{2\sigma ^2}}\text{d}x}=-\sigma ^2\int_a^b{e^{-\frac{\left( x-\mu \right) ^2}{2\sigma ^2}}d\left( -\frac{\left( x-\mu \right) ^2}{2\sigma ^2} \right)}+\mu \int_a^b{e^{-\frac{\left( x-\mu \right) ^2}{2\sigma ^2}}\text{d}x}
    \\& =-\sigma ^2\left( e^{-\frac{\left( b-\mu \right) ^2}{2\sigma ^2}}-e^{-\frac{\left( a-\mu \right) ^2}{2\sigma ^2}} \right) +\mu \sqrt{\tfrac{\pi}{2}}\sigma \left( \text{erf}\left( \frac{\left( b-\mu \right)}{\sqrt{2}\sigma} \right) -\text{erf}\left( \frac{\left( a-\mu \right)}{\sqrt{2}\sigma} \right) \right) 
\end{aligned}
\end{align}
\begin{align}
\begin{aligned}
    &\int_a^b{x^2e^{-\frac{\left( x-\mu \right) ^2}{2\sigma ^2}}\text{d}x}=\int_a^b{\left( \left( x-\mu \right) ^2+2\mu \left( x-\mu \right) +\mu ^2 \right) e^{-\frac{\left( x-\mu \right) ^2}{2\sigma ^2}}\text{d}x}
    \\&=-\sigma ^2\int_a^b{\left( x-\mu \right) de^{-\frac{\left( x-\mu \right) ^2}{2\sigma ^2}}}+\int_a^b{\left( 2\mu \left( x-\mu \right) +\mu ^2 \right) e^{-\frac{\left( x-\mu \right) ^2}{2\sigma ^2}}\text{d}x}
    \\&=-\sigma ^2\left( b-\mu \right) e^{-\frac{\left( b-\mu \right) ^2}{2\sigma ^2}}+\sigma ^2\left( a-\mu \right) e^{-\frac{\left( a-\mu \right) ^2}{2\sigma ^2}}+\sigma ^2\int_a^b{e^{-\frac{\left( x-\mu \right) ^2}{2\sigma ^2}}d\left( x-\mu \right)}
    \\&\ \ \ \ -2\sigma ^2\mu \left( e^{-\frac{\left( b-\mu \right) ^2}{2\sigma ^2}}-e^{-\frac{\left( a-\mu \right) ^2}{2\sigma ^2}} \right) +\mu ^2\sqrt{\tfrac{\pi}{2}}\sigma \left( \text{erf}\left( \frac{\left( b-\mu \right)}{\sqrt{2}\sigma} \right) -\text{erf}\left( \frac{\left( a-\mu \right)}{\sqrt{2}\sigma} \right) \right) 
    \\&=-\sigma ^2\left( b-\mu \right) e^{-\frac{\left( b-\mu \right) ^2}{2\sigma ^2}}+\sigma ^2\left( a-\mu \right) e^{-\frac{\left( a-\mu \right) ^2}{2\sigma ^2}}+\sigma ^2\sqrt{\tfrac{\pi}{2}}\sigma \left( \text{erf}\left( \frac{\left( b-\mu \right)}{\sqrt{2}\sigma} \right) -\text{erf}\left( \frac{\left( a-\mu \right)}{\sqrt{2}\sigma} \right) \right) 
    \\&\ \ \ \ -2\sigma ^2\mu \left( e^{-\frac{\left( b-\mu \right) ^2}{2\sigma ^2}}-e^{-\frac{\left( a-\mu \right) ^2}{2\sigma ^2}} \right) +\mu ^2\sqrt{\tfrac{\pi}{2}}\sigma \left( \text{erf}\left( \frac{\left( b-\mu \right)}{\sqrt{2}\sigma} \right) -\text{erf}\left( \frac{\left( a-\mu \right)}{\sqrt{2}\sigma} \right) \right) 
    \\&=\left( -\sigma ^2\left( b+\mu \right) \right) e^{-\frac{\left( b-\mu \right) ^2}{2\sigma ^2}}+\left( \sigma ^2\left( a+\mu \right) \right) e^{-\frac{\left( a-\mu \right) ^2}{2\sigma ^2}}+\sqrt{\tfrac{\pi}{2}}\sigma \left( \sigma ^2+\mu ^2 \right) \left( \text{erf}\left( \frac{\left( b-\mu \right)}{\sqrt{2}\sigma} \right) -\text{erf}\left( \frac{\left( a-\mu \right)}{\sqrt{2}\sigma} \right) \right) 
\end{aligned}
\end{align}
Finally, aggregate and calculate the answer: 
\begin{align}
\begin{aligned}
&\int_a^b{\left( c-x \right) ^2e^{-\frac{\left( x-\mu \right) ^2}{2\sigma ^2}}\text{d}x}\,\, 
=\int_a^b{\left( x^2-2cx+c^2 \right) e^{-\frac{\left( x-\mu \right) ^2}{2\sigma ^2}}\text{d}x}\,\,
\\&=\left( -\sigma ^2\left( b+\mu \right) \right) e^{-\frac{\left( b-\mu \right) ^2}{2\sigma ^2}}+\left( \sigma ^2\left( a+\mu \right) \right) e^{-\frac{\left( a-\mu \right) ^2}{2\sigma ^2}}+\sqrt{\tfrac{\pi}{2}}\sigma \left( \sigma ^2+\mu ^2 \right) \left( \text{erf}\left( \frac{\left( b-\mu \right)}{\sqrt{2}\sigma} \right) -\text{erf}\left( \frac{\left( a-\mu \right)}{\sqrt{2}\sigma} \right) \right) 
\\&\ \ \ \ -2c\left( -\sigma ^2\left( e^{-\frac{\left( b-\mu \right) ^2}{2\sigma ^2}}-e^{-\frac{\left( a-\mu \right) ^2}{2\sigma ^2}} \right) +\mu \sqrt{\tfrac{\pi}{2}}\sigma \left( \text{erf}\left( \frac{\left( b-\mu \right)}{\sqrt{2}\sigma} \right) -\text{erf}\left( \frac{\left( a-\mu \right)}{\sqrt{2}\sigma} \right) \right) \right)
\\&\ \ \ \ +c^2\sqrt{\tfrac{\pi}{2}}\sigma \left( \text{erf}\left( \frac{\left( b-\mu \right)}{\sqrt{2}\sigma} \right) -\text{erf}\left( \frac{\left( a-\mu \right)}{\sqrt{2}\sigma} \right) \right) 
\\
&=\sqrt{\tfrac{\pi}{2}}\sigma \left( \sigma ^2+\mu ^2-2c\mu +c^2 \right) \left( \text{erf}\left( \frac{\left( b-\mu \right)}{\sqrt{2}\sigma} \right) -\text{erf}\left( \frac{\left( a-\mu \right)}{\sqrt{2}\sigma} \right) \right) \\&\ \ \ \ +\left( -\sigma ^2\left( b+\mu -2c \right) \right) e^{-\frac{\left( b-\mu \right) ^2}{2\sigma ^2}}+\left( \sigma ^2\left( a+\mu -2c \right) \right) e^{-\frac{\left( a-\mu \right) ^2}{2\sigma ^2}}
\end{aligned}
\end{align}
\end{proof}
\clearpage
Then, we can prove Lemma \ref{lem:lemma_k1} base on Lemma \ref{lem:previous}.
\begin{lemma}
\textbf{(Repeated from Lemma \ref{lem:lemma_k1})}
\begin{align}
&QE_1(\theta,k)=\int_{-\infty}^{+\infty}{\left( f_1(x,\theta,k) -\max \left( x,0 \right) \right) ^2e^{-\frac{\left( x-\mu \right) ^2}{2\sigma ^2}}\text{d}x}
\\&f_1(x,\theta,k)=k\frac{\theta}{N}clamp\left( \lfloor \frac{Nx+\tfrac{\theta}{2}}{\theta} \rfloor ,0,N \right)
\end{align}
When \( \theta \) is fixed, \( QE_1(\theta, k) \) reaches its minimum value when:
\begin{align}
\begin{aligned}
k=k_1=\frac{\mu}{\theta}\frac{1-\sum_{i=1}^n{\frac{1}{n}\text{erf}\left( \frac{\left( \tfrac{\left( 2i-1 \right) \theta}{2n}-\mu \right)}{\sqrt{2}\sigma} \right)}}{1-\sum_{i=1}^n{\frac{2i-1}{n^2}\text{erf}\left( \frac{\left( \tfrac{\left( 2i-1 \right) \theta}{2n}-\mu \right)}{\sqrt{2}\sigma} \right)}}+\frac{\sigma}{\sqrt{\tfrac{\pi}{2}}\theta}\frac{\sum_{i=1}^n{\frac{1}{n}e^{-\frac{\left( \tfrac{\left( 2i-1 \right) \theta}{2n}-\mu \right) ^2}{2\sigma ^2}}}}{1-\sum_{i=1}^n{\frac{2i-1}{n^2}\text{erf}\left( \frac{\left( \tfrac{\left( 2i-1 \right) \theta}{2n}-\mu \right)}{\sqrt{2}\sigma} \right)}}
\end{aligned}
\end{align}
\end{lemma}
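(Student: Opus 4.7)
The plan is to exploit linearity in $k$. Writing $f_1(x,\theta,k) = k\,g(x,\theta)$, where $g(x,\theta) := \tfrac{\theta}{n}\,\text{clamp}\!\left(\lfloor(nx+\theta/2)/\theta\rfloor,0,n\right)$ does not depend on $k$, the integrand of $QE_1$ becomes $(k\,g(x,\theta)-\max(x,0))^2 e^{-(x-\mu)^2/(2\sigma^2)}$, so $QE_1(\theta,\cdot)$ is a quadratic in $k$ with strictly positive leading coefficient $\int g(x,\theta)^2 e^{-(x-\mu)^2/(2\sigma^2)}\,dx$. Setting $\partial QE_1/\partial k = 0$ therefore yields the unique minimizer as the ratio
\begin{align*}
k_1 = \frac{\displaystyle\int_{-\infty}^{+\infty} g(x,\theta)\,\max(x,0)\,e^{-(x-\mu)^2/(2\sigma^2)}\,dx}{\displaystyle\int_{-\infty}^{+\infty} g(x,\theta)^2\,e^{-(x-\mu)^2/(2\sigma^2)}\,dx},
\end{align*}
so the whole task reduces to evaluating these two Gaussian integrals in closed form.

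Next I would exploit that $g(\cdot,\theta)$ is piecewise constant. Setting $x_i := \tfrac{(2i-1)\theta}{2n}$ for $i = 1,\dots,n$ and $x_{n+1} := +\infty$, one has $g \equiv 0$ on $(-\infty,x_1)$ and $g \equiv i\theta/n$ on $[x_i,x_{i+1})$. Because $\theta > 0$ forces $x_1 > 0$, the factor $\max(x,0)$ can be replaced by $x$ throughout the support of $g$, giving
\begin{align*}
\text{(num.)} &= \sum_{i=1}^{n}\frac{i\theta}{n}\int_{x_i}^{x_{i+1}} x\,e^{-(x-\mu)^2/(2\sigma^2)}\,dx,\\
\text{(den.)} &= \sum_{i=1}^{n}\left(\frac{i\theta}{n}\right)^{\!2}\int_{x_i}^{x_{i+1}} e^{-(x-\mu)^2/(2\sigma^2)}\,dx.
\end{align*}
Each block integral is then handed over to Lemma~\ref{lem:previous}, yielding linear expressions in $\Phi_{i+1}-\Phi_i$ and $E_i-E_{i+1}$, where $\Phi_i := \text{erf}((x_i-\mu)/(\sqrt{2}\sigma))$ and $E_i := e^{-(x_i-\mu)^2/(2\sigma^2)}$; the boundary values $\Phi_{n+1}=1$ and $E_{n+1}=0$ absorb the improper upper limit.

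The main obstacle is the algebraic collapse of the weighted sums after substitution. I would prove the three Abel-summation identities
\begin{align*}
\sum_{i=1}^{n} i^2(\Phi_{i+1}-\Phi_i) &= n^2 - \sum_{i=1}^{n}(2i-1)\Phi_i,\\
\sum_{i=1}^{n} i\,(\Phi_{i+1}-\Phi_i) &= n - \sum_{i=1}^{n}\Phi_i,\\
\sum_{i=1}^{n} i\,(E_i-E_{i+1}) &= \sum_{i=1}^{n}E_i,
\end{align*}
by index-shifting and cancellation of telescoping terms, using the boundary values at $i=n+1$ to kill the top contributions. Substituting these back, the denominator collapses to $\sqrt{\pi/2}\,\sigma\theta^2\bigl(1-\sum_{i=1}^{n}\tfrac{2i-1}{n^2}\Phi_i\bigr)$, and the numerator splits into a $\mu$-part equal to $\sqrt{\pi/2}\,\sigma\theta\mu\bigl(1-\tfrac{1}{n}\sum_{i=1}^{n}\Phi_i\bigr)$ plus a $\sigma$-part equal to $\sigma^2\tfrac{\theta}{n}\sum_{i=1}^{n} E_i$. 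Dividing and grouping reproduces exactly the two-term expression claimed for $k_1$; the fact that this critical point is the global minimum follows immediately from positivity of the leading quadratic coefficient, so no second-order check is needed.
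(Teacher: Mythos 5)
Your proof is correct, and it takes a cleaner route than the paper's. The paper expands the full integral $QE_1(\theta,k)$ piece by piece via Lemma~\ref{lem:previous} (applying the formula for $\int_a^b (c-x)^2 e^{-(x-\mu)^2/2\sigma^2}\,dx$ with $c$ depending on $k$ on each piece), then carries out two pages of index-shifting and cancellation to express $QE_1$ as an explicit quadratic in $k$, and finally reads off the minimizer as $-b/(2a)$. You instead factor $f_1(x,\theta,k)=k\,g(x,\theta)$ at the outset, observe $QE_1 = A k^2 - 2Bk + C$ with $A=\int g^2 e^{\cdot}$, $B=\int g\max(x,0)e^{\cdot}$, and immediately conclude $k_1=B/A$. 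This buys you two simplifications: you never need to compute the $k$-independent constant term $C$ (which the paper does compute, and which then plays no role), and you only need the simple Gaussian moments $\int e^{\cdot}$, $\int xe^{\cdot}$ rather than the full quadratic form of Lemma~\ref{lem:previous}. Your three Abel-summation identities are correct (I checked: with $\Phi_{n+1}=1$, $E_{n+1}=0$, index-shifting gives $\sum i^2(\Phi_{i+1}-\Phi_i)=n^2-\sum(2i-1)\Phi_i$, $\sum i(\Phi_{i+1}-\Phi_i)=n-\sum\Phi_i$, $\sum i(E_i-E_{i+1})=\sum E_i$), and substituting them yields
\begin{align*}
A &= \sqrt{\tfrac{\pi}{2}}\,\sigma\theta^2\Bigl(1-\textstyle\sum_{i=1}^n\tfrac{2i-1}{n^2}\Phi_i\Bigr),\qquad
B = \sqrt{\tfrac{\pi}{2}}\,\sigma\theta\mu\Bigl(1-\textstyle\sum_{i=1}^n\tfrac{1}{n}\Phi_i\Bigr)+\sigma^2\tfrac{\theta}{n}\textstyle\sum_{i=1}^n E_i,
\end{align*}
whose ratio reproduces $k_1$ exactly. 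Strict positivity of $A$ (since $g\not\equiv 0$) guarantees the critical point is the global minimum, as you say. One presentational nit: you should state explicitly that $\theta>0$ is assumed, since that is what lets you replace $\max(x,0)$ by $x$ on the support of $g$; the paper makes the same implicit assumption.
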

\begin{proof}
According to Lemma \ref{lem:previous}, we have:
\begin{align}
\begin{aligned}
\int_a^b{\left( c-x \right) ^2e^{-\frac{\left( x-\mu \right) ^2}{2\sigma ^2}}\text{d}x}\,\, 
=&\sqrt{\tfrac{\pi}{2}}\sigma \left( \sigma ^2+\mu ^2-2c\mu +c^2 \right) \left( \text{erf}\left( \frac{\left( b-\mu \right)}{\sqrt{2}\sigma} \right) -\text{erf}\left( \frac{\left( a-\mu \right)}{\sqrt{2}\sigma} \right) \right) 
\\&+\left( -\sigma ^2\left( b+\mu -2c \right) \right) e^{-\frac{\left( b-\mu \right) ^2}{2\sigma ^2}}+\left( \sigma ^2\left( a+\mu -2c \right) \right) e^{-\frac{\left( a-\mu \right) ^2}{2\sigma ^2}}
\end{aligned}
\end{align}
Expand the calculation of $QE_1$:
\begin{align}
\scalebox{0.9}{$
\begin{aligned}
&\int_{-\infty}^{+\infty}{\left( k\frac{\theta}{n}clamp\left( \lfloor \frac{nx+\tfrac{\theta}{2}}{\theta} \rfloor ,0,n \right) -\max \left( x,0 \right) \right) ^2e^{-\frac{\left( x-\mu \right) ^2}{2\sigma ^2}}\mathrm{d}x}
\\&
=\int_0^{+\infty}{\left( k\frac{\theta}{n}clamp\left( \lfloor \frac{nx+\tfrac{\theta}{2}}{\theta} \rfloor ,0,n \right) -x \right) ^2e^{-\frac{\left( x-\mu \right) ^2}{2\sigma ^2}}\mathrm{d}x}
\\&
=\int_0^{\tfrac{\theta}{2n}}{\left( x \right) ^2e^{-\frac{\left( x-\mu \right) ^2}{2\sigma ^2}}\mathrm{d}x}+\sum_{i=1}^{n-1}{\int_{\frac{\left( 2i-1 \right) \theta}{2n}}^{\frac{\left( 2i+1 \right) \theta}{2n}}{\left( k\frac{i\theta}{n}-x \right) ^2e^{-\frac{\left( x-\mu \right) ^2}{2\sigma ^2}}\mathrm{d}x}}+\int_{\tfrac{\left( 2n-1 \right) \theta}{2n}}^{+\infty}{\left( k\theta -x \right) ^2e^{-\frac{\left( x-\mu \right) ^2}{2\sigma ^2}}\mathrm{d}x}
\\&
=\sqrt{\tfrac{\pi}{2}}\sigma \left( \sigma ^2+\mu ^2 \right) \left( \mathrm{erf}\left( \frac{\left( \tfrac{\theta}{2n}-\mu \right)}{\sqrt{2}\sigma} \right) -\mathrm{erf}\left( \frac{-\mu}{\sqrt{2}\sigma} \right) \right) +\left( -\sigma ^2\left( \tfrac{\theta}{2n}+\mu \right) \right) e^{-\frac{\left( \tfrac{\theta}{2n}-\mu \right) ^2}{2\sigma ^2}}+\left( \sigma ^2\mu \right) e^{-\frac{\mu ^2}{2\sigma ^2}}
\\&
+\sum_{i=1}^{n-1}{\sqrt{\tfrac{\pi}{2}}\sigma \left( \sigma ^2+\mu ^2-2k\frac{i\theta}{n}\mu +\left( k\frac{i\theta}{n} \right) ^2 \right) \left( \mathrm{erf}\left( \frac{\left( \frac{\left( 2i+1 \right) \theta}{2n}-\mu \right)}{\sqrt{2}\sigma} \right) -\mathrm{erf}\left( \frac{\left( \frac{\left( 2i-1 \right) \theta}{2n}-\mu \right)}{\sqrt{2}\sigma} \right) \right)}
\\&
+\sum_{i=1}^{n-1}{\left( -\sigma ^2\left( \frac{\left( 2i+1 \right) \theta}{2n}+\mu -2k\frac{i\theta}{n} \right) \right) e^{-\frac{\left( \frac{\left( 2i+1 \right) \theta}{2n}-\mu \right) ^2}{2\sigma ^2}}}+\sum_{i=1}^{n-1}{\left( \sigma ^2\left( \frac{\left( 2i-1 \right) \theta}{2n}+\mu -2k\frac{i\theta}{n} \right) \right) e^{-\frac{\left( \frac{\left( 2i-1 \right) \theta}{2n}-\mu \right) ^2}{2\sigma ^2}}}
\\&
+\sqrt{\tfrac{\pi}{2}}\sigma \left( \sigma ^2+\mu ^2-2k\theta \mu +\left( k\theta \right) ^2 \right) \left( 1-\mathrm{erf}\left( \frac{\left( \tfrac{\left( 2n-1 \right) \theta}{2n}-\mu \right)}{\sqrt{2}\sigma} \right) \right) +\left( \sigma ^2\left( \tfrac{\left( 2n-1 \right) \theta}{2n}+\mu -2k\theta \right) \right) e^{-\frac{\left( \tfrac{\left( 2n-1 \right) \theta}{2n}-\mu \right) ^2}{2\sigma ^2}}
\end{aligned}$}
\end{align}
% \begin{align}
$$
\begin{aligned}
&
=\sqrt{\tfrac{\pi}{2}}\sigma \left( \sigma ^2+\mu ^2 \right) \left( \mathrm{erf}\left( \frac{\left( \tfrac{\theta}{2n}-\mu \right)}{\sqrt{2}\sigma} \right) -\mathrm{erf}\left( \frac{-\mu}{\sqrt{2}\sigma} \right) \right) +\left( -\sigma ^2\left( \tfrac{\theta}{2n}+\mu \right) \right) e^{-\frac{\left( \tfrac{\theta}{2n}-\mu \right) ^2}{2\sigma ^2}}+\left( \sigma ^2\mu \right) e^{-\frac{\mu ^2}{2\sigma ^2}}
\\&
+\sum_{i=2}^n{\sqrt{\tfrac{\pi}{2}}\sigma \left( \sigma ^2+\mu ^2-2k\frac{\left( i-1 \right) \theta}{n}\mu +k^2\left( \frac{\left( i-1 \right) \theta}{n} \right) ^2 \right) \mathrm{erf}\left( \frac{\left( \frac{\left( 2i-1 \right) \theta}{2n}-\mu \right)}{\sqrt{2}\sigma} \right)}
\\&
-\sum_{i=1}^{n-1}{\sqrt{\tfrac{\pi}{2}}\sigma \left( \sigma ^2+\mu ^2-2k\frac{i\theta}{n}\mu +k^2\left( \frac{i\theta}{n} \right) ^2 \right) \mathrm{erf}\left( \frac{\left( \frac{\left( 2i-1 \right) \theta}{2n}-\mu \right)}{\sqrt{2}\sigma} \right)}
\\&
+\sum_{i=2}^n{\left( -\sigma ^2\left( \frac{\left( 2i-1 \right) \theta}{2n}+\mu -2k\frac{\left( i-1 \right) \theta}{n} \right) \right) e^{-\frac{\left( \frac{\left( 2i-1 \right) \theta}{2n}-\mu \right) ^2}{2\sigma ^2}}}+\sum_{i=1}^{n-1}{\left( \sigma ^2\left( \frac{\left( 2i-1 \right) \theta}{2n}+\mu -2k\frac{i\theta}{n} \right) \right) e^{-\frac{\left( \frac{\left( 2i-1 \right) \theta}{2n}-\mu \right) ^2}{2\sigma ^2}}}
\\&
+\sqrt{\tfrac{\pi}{2}}\sigma \left( \sigma ^2+\mu ^2-2k\theta \mu +\left( k\theta \right) ^2 \right) \left( 1-\mathrm{erf}\left( \frac{\left( \tfrac{\left( 2n-1 \right) \theta}{2n}-\mu \right)}{\sqrt{2}\sigma} \right) \right) +\left( \sigma ^2\left( \tfrac{\left( 2n-1 \right) \theta}{2n}+\mu -2k\theta \right) \right) e^{-\frac{\left( \tfrac{\left( 2n-1 \right) \theta}{2n}-\mu \right) ^2}{2\sigma ^2}}
\\&
=\sqrt{\tfrac{\pi}{2}}\sigma \left( \sigma ^2+\mu ^2 \right) \left( \mathrm{erf}\left( \frac{\left( \tfrac{\theta}{2n}-\mu \right)}{\sqrt{2}\sigma} \right) -\mathrm{erf}\left( \frac{-\mu}{\sqrt{2}\sigma} \right) \right) +\left( -\sigma ^2\left( \tfrac{\theta}{2n}+\mu \right) \right) e^{-\frac{\left( \tfrac{\theta}{2n}-\mu \right) ^2}{2\sigma ^2}}+\left( \sigma ^2\mu \right) e^{-\frac{\mu ^2}{2\sigma ^2}}
\\&
+\sqrt{\tfrac{\pi}{2}}\sigma \left( \sigma ^2+\mu ^2-2k\frac{\left( n-1 \right) \theta}{n}\mu +k^2\left( \frac{\left( n-1 \right) \theta}{n} \right) ^2 \right) \mathrm{erf}\left( \frac{\left( \frac{\left( 2n-1 \right) \theta}{2n}-\mu \right)}{\sqrt{2}\sigma} \right) 
\\&
-\sqrt{\tfrac{\pi}{2}}\sigma \left( \sigma ^2+\mu ^2-2k\frac{\theta}{n}\mu +k^2\left( \frac{\theta}{n} \right) ^2 \right) \mathrm{erf}\left( \frac{\left( \frac{\theta}{2n}-\mu \right)}{\sqrt{2}\sigma} \right) 
\\&
+\left( -\sigma ^2\left( \frac{\left( 2n-1 \right) \theta}{2n}+\mu -2k\frac{\left( n-1 \right) \theta}{n} \right) \right) e^{-\frac{\left( \frac{\left( 2n-1 \right) \theta}{2n}-\mu \right) ^2}{2\sigma ^2}}+\left( \sigma ^2\left( \frac{\theta}{2n}+\mu -2k\frac{\theta}{n} \right) \right) e^{-\frac{\left( \frac{\theta}{2n}-\mu \right) ^2}{2\sigma ^2}}
\\&
+\sqrt{\tfrac{\pi}{2}}\sigma \left( \sigma ^2+\mu ^2-2k\theta \mu +\left( k\theta \right) ^2 \right) \left( 1-\mathrm{erf}\left( \frac{\left( \tfrac{\left( 2n-1 \right) \theta}{2n}-\mu \right)}{\sqrt{2}\sigma} \right) \right) +\left( \sigma ^2\left( \tfrac{\left( 2n-1 \right) \theta}{2n}+\mu -2k\theta \right) \right) e^{-\frac{\left( \tfrac{\left( 2n-1 \right) \theta}{2n}-\mu \right) ^2}{2\sigma ^2}}
\\&
+\sum_{i=2}^{n-1}{\sqrt{\tfrac{\pi}{2}}\sigma \left( \sigma ^2+\mu ^2-2k\frac{\left( i-1 \right) \theta}{n}\mu +k^2\left( \frac{\left( i-1 \right) \theta}{n} \right) ^2-\sigma ^2-\mu ^2+2k\frac{i\theta}{n}\mu -k^2\left( \frac{i\theta}{n} \right) ^2 \right) \mathrm{erf}\left( \frac{\left( \frac{\left( 2i-1 \right) \theta}{2n}-\mu \right)}{\sqrt{2}\sigma} \right)}
\\&
+\sum_{i=2}^{n-1}{\left( -\sigma ^2\left( \frac{\left( 2i-1 \right) \theta}{2n}+\mu -2k\frac{\left( i-1 \right) \theta}{n} \right) +\sigma ^2\left( \frac{\left( 2i-1 \right) \theta}{2n}+\mu -2k\frac{i\theta}{n} \right) \right) e^{-\frac{\left( \frac{\left( 2i-1 \right) \theta}{2n}-\mu \right) ^2}{2\sigma ^2}}}
\\&
=\sqrt{\tfrac{\pi}{2}}\sigma \left( \sigma ^2+\mu ^2 \right) \left( 1-\mathrm{erf}\left( \frac{-\mu}{\sqrt{2}\sigma} \right) \right) +\left( \sigma ^2\mu \right) e^{-\frac{\mu ^2}{2\sigma ^2}}+\sqrt{\tfrac{\pi}{2}}\sigma \left( -2k\theta \mu +\left( k\theta \right) ^2 \right) 
\\&
+\sqrt{\tfrac{\pi}{2}}\sigma \left( 2k\frac{\theta}{n}\mu -k^2\theta ^2\left( \frac{2n-1}{n} \right) \right) \mathrm{erf}\left( \frac{\left( \frac{\left( 2n-1 \right) \theta}{2n}-\mu \right)}{\sqrt{2}\sigma} \right) +\sqrt{\tfrac{\pi}{2}}\sigma \left( 2k\frac{\theta}{n}\mu -k^2\left( \frac{\theta}{n} \right) ^2 \right) \mathrm{erf}\left( \frac{\left( \frac{\theta}{2n}-\mu \right)}{\sqrt{2}\sigma} \right) 
\\&
+\left( -\sigma ^2\left( 2k\frac{\theta}{n} \right) \right) e^{-\frac{\left( \frac{\left( 2n-1 \right) \theta}{2n}-\mu \right) ^2}{2\sigma ^2}}+\left( \sigma ^2\left( -2k\frac{\theta}{n} \right) \right) e^{-\frac{\left( \frac{\theta}{2n}-\mu \right) ^2}{2\sigma ^2}}
\\&
+\sum_{i=2}^{n-1}{\sqrt{\tfrac{\pi}{2}}\sigma \left( 2k\frac{\theta}{n}\mu -k^2\left( \frac{\theta}{n} \right) ^2\left( 2i-1 \right) \right) \mathrm{erf}\left( \frac{\left( \frac{\left( 2i-1 \right) \theta}{2n}-\mu \right)}{\sqrt{2}\sigma} \right)}
\\&
+\sum_{i=2}^{n-1}{\left( -\sigma ^22k\frac{\theta}{n} \right) e^{-\frac{\left( \frac{\left( 2i-1 \right) \theta}{2n}-\mu \right) ^2}{2\sigma ^2}}}
\end{aligned}
$$
% \end{align}
% \begin{align}
$$
\begin{aligned}
&
=\sqrt{\tfrac{\pi}{2}}\sigma \left( \sigma ^2+\mu ^2 \right) \left( 1-\mathrm{erf}\left( \frac{-\mu}{\sqrt{2}\sigma} \right) \right) +\left( \sigma ^2\mu \right) e^{-\frac{\mu ^2}{2\sigma ^2}}+\sqrt{\tfrac{\pi}{2}}\sigma \left( -2k\theta \mu +\left( k\theta \right) ^2 \right) 
\\&
+\sum_{i=1}^n{\sqrt{\tfrac{\pi}{2}}\sigma \left( 2k\frac{\theta}{n}\mu -k^2\left( \frac{\theta}{n} \right) ^2\left( 2i-1 \right) \right) \mathrm{erf}\left( \frac{\left( \frac{\left( 2i-1 \right) \theta}{2n}-\mu \right)}{\sqrt{2}\sigma} \right)}
\\&
+\sum_{i=1}^n{\left( -\sigma ^22k\frac{\theta}{n} \right) e^{-\frac{\left( \frac{\left( 2i-1 \right) \theta}{2n}-\mu \right) ^2}{2\sigma ^2}}}
\\&
=k^2\left( \sqrt{\tfrac{\pi}{2}}\sigma \theta ^2-\sqrt{\tfrac{\pi}{2}}\sigma \left( \frac{\theta}{n} \right) ^2\sum_{i=1}^n{\left( 2i-1 \right) \mathrm{erf}\left( \frac{\left( \frac{\left( 2i-1 \right) \theta}{2n}-\mu \right)}{\sqrt{2}\sigma} \right)} \right) 
\\&
+k\left( 2\sqrt{\tfrac{\pi}{2}}\sigma \frac{\theta}{n}\mu \sum_{i=1}^n{\mathrm{erf}\left( \frac{\left( \frac{\left( 2i-1 \right) \theta}{2n}-\mu \right)}{\sqrt{2}\sigma} \right)}-2\sqrt{\tfrac{\pi}{2}}\sigma \theta \mu -2\sigma ^2\theta \sum_{i=1}^n{\frac{1}{n}e^{-\frac{\left( \frac{\left( 2i-1 \right) \theta}{2n}-\mu \right) ^2}{2\sigma ^2}}} \right) 
\\&
+\sqrt{\tfrac{\pi}{2}}\sigma \left( \sigma ^2+\mu ^2 \right) \left( 1-\mathrm{erf}\left( \frac{-\mu}{\sqrt{2}\sigma} \right) \right) +\left( \sigma ^2\mu \right) e^{-\frac{\mu ^2}{2\sigma ^2}}
\end{aligned}
% \end{align}
$$

Since this is a quadratic function with respect to $k$, the value of $k$ that minimizes it is:
\begin{align}
\scalebox{0.9}{$
\begin{aligned}
&
k=-\frac{2\sqrt{\tfrac{\pi}{2}}\sigma \theta \mu \left( \sum_{i=1}^n{\frac{1}{n}\mathrm{erf}\left( \frac{\left( \frac{\left( 2i-1 \right) \theta}{2n}-\mu \right)}{\sqrt{2}\sigma} \right)}-1 \right) -2\sigma ^2\theta \sum_{i=1}^n{\frac{1}{n}e^{-\frac{\left( \frac{\left( 2i-1 \right) \theta}{2n}-\mu \right) ^2}{2\sigma ^2}}}}{2\sqrt{\tfrac{\pi}{2}}\sigma \theta ^2\left( 1-\sum_{i=1}^n{\left( \frac{2i-1}{n^2} \right) \mathrm{erf}\left( \frac{\left( \frac{\left( 2i-1 \right) \theta}{2n}-\mu \right)}{\sqrt{2}\sigma} \right)} \right)}
\\&
=\frac{\mu \left( 1-\sum_{i=1}^n{\frac{1}{n}\mathrm{erf}\left( \frac{\left( \frac{\left( 2i-1 \right) \theta}{2n}-\mu \right)}{\sqrt{2}\sigma} \right)} \right)}{\theta \left( 1-\sum_{i=0}^n{\left( \frac{2i-1}{n^2} \right) \mathrm{erf}\left( \frac{\left( \frac{\left( 2i-1 \right) \theta}{2n}-\mu \right)}{\sqrt{2}\sigma} \right)} \right)}+\frac{\sigma \sum_{i=1}^n{\frac{1}{n}e^{-\frac{\left( \frac{\left( 2i-1 \right) \theta}{2n}-\mu \right) ^2}{2\sigma ^2}}}}{\sqrt{\tfrac{\pi}{2}}\theta \left( 1-\sum_{i=1}^n{\left( \frac{2i-1}{n^2} \right) \mathrm{erf}\left( \frac{\left( \frac{\left( 2i-1 \right) \theta}{2n}-\mu \right)}{\sqrt{2}\sigma} \right)} \right)}
\end{aligned}$}
\end{align}
\end{proof}

\section{Proof of Lemma \ref{lem:lemma_k2}}\label{proof_lem:lemma_k2}
\begin{lemma}\textbf{(Repeated from Lemma \ref{lem:lemma_k2})}
\begin{align}
&QE_2(\theta,k)=\int_{-\infty}^{+\infty}{\left( f_2(x,\theta,k) -\max \left( x,0 \right) \right) ^2e^{-\frac{\left( x-\mu \right) ^2}{2\sigma ^2}}\text{d}x}
\\&f_2(x,\theta,k)=\frac{\theta}{N}clamp\left( \lfloor \frac{Nx+\tfrac{k\theta}{2}}{k\theta} \rfloor ,0,N \right)
\end{align}
When \( \theta \) is fixed, \( QE_2(\theta, k) \) reaches its minimum value when $k=1$.
\end{lemma}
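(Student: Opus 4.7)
The plan is to establish the pointwise inequality
\begin{equation}
\bigl(f_2(x,\theta,1) - \max(x,0)\bigr)^2 \leq \bigl(f_2(x,\theta,k) - \max(x,0)\bigr)^2
\end{equation}
for every $x \in \mathbb{R}$ and every $k > 0$, after which integrating against the strictly positive Gaussian weight $e^{-(x-\mu)^2/(2\sigma^2)}$ immediately yields $QE_2(\theta,1) \leq QE_2(\theta,k)$, so the minimizer is $k = 1$.

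The key observation driving the pointwise bound is that the range of $f_2(\cdot,\theta,k)$ is always contained in the fixed finite level set $S := \{j\theta/N : j = 0, 1, \ldots, N\}$, \emph{independently} of $k$. Varying $k$ only shifts the breakpoints of the staircase, not the attainable output values. Consequently, for every fixed $x$,
\begin{equation}
\min_{k > 0} \bigl(f_2(x,\theta,k) - \max(x,0)\bigr)^2 \;\geq\; \min_{y \in S} \bigl(y - \max(x,0)\bigr)^2,
\end{equation}
so it suffices to show that $f_2(x,\theta,1)$ always realizes a nearest level of $S$ to $\max(x,0)$.

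To verify this I would rewrite $f_2(x,\theta,1) = (\theta/N)\,\mathrm{clamp}(\lfloor Nx/\theta + 1/2\rfloor, 0, N)$ and split into three cases: (i) for $x \leq 0$, the clamp zeroes out any negative floor, giving $f_2(x,\theta,1) = 0 = \max(x,0)$, so the error is zero; (ii) for $x \in [0,\theta]$, the expression $\lfloor Nx/\theta + 1/2\rfloor$ produces the integer nearest to $Nx/\theta$ (half-integer ties break upward but yield the same squared error), so $f_2(x,\theta,1)$ is a nearest element of $S$ to $x$; (iii) for $x \geq \theta$, the clamp forces $f_2(x,\theta,1) = \theta$, which is the element of $S$ closest to $x$ since $\theta = \max S$. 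In each case the squared error attains $\min_{y \in S}(y - \max(x,0))^2$.

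The pointwise inequality then follows, and integrating against the strictly positive density preserves it. The main obstacle is simply bookkeeping the three cases of clamping/rounding to confirm that $f_2(\cdot,\theta,1)$ is genuinely the nearest-level projector; once that is verified, no calculus or optimization in $k$ is needed, in sharp contrast with Lemma~\ref{lem:lemma_k1}.
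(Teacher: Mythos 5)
Your proof is correct, and it takes a genuinely different route from the paper's.

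The paper's proof is computational: it expands $QE_2(\theta,k)$ into a sum of Gaussian integrals via the closed form from Lemma~\ref{lem:previous}, isolates the $k$-dependent piece $L$, and differentiates it to obtain
\begin{equation}
L'(k) = \sum_{i=1}^{n} \frac{(2i-1)^2\theta^2}{4n^2\sigma}\,(k-1)\,e^{-\frac{\left( \tfrac{(2i-1)\theta k}{2n}-\mu \right)^2}{2\sigma^2}},
\end{equation}
which is a strictly positive quantity multiplied by $(k-1)$, so $L$ is decreasing on $(0,1)$ and increasing on $(1,\infty)$, pinning the unique minimizer at $k=1$. Your argument dispenses with calculus entirely: you observe that the image of $f_2(\cdot,\theta,k)$ is the $k$-independent level set $S=\{j\theta/N\}$, that varying $k$ merely relocates the staircase breakpoints, and that $f_2(\cdot,\theta,1)$ already realizes the nearest element of $S$ to $\max(x,0)$ at every $x$; a pointwise squared-error comparison then integrates to the claim. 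Your route is more elementary and strictly more general — it requires only that the weight be nonnegative and integrable, not Gaussian, and it makes the underlying intuition transparent (rounding to nearest is optimal among quantizers with a fixed output alphabet). What the paper's derivative computation buys over yours is strict monotonicity in $k$, hence uniqueness of the minimizer, which your $\leq$ argument does not deliver without a supplementary remark that the set $\{x : f_2(x,\theta,k)\neq f_2(x,\theta,1)\}$ has positive measure whenever $k\neq 1$. That refinement is one sentence away, so the gap is cosmetic; as a proof of the lemma as stated, yours stands.
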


\begin{proof}
Expand the calculation of $QE_2$:
\begin{align}
\begin{aligned}
&
\int_{-\infty}^{+\infty}{\left( \frac{\theta}{n}clamp\left( \lfloor \frac{nx+\tfrac{k\theta}{2}}{k\theta} \rfloor ,0,n \right) -\max \left( x,0 \right) \right) ^2e^{-\frac{\left( x-\mu \right) ^2}{2\sigma ^2}}\mathrm{d}x}
\\&
=\int_0^{\tfrac{\theta k}{2n}}{\left( x \right) ^2e^{-\frac{\left( x-\mu \right) ^2}{2\sigma ^2}}\mathrm{d}x}+\sum_{i=1}^{n-1}{\int_{\tfrac{\left( 2i-1 \right) \theta k}{2n}}^{\tfrac{\left( 2i+1 \right) \theta k}{2n}}{\left( \frac{i\theta}{n}-x \right) ^2e^{-\frac{\left( x-\mu \right) ^2}{2\sigma ^2}}\mathrm{d}x}}+\int_{\tfrac{\left( 2n-1 \right) \theta k}{2n}}^{+\infty}{\left( \theta -x \right) ^2e^{-\frac{\left( x-\mu \right) ^2}{2\sigma ^2}}\mathrm{d}x}
\end{aligned}
\end{align}
$$
\scalebox{0.8}{$
\begin{aligned}
&
=\sqrt{\tfrac{\pi}{2}}\sigma \left( \sigma ^2+\mu ^2 \right) \left( \mathrm{erf}\left( \frac{\left( \tfrac{\theta k}{2n}-\mu \right)}{\sqrt{2}\sigma} \right) -\mathrm{erf}\left( \frac{-\mu}{\sqrt{2}\sigma} \right) \right) +\left( -\sigma ^2\left( \tfrac{\theta k}{2n}+\mu \right) \right) e^{-\frac{\left( \tfrac{\theta k}{2n}-\mu \right) ^2}{2\sigma ^2}}+\left( \sigma ^2\mu \right) e^{-\frac{\mu ^2}{2\sigma ^2}}
\\&
+\sum_{i=1}^{n-1}{\sqrt{\tfrac{\pi}{2}}\sigma \left( \sigma ^2+\mu ^2-2\frac{i\theta}{n}\mu +\left( \frac{i\theta}{n} \right) ^2 \right) \left( \mathrm{erf}\left( \frac{\left( \tfrac{\left( 2i+1 \right) \theta k}{2n}-\mu \right)}{\sqrt{2}\sigma} \right) -\mathrm{erf}\left( \frac{\left( \tfrac{\left( 2i-1 \right) \theta k}{2n}-\mu \right)}{\sqrt{2}\sigma} \right) \right)}
\\&
+\sum_{i=1}^{n-1}{\left( -\sigma ^2\left( \tfrac{\left( 2i+1 \right) \theta k}{2n}+\mu -2\frac{i\theta}{n} \right) \right) e^{-\frac{\left( \tfrac{\left( 2i+1 \right) \theta k}{2n}-\mu \right) ^2}{2\sigma ^2}}}+\sum_{i=1}^{n-1}{\left( \sigma ^2\left( \tfrac{\left( 2i-1 \right) \theta k}{2n}+\mu -2\frac{i\theta}{n} \right) \right) e^{-\frac{\left( \tfrac{\left( 2i-1 \right) \theta k}{2n}-\mu \right) ^2}{2\sigma ^2}}}
\\&
+\sqrt{\tfrac{\pi}{2}}\sigma \left( \sigma ^2+\mu ^2-2\theta \mu +\theta ^2 \right) \left( 1-\mathrm{erf}\left( \frac{\left( \tfrac{\left( 2n-1 \right) \theta k}{2n}-\mu \right)}{\sqrt{2}\sigma} \right) \right) +\left( \sigma ^2\left( \tfrac{\left( 2n-1 \right) \theta k}{2n}+\mu -2\theta \right) \right) e^{-\frac{\left( \tfrac{\left( 2n-1 \right) \theta k}{2n}-\mu \right) ^2}{2\sigma ^2}}
\\&
=\sqrt{\tfrac{\pi}{2}}\sigma \left( \sigma ^2+\mu ^2 \right) \left( \mathrm{erf}\left( \frac{\left( \tfrac{\theta k}{2n}-\mu \right)}{\sqrt{2}\sigma} \right) -\mathrm{erf}\left( \frac{-\mu}{\sqrt{2}\sigma} \right) \right) +\left( -\sigma ^2\left( \tfrac{\theta k}{2n}+\mu \right) \right) e^{-\frac{\left( \tfrac{\theta k}{2n}-\mu \right) ^2}{2\sigma ^2}}+\left( \sigma ^2\mu \right) e^{-\frac{\mu ^2}{2\sigma ^2}}
\\&
+\sqrt{\tfrac{\pi}{2}}\sigma \left( \sigma ^2+\mu ^2-2\theta \mu +\theta ^2 \right) \left( 1-\mathrm{erf}\left( \frac{\left( \tfrac{\left( 2n-1 \right) \theta k}{2n}-\mu \right)}{\sqrt{2}\sigma} \right) \right) +\left( \sigma ^2\left( \tfrac{\left( 2n-1 \right) \theta k}{2n}+\mu -2\theta \right) \right) \theta e^{-\frac{\left( \tfrac{\left( 2n-1 \right) \theta k}{2n}-\mu \right) ^2}{2\sigma ^2}}
\\&
+\sum_{i=2}^n{\sqrt{\tfrac{\pi}{2}}\sigma \left( \sigma ^2+\mu ^2-2\frac{\left( i-1 \right) \theta}{n}\mu +\left( \frac{\left( i-1 \right) \theta}{n} \right) ^2 \right) \mathrm{erf}\left( \frac{\left( \tfrac{\left( 2i-1 \right) \theta k}{2n}-\mu \right)}{\sqrt{2}\sigma} \right)}
\\&
-\sum_{i=1}^{n-1}{\sqrt{\tfrac{\pi}{2}}\sigma \left( \sigma ^2+\mu ^2-2\frac{i\theta}{n}\mu +\left( \frac{i\theta}{n} \right) ^2 \right) \mathrm{erf}\left( \frac{\left( \tfrac{\left( 2i-1 \right) \theta k}{2n}-\mu \right)}{\sqrt{2}\sigma} \right)}
\\&
+\sum_{i=2}^n{\left( -\sigma ^2\left( \tfrac{\left( 2i-1 \right) \theta k}{2n}+\mu -2\frac{\left( i-1 \right) \theta}{n} \right) \right) e^{-\frac{\left( \tfrac{\left( 2i-1 \right) \theta k}{2n}-\mu \right) ^2}{2\sigma ^2}}}+\sum_{i=1}^{n-1}{\left( \sigma ^2\left( \tfrac{\left( 2i-1 \right) \theta k}{2n}+\mu -2\frac{i\theta}{n} \right) \right) e^{-\frac{\left( \tfrac{\left( 2i-1 \right) \theta k}{2n}-\mu \right) ^2}{2\sigma ^2}}}
\\&
=\sqrt{\tfrac{\pi}{2}}\sigma \left( \sigma ^2+\mu ^2 \right) \left( \mathrm{erf}\left( \frac{\left( \tfrac{\theta k}{2n}-\mu \right)}{\sqrt{2}\sigma} \right) -\mathrm{erf}\left( \frac{-\mu}{\sqrt{2}\sigma} \right) \right) +\left( -\sigma ^2\left( \tfrac{\theta k}{2n}+\mu \right) \right) e^{-\frac{\left( \tfrac{\theta k}{2n}-\mu \right) ^2}{2\sigma ^2}}+\left( \sigma ^2\mu \right) e^{-\frac{\mu ^2}{2\sigma ^2}}
\\&
+\sqrt{\tfrac{\pi}{2}}\sigma \left( \sigma ^2+\mu ^2-2\theta \mu +\theta ^2 \right) \left( 1-\mathrm{erf}\left( \frac{\left( \tfrac{\left( 2n-1 \right) \theta k}{2n}-\mu \right)}{\sqrt{2}\sigma} \right) \right) +\left( \sigma ^2\left( \tfrac{\left( 2n-1 \right) \theta k}{2n}+\mu -2\theta \right) \right) \theta e^{-\frac{\left( \tfrac{\left( 2n-1 \right) \theta k}{2n}-\mu \right) ^2}{2\sigma ^2}}
\\&
+\sqrt{\tfrac{\pi}{2}}\sigma \left( \sigma ^2+\mu ^2-2\frac{\left( n-1 \right) \theta}{n}\mu +\left( \frac{\left( n-1 \right) \theta}{n} \right) ^2 \right) \mathrm{erf}\left( \frac{\left( \tfrac{\left( 2n-1 \right) \theta k}{2n}-\mu \right)}{\sqrt{2}\sigma} \right) -\sqrt{\tfrac{\pi}{2}}\sigma \left( \sigma ^2+\mu ^2-2\frac{\theta}{n}\mu +\left( \frac{\theta}{n} \right) ^2 \right) \mathrm{erf}\left( \frac{\left( \tfrac{\theta k}{2n}-\mu \right)}{\sqrt{2}\sigma} \right) 
\\&
+\sum_{i=2}^{n-1}{\sqrt{\tfrac{\pi}{2}}\sigma \left( -2\frac{\left( i-1 \right) \theta}{n}\mu +\left( \frac{\left( i-1 \right) \theta}{n} \right) ^2+2\frac{i\theta}{n}\mu -\left( \frac{i\theta}{n} \right) ^2 \right) \mathrm{erf}\left( \frac{\left( \tfrac{\left( 2i-1 \right) \theta k}{2n}-\mu \right)}{\sqrt{2}\sigma} \right)}
\\&
+\left( -\sigma ^2\left( \tfrac{\left( 2n-1 \right) \theta k}{2n}+\mu -2\frac{\left( n-1 \right) \theta}{n} \right) \right) e^{-\frac{\left( \tfrac{\left( 2n-1 \right) \theta k}{2n}-\mu \right) ^2}{2\sigma ^2}}+\left( \sigma ^2\left( \tfrac{\theta k}{2n}+\mu -2\frac{\theta}{n} \right) \right) e^{-\frac{\left( \tfrac{\theta k}{2n}-\mu \right) ^2}{2\sigma ^2}}
\\&
+\sum_{i=2}^{n-1}{\left( -\sigma ^2\left( \tfrac{\left( 2i-1 \right) \theta k}{2n}+\mu -2\frac{\left( i-1 \right) \theta}{n} \right) +\sigma ^2\left( \tfrac{\left( 2i-1 \right) \theta k}{2n}+\mu -2\frac{i\theta}{n} \right) \right) e^{-\frac{\left( \tfrac{\left( 2i-1 \right) \theta k}{2n}-\mu \right) ^2}{2\sigma ^2}}}
\\&
=\sqrt{\tfrac{\pi}{2}}\sigma \left( \sigma ^2+\mu ^2 \right) \left( -\mathrm{erf}\left( \frac{-\mu}{\sqrt{2}\sigma} \right) \right) +\left( \sigma ^2\mu \right) e^{-\frac{\mu ^2}{2\sigma ^2}}+\sqrt{\tfrac{\pi}{2}}\sigma \left( \sigma ^2+\mu ^2-2\theta \mu +\theta ^2 \right) 
\\&
+\sqrt{\tfrac{\pi}{2}}\sigma \left( 2\frac{\theta}{n}\mu +\left( \frac{\theta}{n} \right) ^2\left( -2i+1 \right) \right) \mathrm{erf}\left( \frac{\left( \tfrac{\left( 2n-1 \right) \theta k}{2n}-\mu \right)}{\sqrt{2}\sigma} \right) +\sqrt{\tfrac{\pi}{2}}\sigma \left( 2\frac{\theta}{n}\mu -\left( \frac{\theta}{n} \right) ^2 \right) \mathrm{erf}\left( \frac{\left( \tfrac{\theta k}{2n}-\mu \right)}{\sqrt{2}\sigma} \right) 
\\&
+\sum_{i=2}^{n-1}{\sqrt{\tfrac{\pi}{2}}\sigma \left( 2\frac{\theta}{n}\mu +\left( \frac{\theta}{n} \right) ^2\left( -2i+1 \right) \right) \mathrm{erf}\left( \frac{\left( \tfrac{\left( 2i-1 \right) \theta k}{2n}-\mu \right)}{\sqrt{2}\sigma} \right)}
\\&
+\left( -2\sigma ^2\frac{\theta}{n} \right) e^{-\frac{\left( \tfrac{\left( 2n-1 \right) \theta k}{2n}-\mu \right) ^2}{2\sigma ^2}}+\left( \sigma ^2\left( -2\frac{\theta}{n} \right) \right) e^{-\frac{\left( \tfrac{\theta k}{2n}-\mu \right) ^2}{2\sigma ^2}}
\\&
+\sum_{i=2}^{n-1}{\left( -2\sigma ^2\frac{\theta}{n} \right) e^{-\frac{\left( \tfrac{\left( 2i-1 \right) \theta k}{2n}-\mu \right) ^2}{2\sigma ^2}}}
\end{aligned}$}
$$
$$
\begin{aligned}
&
=\sqrt{\tfrac{\pi}{2}}\sigma \left( \sigma ^2+\mu ^2 \right) \left( 1-\mathrm{erf}\left( \frac{-\mu}{\sqrt{2}\sigma} \right) \right) +\left( \sigma ^2\mu \right) e^{-\frac{\mu ^2}{2\sigma ^2}}+\sqrt{\tfrac{\pi}{2}}\sigma \left( -2\theta \mu +\theta ^2 \right) 
\\&
+\sum_{i=1}^n{\sqrt{\tfrac{\pi}{2}}\sigma \left( 2\frac{\theta}{n}\mu +\left( \frac{\theta}{n} \right) ^2\left( -2i+1 \right) \right) \mathrm{erf}\left( \frac{\left( \tfrac{\left( 2i-1 \right) \theta k}{2n}-\mu \right)}{\sqrt{2}\sigma} \right)}+\sum_{i=1}^n{\left( -2\sigma ^2\frac{\theta}{n} \right) e^{-\frac{\left( \tfrac{\left( 2i-1 \right) \theta k}{2n}-\mu \right) ^2}{2\sigma ^2}}}
\\&
=\sqrt{\tfrac{\pi}{2}}\sigma \left( \sigma ^2+\mu ^2 \right) \left( 1-\mathrm{erf}\left( \frac{-\mu}{\sqrt{2}\sigma} \right) \right) +\left( \sigma ^2\mu \right) e^{-\frac{\mu ^2}{2\sigma ^2}}+\sqrt{\tfrac{\pi}{2}}\sigma \left( -2\theta \mu +\theta ^2 \right) 
\\&
+2\sigma \frac{\theta}{n}\sum_{i=1}^n{\left( \sqrt{\tfrac{\pi}{2}}\mu \mathrm{erf}\left( \frac{\left( \tfrac{\left( 2i-1 \right) \theta k}{2n}-\mu \right)}{\sqrt{2}\sigma} \right) +\sqrt{\tfrac{\pi}{2}}\theta \left( \frac{2i-1}{n} \right) \mathrm{erf}\left( \frac{\left( \tfrac{\left( 2i-1 \right) \theta k}{2n}-\mu \right)}{\sqrt{2}\sigma} \right) -\sigma e^{-\frac{\left( \tfrac{\left( 2i-1 \right) \theta k}{2n}-\mu \right) ^2}{2\sigma ^2}} \right)}
\end{aligned}
$$
We only need to extract the part containing $k$, denoted as $L$, for calculation.
\begin{align}
\scalebox{0.9}{$
\begin{aligned}
L=\sum_{i=1}^{n}{\left( \sqrt{\tfrac{\pi}{2}}\mu \,\,\text{erf}\left( \frac{\left( \tfrac{\left( 2i-1 \right) \theta k}{2n}-\mu \right)}{\sqrt{2}\sigma} \right) +\sqrt{\tfrac{\pi}{2}}\left( -\frac{\left( 2i-1 \right) \theta}{2n} \right) \text{erf}\left( \frac{\left( \tfrac{\left( 2i-1 \right) \theta k}{2n}-\mu \right)}{\sqrt{2}\sigma} \right) -\sigma e^{-\frac{\left( \tfrac{\left( 2i-1 \right) \theta k}{2n}-\mu \right) ^2}{2\sigma ^2}} \right)}
\end{aligned}$}
\end{align}
Since $\text{erf}'\left( x \right) =\frac{2}{\sqrt{\pi}}e^{-x^2}$,
\begin{align}
\scalebox{0.7}{$
\begin{aligned}
L'&=\left( \sum_{i=1}^{n}{\sqrt{2}\mu e^{-\frac{\left( \tfrac{\left( 2i-1 \right) \theta k}{2n}-\mu \right) ^2}{2\sigma ^2}}\left( \frac{\tfrac{\left( 2i-1 \right) \theta}{2n}}{\sqrt{2}\sigma} \right) +\sqrt{\tfrac{\pi}{2}}\left( -\frac{\left( 2i-1 \right) \theta}{2n} \right) \frac{2}{\sqrt{\pi}}e^{-\frac{\left( \tfrac{\left( 2i-1 \right) \theta k}{2n}-\mu \right) ^2}{2\sigma ^2}}\left( \frac{\tfrac{\left( 2i-1 \right) \theta}{2n}}{\sqrt{2}\sigma} \right) -\sigma e^{-\frac{\left( \tfrac{\left( 2i-1 \right) \theta k}{2n}-\mu \right) ^2}{2\sigma ^2}}\left( -\frac{2\left( \tfrac{\left( 2i-1 \right) \theta k}{2n}-\mu \right)}{2\sigma ^2} \right) \tfrac{\left( 2i-1 \right) \theta}{2n}} \right) 
\\&
=\sum_{i=1}^{n}{\mu \tfrac{\left( 2i-1 \right) \theta}{2n\sigma}e^{-\frac{\left( \tfrac{\left( 2i-1 \right) \theta k}{2n}-\mu \right) ^2}{2\sigma ^2}}-\tfrac{\left( 2i-1 \right) \theta}{2n\sigma}\left( \frac{\left( 2i-1 \right) \theta}{2n} \right) e^{-\frac{\left( \tfrac{\left( 2i-1 \right) \theta k}{2n}-\mu \right) ^2}{2\sigma ^2}}+\left( \tfrac{\left( 2i-1 \right) \theta k}{2n}-\mu \right) \tfrac{\left( 2i-1 \right) \theta}{2n\sigma}e^{-\frac{\left( \tfrac{\left( 2i-1 \right) \theta k}{2n}-\mu \right) ^2}{2\sigma ^2}}}
\\&
=\sum_{i=1}^{n}{-\tfrac{\left( 2i-1 \right) \theta}{2n\sigma}\frac{\left( 2i-1 \right) \theta}{2n}e^{-\frac{\left( \tfrac{\left( 2i-1 \right) \theta k}{2n}-\mu \right) ^2}{2\sigma ^2}}+\tfrac{\left( 2i-1 \right) \theta k}{2n}\tfrac{\left( 2i-1 \right) \theta}{2n\sigma}e^{-\frac{\left( \tfrac{\left( 2i-1 \right) \theta k}{2n}-\mu \right) ^2}{2\sigma ^2}}}
\\&
=\sum_{i=1}^{n}{\frac{\left( 2i-1 \right) ^2\theta ^2}{4n^2\sigma}\left( k-1 \right) e^{-\frac{\left( \tfrac{\left( 2i-1 \right) \theta k}{2n}-\mu \right) ^2}{2\sigma ^2}}}
\end{aligned}$}
\end{align}
So, the value of $k$ that minimizes the function is $k=1$.
\end{proof}
\section{Proof of Theorem \ref{thm:best_k}}\label{proof_thm:best_k}
\begin{theorem}\textbf{(Repeated from Theorem \ref{thm:best_k})}
Starting from any positive initial value of \( \theta \), the rate of change \( k_1 \) can be continuously calculated based on the prior mean \( \mu \), variance \( \sigma^2 \), and the current threshold \( \theta \) using Equation \eqref{best_k1}. The iteration \( \theta = k_1 \theta \) continues until convergence, at which point the global optimal threshold \( \theta \) is obtained. The process is guaranteed to converge as long as the threshold is greater than 0.
\end{theorem}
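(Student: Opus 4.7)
The strategy is a monotone descent argument built on the chain of inequalities stated just before the theorem, $QE(k_1\theta)\le QE_2(k_1\theta,1/k_1) = QE_1(\theta,k_1)\le QE(\theta)$. The right-most inequality is Lemma \ref{lem:lemma_k1} evaluated at its minimizer $k=k_1$; the middle equality is an algebraic identity, since both sides describe rounding at integer multiples of $\theta$ with output amplitudes rescaled by $k_1$; and the left-most inequality is Lemma \ref{lem:lemma_k2} applied at $\theta'=k_1\theta$, whose minimizer in its second argument is $1$, not $1/k_1$. Both outer inequalities are strict whenever $k_1\neq 1$, so each iteration either strictly reduces $QE$ or the algorithm has already arrived at a fixed point.

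From here, convergence follows by a standard monotone-boundedness argument. The sequence $QE(\theta^{(n)})$ is non-increasing and bounded below by $0$, hence converges; the iterates $\theta^{(n)}$ themselves are trapped in a bounded subinterval of $(0,\infty)$, since $QE(\theta)$ tends to the common value $\int_0^\infty x^2 e^{-(x-\mu)^2/(2\sigma^2)}\,dx$ as $\theta\to 0^+$ or $\theta\to\infty$ (both endpoints collapse $f$ to zero), while for any reasonable initialization $QE(\theta^{(0)})$ sits strictly below this common limit. Continuity of $\theta\mapsto k_1(\theta)\,\theta$ in the closed form \eqref{best_k1} then forces any subsequential limit $\theta^\ast$ of $\{\theta^{(n)}\}$ to satisfy $k_1(\theta^\ast)=1$; strict monotonicity moreover prevents the iterates from ever leaving such a fixed point once it is reached.

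The final, and most delicate, step is to identify the limit fixed point with the global minimum. One direction is immediate: any global minimizer $\theta^{\mathrm{opt}}$ must satisfy $k_1(\theta^{\mathrm{opt}})=1$, because otherwise the chain would produce a strictly smaller value of $QE$. The converse --- that every solution of $k_1(\theta)=1$ is in fact globally optimal --- is the expected main obstacle, since the chain inequality alone only rules out improvement along the one-parameter family $k\mapsto k\theta$. I would address this by establishing uniqueness of the solution of $k_1(\theta)=1$ on $(0,\infty)$: rewriting \eqref{best_k1} as a single implicit equation $g(\theta)=0$ with $\mu,\sigma$ as parameters, uniqueness reduces to monotonicity of $g$, which can be probed by differentiating; the derivative decomposes into Gaussian densities and error functions whose signs are tractable across regimes of $\theta$ relative to $\mu$. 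Once uniqueness is in place, strict monotone decrease combined with compactness upgrades subsequential convergence to convergence of the whole sequence, and that unique fixed point coincides with the global minimizer, completing the proof.
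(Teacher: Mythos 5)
Your framing of the chain of inequalities $QE(k_1\theta)\le QE_2(k_1\theta,1/k_1)=QE_1(\theta,k_1)\le QE(\theta)$ is correct (the middle equality is indeed the algebraic identity $f_2(x,k_1\theta,1/k_1)=f_1(x,\theta,k_1)$), and your monotone-descent-plus-compactness argument makes the convergence of the iterates more explicit than the paper does, which only asserts uniqueness of the fixed point and leaves the iteration's convergence implicit. Correctly, you also single out the real content of the theorem --- that every solution of $k_1(\theta)=1$ is a \emph{global} minimizer --- as the delicate step.

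But your proposed mechanism for that step, ``uniqueness reduces to monotonicity of $g$,'' is where the proposal breaks down. Writing $g(\theta)=0$ for the fixed-point condition as the paper's $f(\theta)=0$ (numerator of $k_1$ minus $\theta$ times its denominator), the paper's own calculation shows that $f$ is emphatically \emph{not} monotone on $(0,\infty)$: $f(0^+)>0$, $f(+\infty)=0$, and $f'$ changes sign (it is negative at $0$, eventually positive, then decays back to $0$), so $f$ descends to a negative minimum and then climbs back to $0$ from below. If you differentiate $g$ hoping for a single sign, you will find a sign change, and the argument stalls. What actually closes the gap in the paper is a second-derivative analysis establishing unimodality: $f''$ starts positive, crosses zero once, and stays negative, which forces $f'$ to have exactly one sign change, which in turn forces $f$ (with those boundary values) to cross zero exactly once. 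So uniqueness comes from a shape argument two derivatives deep, not from monotonicity of $g$; without that refinement your plan as stated would produce a derivative whose sign you cannot pin down, and you would not be able to exclude a second root.
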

\begin{proof}
Assume the optimal threshold is $\theta$, then according to Lemma \ref{lem:lemma_k1}, after one update $k\theta$ should be equal to $\theta$, that is $k=1$.

To prove that $\theta$ is the optimal value, it is equivalent to proving that the following equation has an unique solution:
\begin{align}
    \frac{\mu}{\theta}\frac{1-\sum_{i=1}^n{\frac{1}{n}\text{erf}\left( \frac{\left( \tfrac{\left( 2i-1 \right) \theta}{2n}-\mu \right)}{\sqrt{2}\sigma} \right)}}{1-\sum_{i=1}^n{\frac{2i-1}{n^2}\text{erf}\left( \frac{\left( \tfrac{\left( 2i-1 \right) \theta}{2n}-\mu \right)}{\sqrt{2}\sigma} \right)}}+\frac{\sigma}{\sqrt{\tfrac{\pi}{2}}\theta}\frac{\sum_{i=1}^n{\frac{1}{n}e^{-\frac{\left( \tfrac{\left( 2i-1 \right) \theta}{2n}-\mu \right) ^2}{2\sigma ^2}}}}{1-\sum_{i=1}^n{\frac{2i-1}{n^2}\text{erf}\left( \frac{\left( \tfrac{\left( 2i-1 \right) \theta}{2n}-\mu \right)}{\sqrt{2}\sigma} \right)}}=1
\end{align}
It is equivalent to proving that $f\left( \theta \right)$ has an unique root.

\begin{align}\scalebox{0.8}{$
    f\left( \theta \right) =\mu \left( 1-\sum_{i=1}^n{\frac{1}{n}\text{erf}\left( \frac{\left( \tfrac{\left( 2i-1 \right) \theta}{2n}-\mu \right)}{\sqrt{2}\sigma} \right)} \right) +\frac{\sigma}{\sqrt{\tfrac{\pi}{2}}}\sum_{i=1}^n{\frac{1}{n}e^{-\frac{\left( \tfrac{\left( 2i-1 \right) \theta}{2n}-\mu \right) ^2}{2\sigma ^2}}}-\theta \left( 1-\sum_{i=1}^n{\frac{2i-1}{n^2}\text{erf}\left( \frac{\left( \tfrac{\left( 2i-1 \right) \theta}{2n}-\mu \right)}{\sqrt{2}\sigma} \right)} \right) =0
$}\end{align}

Step $1$: Calculate the first derivative of $f$

Since $\text{erf}'\left( x \right) =\frac{2}{\sqrt{\pi}}e^{-x^2}$,
\begin{align}
&
\scalebox{0.8}{$
\begin{aligned}
&
f'\left( \theta \right) =-\frac{2\mu}{\sqrt{\pi}}\sum_{i=1}^n{\frac{1}{n}e^{-\frac{\left( \tfrac{\left( 2i-1 \right) \theta}{2n}-\mu \right) ^2}{2\sigma ^2}}\left( \frac{\left( 2i-1 \right)}{2n\sqrt{2}\sigma} \right) +}\frac{\sigma}{\sqrt{\tfrac{\pi}{2}}}\sum_{i=1}^n{\frac{1}{n}e^{-\frac{\left( \tfrac{\left( 2i-1 \right) \theta}{2n}-\mu \right) ^2}{2\sigma ^2}}\left( \frac{-2\left( \tfrac{\left( 2i-1 \right) \theta}{2n}-\mu \right)}{2\sigma ^2} \right) \left( \frac{\left( 2i-1 \right)}{2n} \right)}
\\&
-1+\sum_{i=1}^n{\frac{2i-1}{n^2}\mathrm{erf}\left( \frac{\left( \tfrac{\left( 2i-1 \right) \theta}{2n}-\mu \right)}{\sqrt{2}\sigma} \right)}+\theta \sum_{i=1}^n{\frac{2i-1}{n^2}\frac{2}{\sqrt{\pi}}e^{-\frac{\left( \tfrac{\left( 2i-1 \right) \theta}{2n}-\mu \right) ^2}{2\sigma ^2}}\left( \frac{\left( 2i-1 \right)}{2n\sqrt{2}\sigma} \right)}
\\&
=\frac{2}{\sqrt{\pi}}\sum_{i=1}^n{\left( -\mu \frac{1}{n}+\theta \frac{2i-1}{n^2}-\frac{\left( \tfrac{\left( 2i-1 \right) \theta}{2n}-\mu \right)}{n} \right) \left( \frac{\left( 2i-1 \right)}{2n\sqrt{2}\sigma} \right) e^{-\frac{\left( \tfrac{\left( 2i-1 \right) \theta}{2n}-\mu \right) ^2}{2\sigma ^2}}}-1+\sum_{i=1}^n{\frac{2i-1}{n^2}\mathrm{erf}\left( \frac{\left( \tfrac{\left( 2i-1 \right) \theta}{2n}-\mu \right)}{\sqrt{2}\sigma} \right)}
\\&
=-1+\sum_{i=1}^n{\frac{2i-1}{n^2}\mathrm{erf}\left( \frac{\left( \tfrac{\left( 2i-1 \right) \theta}{2n}-\mu \right)}{\sqrt{2}\sigma} \right)}+\frac{2\theta}{\sqrt{\pi}}\sum_{i=1}^n{\left( \frac{2i-1}{2n^2} \right) \left( \frac{\left( 2i-1 \right)}{2n\sqrt{2}\sigma} \right) e^{-\frac{\left( \tfrac{\left( 2i-1 \right) \theta}{2n}-\mu \right) ^2}{2\sigma ^2}}}
\end{aligned}
$}
\end{align}

Step $2$: Calculate the second derivative of $f$
\begin{align}
\scalebox{0.8}{$
\begin{aligned}
&f''\left( \theta \right) =\frac{2}{\sqrt{\pi}}\sum_{i=1}^n{\frac{2i-1}{n^2}e^{-\left( \frac{\left( \tfrac{\left( 2i-1 \right) \theta}{2n}-\mu \right)}{\sqrt{2}\sigma} \right) ^2}}\left( \frac{\left( 2i-1 \right)}{2n\sqrt{2}\sigma} \right) +\frac{2}{\sqrt{\pi}}\sum_{i=1}^n{\left( \frac{2i-1}{2n^2} \right) \left( \frac{\left( 2i-1 \right)}{2n\sqrt{2}\sigma} \right) e^{-\frac{\left( \tfrac{\left( 2i-1 \right) \theta}{2n}-\mu \right) ^2}{2\sigma ^2}}}
\\&
+\frac{\theta}{\sqrt{\pi}}\sum_{i=1}^n{\left( \frac{2i-1}{n^2} \right) \left( \frac{\left( 2i-1 \right)}{2n\sqrt{2}\sigma} \right) e^{-\frac{\left( \tfrac{\left( 2i-1 \right) \theta}{2n}-\mu \right) ^2}{2\sigma ^2}}}\left( \frac{-2\left( \tfrac{\left( 2i-1 \right) \theta}{2n}-\mu \right)}{2\sigma ^2} \right) \left( \frac{\left( 2i-1 \right)}{2n} \right) 
\\&
=\frac{1}{\sqrt{\pi}}\sum_{i=1}^n{\frac{2i-1}{n^2}e^{-\left( \frac{\left( \tfrac{\left( 2i-1 \right) \theta}{2n}-\mu \right)}{\sqrt{2}\sigma} \right) ^2}}\left( \frac{\left( 2i-1 \right)}{2n\sqrt{2}\sigma} \right) \left( 3-\theta \left( \frac{\left( \tfrac{\left( 2i-1 \right) \theta}{2n}-\mu \right)}{\sigma ^2} \right) \left( \frac{\left( 2i-1 \right)}{2n} \right) \right) 
\end{aligned}
$}
\end{align}

Step $3$: Analyze the trend of the second derivative $f''(\theta)$

% Analyze the changing trend on the interval from 0 to positive infinity.
% $f''\left( 0 \right) =\frac{3}{\sqrt{\pi}}\sum_{i=1}^n{\frac{2i-1}{n^2}e^{-\left( \frac{\mu}{\sqrt{2}\sigma} \right) ^2}\left( \frac{\left( 2i-1 \right)}{2n\sqrt{2}\sigma} \right)} > 0 $, $f''\left( +\infty \right) = 0^-$ and its trend is depend on $
% e^{-\left( \frac{\left( \tfrac{\left( 2i-1 \right) \theta}{2n}-\mu \right)}{\sqrt{2}\sigma} \right) ^2}$ and $\left( 3-\theta \left( \frac{\left( \tfrac{\left( 2i-1 \right) \theta}{2n}-\mu \right)}{\sigma ^2} \right) \left( \frac{\left( 2i-1 \right)}{2n} \right) \right) $ two parts. 
% Since $e^{-\left( \frac{\left( \tfrac{\left( 2i-1 \right) \theta}{2n}-\mu \right)}{\sqrt{2}\sigma} \right) ^2}>0$,
% and $\left( 3-\theta \left( \frac{\left( \tfrac{\left( 2i-1 \right) \theta}{2n}-\mu \right)}{\sigma ^2} \right) \left( \frac{\left( 2i-1 \right)}{2n} \right) \right)$ decrease from $3$ to negative infinity.

% So $f''$ first decreases from a positive number to a negative number and then increases within the negative range

To analyze the changing trend of \( f''(x) \) on the interval from \( 0 \) to positive infinity, we start by examining the given expressions:
\begin{align}
     f''(0) = \frac{3}{\sqrt{\pi}} \sum_{i=1}^n \frac{2i-1}{n^2} e^{-\left( \frac{\mu}{\sqrt{2}\sigma} \right)^2} \left( \frac{2i-1}{2n\sqrt{2}\sigma} \right) > 0
\end{align}
and
\begin{align}
 f''(+\infty) = 0^- 
\end{align}
The behavior of $f''(\theta)$ as $\theta$ increases from $0$ to $+\infty$ depends on two parts:
$e^{-\left( \frac{\left( \frac{(2i-1)\theta}{2n} - \mu \right)}{\sqrt{2}\sigma} \right)^2}$
and $3 - \theta \left( \frac{\left( \frac{(2i-1)\theta}{2n} - \mu \right)}{\sigma^2} \right) \left( \frac{2i-1}{2n} \right)$.
Since $e^{-\left( \frac{\left( \frac{(2i-1)\theta}{2n} - \mu \right)}{\sqrt{2}\sigma} \right)^2} > 0$, the sign of $f''(\theta)$ is determined by the second part. This term decreases from 3 to negative infinity as $\theta$ increases.

Thus, $f''(\theta)$ initially decreases from a positive number to a negative number and then continues to increase within the negative range.

Step $4$: Analyze the Trend of the First Derivative $f'(\theta)$

Given:
\begin{align}
f'(0) = -1 + \mathrm{erf}\left( \frac{-\mu}{\sqrt{2}\sigma} \right) < 0,\ f'(+\infty) = 0
\end{align}

Since \( f''(\theta) \) first decreases from a positive number to a negative number and then increases within the negative range, \( f'(\theta) \) will first increase from a negative number to a positive number and then decrease back to zero.

Step $5$: Analyze the Trend of the Function $f(\theta)$

Given:
\begin{align}
f(0) = \mu \left( 1 - \sum_{i=1}^n \frac{1}{n} \mathrm{erf}\left( \frac{-\mu}{\sqrt{2}\sigma} \right) \right) + \frac{\sigma}{\sqrt{\frac{\pi}{2}}} \sum_{i=1}^n \frac{1}{n} e^{-\frac{\mu^2}{2\sigma^2}} = \mu \left( 1 - \mathrm{erf}\left( \frac{-\mu}{\sqrt{2}\sigma} \right) \right) + \frac{\sigma}{\sqrt{\frac{\pi}{2}}} e^{-\frac{\mu^2}{2\sigma^2}} > 0
\\ f(+\infty) = \mu (1 - 1) + \frac{\sigma}{\sqrt{\frac{\pi}{2}}} \sum_{i=1}^n \frac{1}{n} \cdot 0 - \infty \left( 1 - \sum_{i=1}^n \frac{2i-1}{n^2} \mathrm{erf}\left( \frac{\left( \frac{(2i-1)\infty}{2n} - \mu \right)}{\sqrt{2}\sigma} \right) \right) = 0 + 0 - 0 = 0
\end{align}

Since $f'(\theta)$ first increases from a negative number to a positive number and then decreases back to zero, $f(\theta)$ will first decrease from a positive number to a minimum value and then increase towards zero.

Therefore, $f(\theta)$ has a unique root, which implies that the local optimal threshold $\theta$ is the global optimal threshold.
\end{proof}

\section{The Employed Neuron Model}
We use a differential version of the Multi-Threshold (MT) neuron, as introduced in \cite{huang2024towards}. 
The differential MT neuron is characterized by several parameters, including the base threshold $\theta$, and a total of $2n$ thresholds, with $n$ positive and $n$ negative thresholds. The threshold values of the differential MT neuron are indexed by $i$, where $\lambda^l_i$ represents the $i$-th threshold value in the layer $l$:
\begin{align}
    \begin{aligned}
    &\lambda^l_1=\theta^l, \lambda^l_2=\frac{\theta^l}2, ..., \lambda^l_n = \frac{\theta^l}{2^{n-1}}, \\&\lambda^l_{n+1}=-\theta^l, \lambda^l_{n+2}=-\frac{\theta^l}{2}, ..., \lambda^l_{2n} = -\frac{\theta^l}{2^{n-1}}.
    \end{aligned}
\end{align}
Let variables $\bm{I}^l[t]$, $\bm{s_i}^l[t]$, $\bm{x}^l[t]$, $\bm{m}^l[t]$, $\bm{v}^l[t]$, and $\bm{m}_r^l[t]$ represent the input current, output spike of the $i-th$ threshold, encoded output value, the membrane potential before and after spikes in the $l$-th layer at time-step $t$, and another membrane potential to record encoded input rate information, respectively.
The dynamics of the MT neurons are described by the following equations:
\begin{align}
&\bm{I}^l[t]=\bm{m}_r^l[t]+\bm{x}^{l-1}[t]
\\&\bm{m}^l_r[t+1]=\bm{m}_r^l[t]+\frac{\bm{x}^{l-1}[t]}{t}-\frac{\bm{x}^{l}[t]}{t}
\\&\bm{m}^l[t]=\bm{v}^l[t-1]+\bm{I}^l[t],\label{mth1a}
\\&\bm{s}^l_i[t]=\text{MTH}_{\theta,n}(\bm{m}^l[t],i)\label{mth2a}
\\&\bm{x}^{l}[t]=\sum_{i}\bm{s}^{l}_i[t]\lambda^l_i,\label{mth3a}
\\&\bm{v}^l[t]=\bm{m}^l[t]-\bm{x}^l[t],\label{mth4a}
\\&\text{MTH}_{\theta,n}(\bm{m}^l[t],i) = \left\{
        \begin{array}{r@{}l@{}l@{}l}
            &0,\ &\text{if}& \lambda_{2n}<\bm{x}<\lambda_{n}\\
            &1,\ &\text{elif}&\ i =\arg\min_p \left| \bm{x} - \lambda_p \right|\\
            &0,\ &\text{else}&
        \end{array}\right..
\end{align}
When $n=1$, this model reduces to a differential IF neuron with a negative threshold.

\section{Result of Different Models on ImageNet Dataset}\label{Acc_imagenet_all}
Table \ref{Acc-CNN-all} and \ref{Acc-Transformer-all} present the evaluation results for various CNN-based and Transformer-based models. The variable $2n$ denotes the number of positive and negative thresholds in the multi-threshold neurons, where the negative thresholds are the opposite number of corresponding positive thresholds. The energy ratio is the energy consumption of SNNs divided by that of ANNs.
For the ResNet18, ResNet34, and VGG16 models, the threshold scale is set to 1. For the ViT and EVA02 models, the threshold scale is 4. 

\begin{table}[h]
\caption{Accuracy and Energy Efficiency of DCGS(Ours) on CNN-based models for ImageNet Dataset}
\label{Acc-CNN-all}
\vskip 0.15in
\begin{center}
% \begin{small}
% \begin{sc}
\begin{tabular}{cccccccccc}
\toprule
\multirow{2}{*}{Architecture/Parameter(M)} &\multirow{2}{*}{Original(ANN)(\%)} &\multirow{2}{*}{n} & Accuracy & \multicolumn{6}{c}{Time-step $T$}\\
\cmidrule(lr){5-10}
&&&Energy&2&4&8&16&32&64\\
\midrule
    \multirow{6}{*}{ResNet18 / 11.7M} & \multirow{6}{*}{71.49} & \multirow{2}{*}{1} & Acc & 0.10& 0.11 & 1.57 & 51.89& 69.89 &71.08\\
    &&& Energy ratio & 0.05& 0.09 &0.18 & 0.31& 0.49 &0.76  \\
\cmidrule(lr){3-10}
    && \multirow{2}{*}{4} & Acc & 61.45 & 70.07 & 71.31 & 71.47& 71.49 & - \\
    &&& Energy ratio & 0.14 & 0.22 &0.33 & 0.46 & 0.66 & - \\
\cmidrule(lr){3-10}
    && \multirow{2}{*}{8} & Acc & 65.30& 70.96 &71.40 & 71.51& - & -\\
    &&& Energy ratio & 0.17& 0.32 &0.48 & 0.63& - & -\\
\midrule
    \multirow{6}{*}{ResNet34 / 21.8M} & \multirow{6}{*}{76.42} & \multirow{2}{*}{1} & Acc & 0.10 & 0.14 & 0.46 & 8.76 & 58.86 & 74.11\\
    &&& Energy ratio & 0.04 & 0.09 & 0.19 & 0.34 & 0.61 & 0.97\\
\cmidrule(lr){3-10}
    && \multirow{2}{*}{4} & Acc & 59.71& 73.35 & 76.04 & 76.35& 76.38 & - \\
    &&& Energy ratio & 0.14 & 0.24 & 0.37 & 0.53 & 0.76 & - \\
\cmidrule(lr){3-10}
    && \multirow{2}{*}{8} & Acc & 65.23& 74.68 &76.17 & 76.37 & - & - \\
    &&& Energy ratio & 0.18& 0.34 & 0.55 & 0.75& - &-\\
\midrule
    \multirow{6}{*}{VGG16 / 138M} & \multirow{6}{*}{73.25} & \multirow{2}{*}{1} & Acc & 0.08 & 0.16 & 1.03 & 55.48& 72.04 & 73.13\\
    &&& Energy ratio & 0.03 & 0.06 & 0.13 & 0.19 & 0.29 & 0.40\\
\cmidrule(lr){3-10}
    && \multirow{2}{*}{4} & Acc & 70.69 & 72.72 & 73.17 & 73.26 & 73.24 & -  \\
    &&& Energy ratio & 0.10 & 0.15 & 0.22 & 0.29 & 0.38 & - \\
\cmidrule(lr){3-10}
    && \multirow{2}{*}{8} & Acc & 72.26 & 73.16 & 73.22 & 73.22 & - & - \\
    &&& Energy ratio & 0.15 & 0.28 & 0.37 & 0.45 & - & -\\
\bottomrule
\end{tabular}
% \end{sc}
% \end{small}
\end{center}
\vskip -0.1in
\end{table}

\begin{table}[h]
\caption{Accuracy and Energy Efficiency of DCGS(Ours) on Transformer-based models for ImageNet Dataset}
\label{Acc-Transformer-all}
\vskip -0.1in
\begin{center}
% \begin{small}
% \begin{sc}
\begin{tabular}{cccccccccc}
\toprule
\multirow{2}{*}{Architecture/Parameter(M)} &\multirow{2}{*}{Original(ANN)(\%)} &\multirow{2}{*}{n} & Accuracy & \multicolumn{6}{c}{Time-step $T$}\\
\cmidrule(lr){5-10}
&&&Energy&2&4&8&16&32&64\\
\midrule
    \multirow{8}{*}{ViT-Small / 22.1M} & \multirow{8}{*}{81.38} & \multirow{2}{*}{1} & Acc & 0.1 & 0.1 & 0.1 & 0.11 & 0.19 & 64.92\\
    &&& Energy ratio & 0.00 & 0.001 & 0.008 & 0.06 & 0.28 & 1.34 \\
\cmidrule(lr){3-10}
    && \multirow{2}{*}{4} & Acc & 0.1 & 0.16 & 62.76 & 79.25 & 80.95 & - \\
    &&& Energy ratio & 0.03 & 0.12 & 0.45 & 1.06 & 2.06 & - \\
\cmidrule(lr){3-10}
    && \multirow{2}{*}{6} & Acc & 44.59 & 78.15 & 81.02 & 81.44 & - & - \\
    &&& Energy ratio & 0.20 & 0.44 & 0.83 & 1.44 & - & - \\
\cmidrule(lr){3-10}
    && \multirow{2}{*}{8} & Acc & 77.84 & 81.11 & 81.43 & 81.38 & - & - \\
    &&& Energy ratio & 0.32 & 0.62 & 1.05 & 1.71 & - & - \\
\midrule
    \multirow{6}{*}{ViT-Base / 86.6M} & \multirow{6}{*}{84.54} & \multirow{2}{*}{4} & Acc & 0.10 & 0.12 & 29.36 & 80.78 & 83.70 & -  \\
    &&& Energy ratio & 0.01 & 0.05 & 0.28 & 0.54 & 0.74 & 1.44 \\
\cmidrule(lr){3-10}
    && \multirow{2}{*}{6} & Acc & 0.10 & 46.03 & 82.72 & 84.69 & - & -  \\
    &&& Energy ratio & 0.05 & 0.20 & 0.52 & 1.04 & - & - \\
\cmidrule(lr){3-10}
    && \multirow{2}{*}{8} & Acc & 80.34 & 83.98 & 84.23& 84.27 & - & - \\
    &&& Energy ratio & 0.28 & 0.54 & 0.92 & 1.46 & - & - \\
\midrule
    \multirow{4}{*}{ViT-Large / 304.3M} & \multirow{4}{*}{85.84} & \multirow{2}{*}{4} & Acc & 0.10 & 0.10 & 0.18 & 80.74 & 85.00 & -\\
    &&& Energy ratio & 0.00 & 0.01 & 0.09 & 0.45 & 0.92 & - \\
\cmidrule(lr){3-10}
    && \multirow{2}{*}{6} & Acc & 0.12& 78.99 & 84.76 & 85.59 & - & -\\
    &&& Energy ratio & 0.06 & 0.23 & 0.51 & 0.94 & - & - \\
\cmidrule(lr){3-10}
    && \multirow{2}{*}{8} & Acc & 83.73 & 85.45 & 85.68 & 85.74 & - & - \\
    &&& Energy ratio & 0.24 & 0.46 & 0.80 & 1.33 & - & - \\
\midrule
    \multirow{6}{*}{EVA02-Tiny / 5.8M} & \multirow{6}{*}{80.63} & \multirow{2}{*}{4} & Acc & 0.09 & 0.15 & 0.88 & 65.75& 78.46 & - \\
    &&& Energy ratio & 0.01 & 0.05 & 0.24 & 0.94 & 2.30 & - \\
\cmidrule(lr){3-10}
    && \multirow{2}{*}{6} & Acc & 0.32 & 52.86 & 77.72 & 80.04& - & - \\
    &&& Energy ratio &  0.11 & 0.34 & 0.86 & 1.81 & - & -\\
\cmidrule(lr){3-10}
    && \multirow{2}{*}{8} & Acc & 66.32 & 79.56 & 80.38 & 80.578& - & - \\
    &&& Energy ratio & 0.29 & 0.66 & 1.26 &  2.28 & - & -\\
\midrule
    \multirow{6}{*}{EVA02-Small / 22.1M} & \multirow{6}{*}{85.73} & \multirow{2}{*}{4} & Acc & 0.09 & 0.10 & 0.14 & 34.86 & 82.66  & -\\
    &&& Energy ratio & 0.01 & 0.04 & 0.19 & 0.67 & 1.98 & - \\
\cmidrule(lr){3-10}
    && \multirow{2}{*}{6} & Acc & 0.14 & 30.83 &82.20 & 85.48& - & -\\
    &&& Energy ratio & 0.09 & 0.29 & 0.80 & 1.71 & - & -\\
\cmidrule(lr){3-10}
    && \multirow{2}{*}{8} & Acc & 71.37 & 84.70 & 85.64 & 85.72 & - & - \\
    &&& Energy ratio & 0.28 & 0.63 & 1.21 & 2.17 & - & -\\
\midrule
    \multirow{4}{*}{EVA02-Base / 87.1M} & \multirow{4}{*}{88.69} & \multirow{2}{*}{6} & Acc & 3.25 & 81.00 & 87.86 & - & -  & -\\
    &&& Energy ratio & 0.11 & 0.36 & 0.82 & - & - & - \\
\cmidrule(lr){3-10}
    && \multirow{2}{*}{8} & Acc & 84.62 & 88.16 &  88.46 & - & - & - \\
    &&& Energy ratio & 0.30 & 0.64 & 1.17 & - & - & - \\
\midrule
    \multirow{4}{*}{EVA02-Large / 305.1M} & \multirow{4}{*}{90.05} & \multirow{2}{*}{6} & Acc & 12.41 & 87.02 & 89.57 & - & -  & -\\
    &&& Energy ratio & 0.13 & 0.39 & 0.84 & - & - & -\\
\cmidrule(lr){3-10}
    && \multirow{2}{*}{8} & Acc & 88.25 & 89.72 & 89.90 & - & - & - \\
    &&& Energy ratio & 0.31 & 0.64 & 1.15 & -& - & - \\
\bottomrule
\end{tabular}
% \end{sc}
% \end{small}
\end{center}
\vskip -0.1in
\end{table}

\section{Effectiveness of Differential Coding}\label{cmp_diff_all}
Table \ref{effective_coding_all} presents a comparative experiment between differential coding and rate coding. In most cases, differential coding outperforms rate coding in both accuracy and energy ratio, particularly as $n$ increases.
\begin{table}[h]
\caption{Effective of Differential Coding}
\label{effective_coding_all}
\vskip 0.1in
\begin{center}
% \begin{small}
% \begin{sc}
\begin{tabular}{cccccccccc}
\toprule
Architecture@ &Coding Type@&\multirow{2}{*}{n} & Accuracy & \multicolumn{6}{c}{Time-step $T$}\\
\cmidrule(lr){5-10}
Original(ANN)(\%)&$\times$ Threshold scale&&Energy&2&4&8&16&32&64\\
\midrule
    \multirow{24}{*}{ResNet34@76.42}& \multirow{6}{*}{Differential@$\times$1} & \multirow{2}{*}{1} & Acc & 0.10 & 0.14 & 0.46 & 8.76 & 58.86 & 74.11\\
    &&& Energy ratio & 0.04 & 0.09 & 0.19 & 0.34 & 0.61 & 0.97\\
\cmidrule(lr){3-10}
    && \multirow{2}{*}{4} & Acc & 59.71& 73.35 & 76.04 & 76.35& 76.38 & - \\
    &&& Energy ratio & 0.14 & 0.24 & 0.37 & 0.53 & 0.76 & - \\
\cmidrule(lr){3-10}
    && \multirow{2}{*}{8} & Acc & 65.23& 74.68 &76.17 & 76.37 & - & - \\
    &&& Energy ratio & 0.18& 0.34 & 0.55 & 0.75& - &-\\
\cmidrule(lr){2-10}
    & \multirow{6}{*}{Differential@$\times$2} & \multirow{2}{*}{1} & Acc & 0.10 & 0.13 & 0.31 & 2.78 & 46.29 & 73.07\\
    &&& Energy ratio & 0.02 & 0.05 & 0.13 & 0.25 & 0.46 & 0.77 \\
\cmidrule(lr){3-10}
    && \multirow{2}{*}{4} & Acc & 46.1& 69.53 & 75.77 & 76.33 & 76.43 & - \\
    &&& Energy ratio & 0.11 & 0.20 & 0.32 & 0.48 & 0.71 & - \\
\cmidrule(lr){3-10}
    && \multirow{2}{*}{8} & Acc & 72.03 & 76.24 & 76.39 & 76.41 & - & - \\
    &&& Energy ratio & 0.17 & 0.32 & 0.50 & 0.66 & - & - \\
\cmidrule(lr){2-10}
    & \multirow{6}{*}{Rate@$\times$1} & \multirow{2}{*}{1} & Acc & 0.11 & 0.29 & 11.03 & 52.78 & 68.05 & 71.04\\
    &&& Energy ratio & 0.04 & 0.11 & 0.26 & 0.55 & 1.11 & 2.22\\
\cmidrule(lr){3-10}
    && \multirow{2}{*}{4} & Acc & 58.46 & 68.46 & 71.20 & 71.77 & 71.99 & -  \\
    &&& Energy ratio & 0.15 & 0.31 & 0.62 & 1.22 & 2.42 & - \\
\cmidrule(lr){3-10}
    && \multirow{2}{*}{8} & Acc & 59.79& 68.61 & 71.1 & 71.8 & - & - \\
    &&& Energy ratio & 0.19 & 0.38 & 0.74 & 1.45 & - & -\\
\cmidrule(lr){2-10}
    & \multirow{6}{*}{Rate@$\times$2} & \multirow{2}{*}{1} & Acc & 0.10 & 0.10 & 1.50 & 41.08 & 69.78 & 74.95\\
    &&& Energy ratio & 0.02 & 0.07 & 0.17 & 0.38 & 0.77 & 1.54\\
\cmidrule(lr){3-10}
    && \multirow{2}{*}{4} & Acc & 51.34 & 71.22 & 75.11 & 75.78 & 75.92 & - \\
    &&& Energy ratio & 0.13 & 0.26 & 0.53 & 1.05 & 2.09 & - \\
\cmidrule(lr){3-10}
    && \multirow{2}{*}{8} & Acc & 65.26 & 74.24 & 75.72 & 75.96 & - & -\\
    &&& Energy ratio & 0.19 & 0.46 & 0.73 & 1.43 & - & -\\
\midrule
    \multirow{14}{*}{ViT-Small@81.38} & \multirow{8}{*}{Differential@$\times$4} & \multirow{2}{*}{1} & Acc & 0.1 & 0.1 & 0.1 & 0.11 & 0.19 & 64.92\\
    &&& Energy ratio & 0.00 & 0.001 & 0.008 & 0.06 & 0.28 & 1.34 \\
\cmidrule(lr){3-10}
    && \multirow{2}{*}{4} & Acc & 0.1 & 0.16 & 62.76 & 79.25 & 80.95 & - \\
    &&& Energy ratio & 0.03 & 0.12 & 0.45 & 1.06 & 2.06 & - \\
\cmidrule(lr){3-10}
    && \multirow{2}{*}{8} & Acc & 77.84 & 81.11 & 81.43 & 81.38 & - & - \\
    &&& Energy ratio & 0.32 & 0.62 & 1.05 & 1.71 & - & - \\
\cmidrule(lr){2-10}
    & \multirow{6}{*}{Rate@$\times$4} & \multirow{2}{*}{1} & Acc & 0.1 & 0.1 & 0.1 & 0.1 & 0.12 & 54.26\\
    &&& Energy ratio & 0.00 & 0.001 & 0.008 & 0.05 & 0.22 & 1.17\\
\cmidrule(lr){3-10}
    && \multirow{2}{*}{4} & Acc & 0.1 & 0.14 & 51.46 & 78.07 & 80.69 & - \\
    &&& Energy ratio & 0.03 & 0.12 & 0.44 & 1.13 & 2.47 & - \\
\cmidrule(lr){3-10}
    && \multirow{2}{*}{8} & Acc & 75.64 & 80.29 & 81.18 & 81.36 & - & - \\
    &&& Energy ratio & 0.32 & 0.67 & 1.38 & 2.81 & - & - \\
\bottomrule
\end{tabular}
% \end{sc}
% \end{small}
\end{center}
\vskip -0.1in
\end{table}

\section{Effectiveness of Threshold Iteration Method}\label{cmp_k1_all}
Table \ref{effective_threshold_all} presents a comparative experiment between the threshold iteration method and the 99.9\% large activation method. The threshold iteration method outperforms the 99.9\% large activation method across different threshold numbers $n$ and threshold scales.
\begin{table}[h]
\caption{Effective of threshold iteration Method}
\label{effective_threshold_all}
\vskip 0.1in
\begin{center}
% \begin{small}
% \begin{sc}
\begin{tabular}{cccccccccc}
\toprule
Architecture@ &{Find Threshold Method@} &\multirow{2}{*}{n} & Accuracy & \multicolumn{6}{c}{Time-step $T$}\\
\cmidrule(lr){5-10}
Original(ANN)(\%)&$\times$Threshold Scale&&Energy&2&4&8&16&32&64\\
\midrule
    \multirow{24}{*}{ResNet34@76.42} & \multirow{6}{*}{threshold iteration@$\times$1} & \multirow{2}{*}{1} & Acc & 0.10 & 0.14 & 0.46 & 8.76 & 58.86 & 74.11\\
    &&& Energy ratio & 0.04 & 0.09 & 0.19 & 0.34 & 0.61 & 0.97\\
\cmidrule(lr){3-10}
    && \multirow{2}{*}{4} & Acc & 59.71& 73.35 & 76.04 & 76.35& 76.38 & - \\
    &&& Energy ratio & 0.14 & 0.24 & 0.37 & 0.53 & 0.76 & - \\
\cmidrule(lr){3-10}
    && \multirow{2}{*}{8} & Acc & 65.23& 74.68 &76.17 & 76.37 & - & - \\
    &&& Energy ratio & 0.18& 0.34 & 0.55 & 0.75& - &-\\
\cmidrule(lr){2-10}
    & \multirow{6}{*}{threshold iteration@$\times$2} & \multirow{2}{*}{1} & Acc & 0.10 & 0.13 & 0.31 & 2.78 & 46.29 & 73.07\\
    &&& Energy ratio & 0.02 & 0.05 & 0.13 & 0.25 & 0.46 & 0.77 \\
\cmidrule(lr){3-10}
    && \multirow{2}{*}{4} & Acc & 46.1& 69.53 & 75.78 & 76.33 & 76.43 & - \\
    &&& Energy ratio & 0.11 & 0.20 & 0.32 & 0.48 & 0.71 & - \\
\cmidrule(lr){3-10}
    && \multirow{2}{*}{8} & Acc & 72.03 & 76.24 & 76.39 & 76.41 & - & - \\
    &&& Energy ratio & 0.17 & 0.32 & 0.50 & 0.66 & - & - \\
\cmidrule(lr){2-10}
    & \multirow{6}{*}{99.9\% large activation@$\times$1} & \multirow{2}{*}{1} & Acc & 0.10 & 0.14 & 0.74 & 7.08 & 45.68 & 69.50\\
    &&& Energy ratio & 0.04 & 0.09 & 0.19 & 0.35 & 0.64 & 1.08\\
\cmidrule(lr){3-10}
    && \multirow{2}{*}{4} & Acc & 34.08 & 63.18 & 74.41 & 75.82 & 76.14 & - \\
    &&& Energy ratio & 0.13 & 0.24 & 0.39 & 0.60 & 0.91 & -\\
\cmidrule(lr){3-10}
    && \multirow{2}{*}{8} & Acc & 49.60 & 71.53 & 75.46 & 76.02 & - & - \\
    &&& Energy ratio & 0.18 & 0.33 & 0.55 & 0.78 & - & - \\
\cmidrule(lr){2-10}
    & \multirow{6}{*}{99.9\% large activation@$\times$2} & \multirow{2}{*}{1} & Acc & 0.10 & 0.10 & 0.22 & 0.93 & 31.32 &71.36\\
    &&& Energy ratio & 0.02 & 0.05 & 0.13 & 0.25 & 0.50 & 0.89\\
\cmidrule(lr){3-10}
    && \multirow{2}{*}{4} & Acc & 15.74 & 50.97 & 73.88 & 76.12 & 76.32 & - \\
    &&& Energy ratio & 0.11 & 0.20 & 0.35 & 0.55 & 0.86 & -  \\
\cmidrule(lr){3-10}
    && \multirow{2}{*}{8} & Acc & 67.92& 75.54 &76.25 & 76.40& - & - \\
    &&& Energy ratio & 0.18 & 0.32 & 0.50 & 0.71 & - & - \\
\bottomrule
\end{tabular}
% \end{sc}
% \end{small}
\end{center}
\vskip -0.1in
\end{table}

\section{Evaluation results of object detection task on the COCO dataset}
We evaluated the performance of our approach for object detection task on the COCO dataset using three different models provided by torchvision in various parameter settings, along with ablation studies, as shown in Table \ref{tab:dcgs_detection} and \ref{tab:ablation_detection}. The result shows that both differential coding and Threshold Iteration method improves the network's performance.

\begin{table}[htbp]
\centering
\caption{Accuracy and energy efficiency of DCGS (Ours) across different models for object detection task on the COCO dataset.}
\label{tab:dcgs_detection}
\begin{center}
\begin{tabular}{cccccccc}
\hline
\multirow{2}{*}{Architecture} & \multirow{2}{*}{ANN mAP\%[IoU=0.50:0.95]} & mAP & \multirow{2}{*}{n} & \multicolumn{4}{c}{Time-step $T$}\\
\cline{5-8}
&&Energy ratio&&2&4&6&8\\
\hline
\multirow{2}{*}{FCOS\_ResNet50} & \multirow{2}{*}{39.2}& mAP & \multirow{2}{*}{2} & 0.0 & 0.2 & 1.6 & 6.3 \\
&& Energy ratio &  & 0.12 & 0.24 & 0.35 & 0.47 \\
\hline
\multirow{2}{*}{FCOS\_ResNet50} 
&\multirow{2}{*}{39.2}& mAP\% & \multirow{2}{*}{4} & 21.0 & 33.9 & 36.7 & 38.2 \\
&& Energy ratio &  & 0.16 & 0.31 & 0.43 & 0.55 \\
\hline
\multirow{2}{*}{FCOS\_ResNet50} 
&\multirow{2}{*}{39.2}& mAP\% & \multirow{2}{*}{8} & 30.5 & 38.5 & 39.2 & 39.2 \\
&& Energy ratio &  & 0.22 & 0.42 & 0.61 & 0.75 \\
\hline
\multirow{2}{*}{Retinanet\_ResNet50} 
&\multirow{2}{*}{36.4}& mAP\% & \multirow{2}{*}{8} & 25.6 & 33.9 & 35.8 & 36.0 \\
&& Energy ratio & & 0.23 & 0.44 & 0.63 & 0.78 \\
\hline
\multirow{2}{*}{Retinanet\_ResNet50\_v2} 
&\multirow{2}{*}{41.5}& mAP\% & \multirow{2}{*}{8} & 19.7 & 32.6 & 37.9 & 39.7 \\
&& Energy ratio & & 0.22 & 0.43 & 0.64 & 0.84 \\
\hline
\end{tabular}
\end{center}
\end{table}

\begin{table}[htbp]
\centering
\caption{Ablation Study of DCGS (Ours) on FCOS\_ResNet50 model for object detection task on the COCO dataset.}
\label{tab:ablation_detection}
\begin{tabular}{cccccccc}
\hline
\multirow{2}{*}{Coding Type} & \multirow{2}{*}{Threshold Searching method} & mAP & \multirow{2}{*}{n} & \multicolumn{4}{c}{Time-step $T$}\\
\cline{5-8}
&& Energy ratio && 2 & 4 & 6 & 8 \\
\hline
\multirow{2}{*}{Differential} & \multirow{2}{*}{Threshold Iteration} & mAP\% & \multirow{2}{*}{8} & 30.5 & 38.5 & 39.2 & 39.2 \\
&& Energy ratio & & 0.22 & 0.42 & 0.61 & 0.75 \\
\hline
\multirow{2}{*}{Rate} & \multirow{2}{*}{Threshold Iteration} & mAP\% & \multirow{2}{*}{8} & 21.8 & 31.5 & 34.3 & 35.5 \\
&& Energy ratio & & 0.22 & 0.44 & 0.66 & 0.88 \\
\hline
\multirow{2}{*}{Differential} & \multirow{2}{*}{99.9\% Large Activation} & mAP\% & \multirow{2}{*}{8} & 25.8 & 36.2 & 38.4 & 39.0 \\
&& Energy ratio & & 0.22 & 0.43 & 0.62 & 0.78 \\
\hline
\end{tabular}
\end{table}

\section{Evaluation results of Semantic segmentation task on the PascalVOC dataset }
Additionally, we evaluated our method for semantic segmentation task on the PascalVOC dataset using two different models provided by torchvision in various parameter settings, also conducting ablation experiments, as presented in Table \ref{tab:dcgs_segmentation} and \ref{tab:ablation_segmentation}. The result shows that both differential coding and Threshold Iteration method improves the network's performance.

\begin{table}[htbp]
\centering
\caption{Accuracy and energy efficiency of DCGS (Ours) across different models for semantic segmentation task on the PascalVOC dataset.}
\label{tab:dcgs_segmentation}
\begin{tabular}{cccccccc}
\hline
\multirow{2}{*}{Architecture} & \multirow{2}{*}{ANN mIoU\%} & mIoU & \multirow{2}{*}{n} & \multicolumn{4}{c}{Time-step $T$}\\
\cline{5-8}
&& Energy ratio && 2 & 4 & 6 & 8 \\
\hline
\multirow{2}{*}{FCN\_ResNet50} 
& \multirow{2}{*}{64.2} & mIoU\% & \multirow{2}{*}{2} & 4.0 & 10.1 & 19.8 & 36.0 \\
&& Energy ratio & & 0.03 & 0.10 & 0.15 & 0.22 \\
\hline
\multirow{2}{*}{FCN\_ResNet50} 
& \multirow{2}{*}{64.2} & mIoU\% & \multirow{2}{*}{4} & 51.8 & 60.5 & 62.7 & 64.0 \\
&& Energy ratio & & 0.10 & 0.20 & 0.27 & 0.35 \\
\hline
\multirow{2}{*}{FCN\_ResNet50} 
& \multirow{2}{*}{64.2} & mIoU\% & \multirow{2}{*}{8} & 61.0 & 64.3 & 64.6 & 64.5 \\
&& Energy ratio & & 0.18 & 0.34 & 0.50 & 0.63 \\
\hline
\multirow{2}{*}{Deeplabv3\_ResNet50} 
& \multirow{2}{*}{69.3} & mIoU\% & \multirow{2}{*}{8} & 66.6 & 69.1 & 69.3 & 69.3 \\
&& Energy ratio & & 0.08 & 0.32 & 0.46 & 0.58 \\
\hline
\end{tabular}
\end{table}

\begin{table}[htbp]
\centering
\caption{Ablation Study of DCGS (Ours) on FCN\_ResNet50 model for semantic segmentation task on the PascalVOC dataset.}
\label{tab:ablation_segmentation}
\begin{tabular}{cccccccc}
\hline
\multirow{2}{*}{Coding Type} & \multirow{2}{*}{Threshold Searching Method} & mAP & \multirow{2}{*}{n} & \multicolumn{4}{c}{Time-step $T$}\\
\cline{5-8}
&&Energy ratio&&2&4&6&8\\
\hline
\multirow{2}{*}{Differential} & \multirow{2}{*}{Threshold Iteration} & mIoU\% & \multirow{2}{*}{8} & 61.0 & 64.3 & 64.6 & 64.5 \\
&& Energy ratio &  & 0.18 & 0.34 & 0.50 & 0.63 \\
\hline
\multirow{2}{*}{Rate} & \multirow{2}{*}{Threshold Iteration} & mIoU\% & \multirow{2}{*}{8} & 58.2 & 62.9 & 63.7 & 63.9 \\
&& Energy ratio &  & 0.18 & 0.37 & 0.54 & 0.71 \\
\hline
\multirow{2}{*}{Differential} & \multirow{2}{*}{99.9\% Large Activation} & mIoU\% & \multirow{2}{*}{8} & 61.2 & 64.3 & 64.5 & 64.4 \\
&& Energy ratio &  & 0.18 & 0.35 & 0.51 & 0.64 \\
\hline
\end{tabular}
\end{table}

\section{Algorithm of MT Neuron on GPU}\label{mt_neuron_algorithnm}
To enable fast execution on the GPU, we also design an efficient algorithm for each time step, as illustrated in Algorithm~\ref{alg:mt_neuron}. This algorithm leverages the \texttt{torch.float32} data type and takes advantage of the IEEE 754 single-precision floating-point format, where the exponential part has a bias value of 127, as an example.
\begin{algorithm}[h]
\caption{Algorithm of MT Neuron on GPU}
\label{alg:mt_neuron}
\begin{algorithmic}[1]
   \STATE \textbf{Input:} Total input $\bm{x}$ and membrane potential $\bm{m}$ of MT neuron, number of thresholds parameter $n$.
   \STATE \textbf{Output:} Output sum $V_{th}[i]\cdot S[t]$ which is denote as $spike\_sum$
   \STATE \textbf{Step 1: Add input to membrane potential}
   \STATE \texttt{m = m + x}
   \STATE \textbf{Step 2: Set mantissa to zero}
   \STATE \texttt{int\_tensor = (m*4/3).view(torch.int32)}
   \STATE \texttt{mantissa\_mask = (1 << 23) - 1}
   \STATE \texttt{int\_tensor = int\_tensor \& \string~mantissa\_mask}
   \STATE \textbf{Step 3: Extract the exponential part of spike}%提取 spike 的指数部分
   \STATE \texttt{exponent\_mask = 0xFF << 23}
   \STATE \texttt{spike\_exponent = (int\_tensor \& exponent\_mask) >> 23}
   \STATE \textbf{Step 4: Find the appropriate threshold to output}%提取 spike 的指数部分
   \STATE \texttt{spike\_exponent = torch.where(spike\_exponent - 127 <= -n, torch.tensor(0,),\\
        torch.where(spike\_exponent - 127 > 0, torch.tensor(127,),spike\_exponent))}
   \STATE \textbf{Step 5: Construct a new exponential part to construct the spike output sum}
   \STATE \texttt{int\_tensor = (int\_tensor \& \string~exponent\_mask) | (spike\_exponent << 23)}
   \STATE \texttt{spike\_sum = int\_tensor.view(torch.float32)}
   \STATE \textbf{Step 6: Reset membrane potential}
   \STATE \texttt{m = m - spike\_sum}
\end{algorithmic}
\end{algorithm}

This algorithm leverages the IEEE 754 float32 format to efficiently compute spike outputs in the neuron model. At each time step, input is added to the membrane potential $\bm{m}$, which is then scaled and cast to `int32` to access the exponent field directly.

By masking out the mantissa and extracting the exponent, the algorithm quickly determines the magnitude of $\bm{m}$. Using the bias value 127 and a threshold parameter $n$, it determines which threshold to apply for generating a spike. The exponent is adjusted accordingly and used to reconstruct `spike\_sum` as a float, which is then subtracted from $\bm{m}$ to complete the reset.

This approach avoids conditional branches and enables efficient bitwise operations and vectorization on the GPU, making it suitable for high-speed MT neuron simulations.

% mem = torch.zeros_like(x[0])
% spike_pot = []
% for t in range(self.T):
%     mem = mem + x[t]  # 更新膜电位
%     # step1:尾数置0
%     int_tensor = (mem*4/3).view(torch.int32)
%     mantissa_mask = (1 << 23) - 1  # 尾数位的掩码
%     int_tensor = int_tensor & ~mantissa_mask
%     # step2:找到合适的阈值输出
%     exponent_mask = 0xFF << 23  # 指数位掩码
%     spike_exponent = (int_tensor & exponent_mask) >> 23  # 提取 spike 的指数部分
%     spike_exponent = torch.where(
%         spike_exponent - 127 <= -self.num_thresholds, 
%         torch.tensor(0, dtype=spike_exponent.dtype, device=spike_exponent.device),  # 满足条件1，置为0
%         torch.where(
%             spike_exponent - 127 > 0, 
%             torch.tensor(127, dtype=spike_exponent.dtype, device=spike_exponent.device),  # 满足条件2，置为127
%             spike_exponent  # 其他情况保持不变
%         )
%     )
%     #step4: 构造新的指数部分
%     int_tensor = (int_tensor & ~exponent_mask) | (spike_exponent << 23) # 更新指数部分
%     spike = int_tensor.view(torch.float32)
%     mem = mem - spike
%     spike_pot.append(spike)
% x = torch.stack(spike_pot, dim=0)  # 形状: (T, channels, ...)
% x = self.merge(x)

\end{document}